\theoremstyle{plain}
\theoremstyle{plain}
\theoremstyle{plain}
\newtheorem{lem}{\protect\lemmaname}
\theoremstyle{plain}
\newtheorem{thm}{\protect\theoremname}
\theoremstyle{plain}
\theoremstyle{definition}
\theoremstyle{definition}
\theoremstyle{definition}
\newtheorem{rem}{\protect\remarkname}
\providecommand{\claimname}{Claim}
\providecommand{\lemmaname}{Lemma}
\providecommand{\propositionname}{Proposition}
\providecommand{\theoremname}{Theorem}
\providecommand{\corollaryname}{Corollary} 
\providecommand{\definitionname}{Definition}
\providecommand{\assumptionname}{Assumption}
\providecommand{\remarkname}{Remark}
\newcommand{\overbar}[1]{\mkern 1.25mu\overline{\mkern-1.25mu#1\mkern-0.25mu}\mkern 0.25mu}
\DeclareMathOperator*{\argmax}{arg\,max}
\newcommand{\openone}{\mathds{1}}
\newcommand{\Dbar}{\overbar{D}}
\newcommand{\Rbar}{\overbar{R}}
\newcommand{\Runder}{\underline{R}}
\newcommand{\bdelta}{\boldsymbol{\delta}}
\newcommand{\kSE}{k_{\text{SE}}}
\newcommand{\kMat}{k_{\text{Mat\'ern}}}
\newcommand{\ytil}{\tilde{y}}
\newcommand{\Otil}{\tilde{O}}
\newcommand{\vbar}{\overline{v}}
\newcommand{\Hc}{\mathcal{H}}
\newcommand{\xv}{\mathbf{x}}
\newcommand{\yv}{\mathbf{y}}
\newcommand{\zv}{\mathbf{z}}
\newcommand{\Ac}{\mathcal{A}}
\newcommand{\Fc}{\mathcal{F}}
\newcommand{\Ic}{\mathcal{I}}
\newcommand{\EE}{\mathbb{E}}
\newcommand{\PP}{\mathbb{P}}
\newcommand{\RR}{\mathbb{R}}
\newcommand{\ZZ}{\mathbb{Z}}
\newcommand{\Rc}{\mathcal{R}}
\newcommand{\Iv}{\mathbf{I}}
\newcommand{\Kv}{\mathbf{K}}
\newcommand{\bzero}{\boldsymbol{0}}
\newcommand{\bxi}{\boldsymbol{\xi}}
\newcommand{\dist}{{\rm dist}}
\renewcommand{\Otil}{O^{*}}
\icmltitlerunning{On Lower Bounds for Standard and Robust Gaussian Process Bandit Optimization}
\begin{document}

\twocolumn[
\icmltitle{On Lower Bounds for Standard and Robust \\ 
            Gaussian Process Bandit Optimization}




\begin{icmlauthorlist}
    \icmlauthor{Xu Cai}{1}
    \icmlauthor{Jonathan Scarlett}{1,2}
\end{icmlauthorlist}

\icmlaffiliation{1}{Department of Computer Science, National University of Singapore}
\icmlaffiliation{2}{Department of Mathematics \& Institute of Data Science, National University of Singapore}

\icmlcorrespondingauthor{Xu Cai}{caix@u.nus.edu}
\icmlcorrespondingauthor{Jonathan Scarlett}{scarlett@comp.nus.edu.sg}

\icmlkeywords{Bandits; Information theory}

\vskip 0.3in
]


\printAffiliationsAndNotice{}  

\begin{abstract}
    In this paper, we consider algorithm-independent lower bounds for the problem of black-box optimization of functions having a bounded norm is some Reproducing Kernel Hilbert Space (RKHS), which can be viewed as a non-Bayesian Gaussian process bandit problem.  In the standard noisy setting, we provide a novel proof technique for deriving lower bounds on the regret, with benefits including simplicity, versatility, and an improved dependence on the error probability.  In a robust setting in which every sampled point may be perturbed by a suitably-constrained adversary, we provide a novel lower bound for deterministic strategies, demonstrating an inevitable joint dependence of the cumulative regret on the corruption level and the time horizon, in contrast with existing lower bounds that only characterize the individual dependencies. Furthermore, in a distinct robust setting in which the final point is perturbed by an adversary, we strengthen an existing lower bound that only holds for target success probabilities very close to one, by allowing for arbitrary success probabilities above $\frac{2}{3}$.
\end{abstract}
\section{Introduction} \label{sec:intro}

The use of Gaussian process (GP) methods for black-box function optimization has seen significant advances in recent years, with applications including hyperparameter tuning, robotics, molecular design, and many more.  On the theoretical side, a variety of algorithms have been developed with provable regret bounds \cite{Sri09,Bul11,Con13,Wan16,Bog16a,Wan17,Jan20}, and algorithm-independent lower bounds have been given in several settings of interest \cite{Bul11,Sca17a,Sca18a,Cho19,Wan20}.

These theoretical works can be broadly categorized into one of two types: In the {\em Bayesian setting}, one adopts a Gaussian process prior according to some kernel function, whereas in  the {\em non-Bayesian setting}, the function is assumed to lie in some Reproducing Kernel Hilbert Space (RKHS) and be upper bounded in terms of the corresponding RKHS norm.  

In this paper, we focus on the non-Bayesian setting, and seek to broaden the existing understanding of algorithm-independent lower bounds on the regret, which have received significantly less attention than upper bounds.  Our main contributions are briefly summarized as follows:
\begin{itemize}[leftmargin=5ex,itemsep=0ex,topsep=0.2ex]
    \item In the standard noisy GP optimization setting, we provide an alternative proof strategy for the existing lower bounds of \cite{Sca17a}, which we believe to be of significant importance in itself due to the lack of techniques in the literature. We additionally show that our approach strengthens the dependence on the error probability, and give scenarios in which our approach is simpler and/or more versatile.
    \item We provide a novel lower bound for a robust setting in which the sampled points are adversarially corrupted \cite{Bog20}.  Our bound demonstrates that the cumulative regret of any deterministic algorithm must incur a certain {\em joint} dependence on the corruption level and time horizon, strengthening results from \cite{Bog20} stating that certain {\em separate} dependencies are unavoidable.
    \item We provide an improvement on an existing lower bound for a distinct robust setting \cite{Bog18}, in which the {\em final point returned} is perturbed by an adversary.  While the lower bound of \cite{Bog18} shows that a certain number of samples is needed to attain a certain level of regret with probability very close to one, we show that the same number of samples (up to constant factors) is required just to succeed with probability at least $\frac{2}{3}$.
\end{itemize}
The relevant existing results are highlighted throughout the paper, with further details in Appendix \ref{app:existing}.

\section{Problem Setup}

The three problem settings considered throughout the paper are formally described as follows.  We additionally informally summarize the existing lower bounds in each of these settings, with formal statements given in Appendix \ref{app:existing} along with existing upper bounds.  The existing and new bounds are summarized in Table \ref{tbl:standard} below.

\newcounter{auxFootnote}
\newcounter{auxFootnote2}

\begin{table*}
    
    {\centering \bf \underline{SE kernel} \par} 
    \smallskip
    \begin{centering}
        \begin{tabular}{|>{\centering}m{3.6cm}|>{\centering}m{4cm}|>{\centering}m{4cm}|>{\centering}m{3.8cm}|}
            
            \hline 
            & {\small \bf Upper Bound} & {\small \bf Existing Lower Bound } & {\small \bf Our Lower Bound} \tabularnewline
            \hline
            \textbf{Standard}
            
            Cumulative Regret\footnotemark\setcounter{auxFootnote}{\value{footnote}}\textsuperscript{,}\footnotemark\setcounter{auxFootnote2}{\value{footnote}} & $O^{*}\Big(\sqrt{T(\log T)^{2d} \log\frac{1}{\delta}} \Big)$ & $\Omega\Big(\sqrt{T(\log T)^{d/2}}\Big)$ & $\Omega^{*}\Big(\sqrt{T(\log T)^{d/2}\log\frac{1}{\delta}}\Big)$\tabularnewline
            \hline 
            \textbf{Corrupted Samples}
            
            Cumul.~Regret, $\delta = \Theta(1)$ & $O^{*}\Big(\overline{R}_T^{\rm std}+C\sqrt{T(\log T)^{d}}\Big)$ & $\Omega\Big(\underline{R}_T^{\rm std} + C\Big)$  & $\Omega\Big(\underline{R}_T^{\rm std} + C(\log T)^{d/2}\Big)$ \tabularnewline
            \hline 
            \textbf{Corrupted Final Point}
            
            Time to $\epsilon$-optimality\footnotemark[\value{auxFootnote2}] & $O^{*}\Big(\frac{1}{\epsilon^{2}}\big(\log\frac{1}{\epsilon}\big)^{2d}\log\frac{1}{\delta}\Big)$  & $\Omega\Big(\frac{1}{\epsilon^{2}}\big(\log\frac{1}{\epsilon}\big)^{\frac{d}{2}}\Big)$
            
            \textbf{(only for $\delta\le O(\xi^d)$)} & $\Omega\Big(\frac{1}{\epsilon^{2}}\big(\log\frac{1}{\epsilon}\big)^{\frac{d}{2}}\log\frac{1}{\delta}\Big)$\tabularnewline
            \hline 
            
        \end{tabular}
        \par\end{centering}
    
    \medskip
    {\centering \bf \underline{Mat\'ern-$\nu$ kernel} \par} 
    \smallskip
    \begin{centering}
    \begin{tabular}{|>{\centering}m{3.6cm}|>{\centering}m{4.8cm}|>{\centering}m{3.4cm}|>{\centering}m{3.6cm}|}
        
        \hline 
        & {\small \bf Upper Bound  } & {\small \bf Existing Lower Bound } & {\small \bf Our Lower Bound} \tabularnewline
        \hline
        \textbf{Standard}
        
        Cumulative Regret\footnotemark[\value{auxFootnote}] & $O^{*}\Big(T^{\frac{\nu+d}{2\nu+d}} \sqrt{\log\frac{1}{\delta}}\Big) $ & $\Omega\Big(T^{\frac{\nu+d}{2\nu+d}}\Big)$ & $\Omega\Big(T^{\frac{\nu+d}{2\nu+d}}\big(\log\frac{1}{\delta}\big)^{\frac{\nu}{2\nu+d}}\Big)$ \tabularnewline
        \hline 
        \textbf{Corrupted Samples}
        
        Cumul.~Regret, $\delta = \Theta(1)$ & $O^{*}\Big(\overline{R}_T^{\rm std}+CT^{\frac{\nu+d}{2\nu+d} } \Big)$ & $\Omega\Big(\underline{R}_T^{\rm std} + C\Big)$  & $\Omega\Big(\underline{R}_T^{\rm std} + C^{\frac{\nu}{d+\nu}}T^{\frac{d}{d+\nu}}\Big)$ \tabularnewline
        \hline 
        \textbf{Corrupted Final Point}
        
        Time to $\epsilon$-optimality & $O^{*}\Big(\big(\frac{1}{\epsilon}\big)^{\frac{2(2\nu+d)}{2\nu-d}} + \big(\frac{\log\frac{1}{\delta}}{\epsilon^{2}}\big)^{1+\frac{d}{2\nu}}\Big)$
        
        \textbf{(only for $d < 2\nu$)}  & $\Omega\Big(\frac{1}{\epsilon^{2}}\big(\frac{1}{\epsilon}\big)^{d/\nu}\Big)$
        
        \textbf{(only for $\delta\le O(\xi^d)$)} & $\Omega\Big(\frac{1}{\epsilon^{2}}\big(\frac{1}{\epsilon}\big)^{d/\nu}\log\frac{1}{\delta}\Big)$\tabularnewline
        \hline 
        
    \end{tabular}
    \par\end{centering}
    
    \protect\protect\caption{Summary of new and existing regret bounds. $T$ denotes the time horizon, $d$ denotes the dimension, $\xi$ denotes the corruption radius, and $\delta$ denotes the allowed error probability.  In the middle row, $\overline{R}_T^{\rm std}$ and $\underline{R}_T^{\rm std}$ denote upper and lower bounds on the standard cumulative regret.
        The existing upper and lower bounds are from \cite{Sri09,Cho17,Sca17a,Bog18,Bog20}, with the partial exception of the Mat\'ern kernel upper bounds, which are detailed at the end of Appendix \ref{sec:improved_matern}.  The notation $O^*(\cdot)$ and $\Omega^*(\cdot)$ hides dimension-independent $\log T$ factors, as well as $\log\log\frac{1}{\delta}$ factors. \label{tbl:standard}}
\end{table*}

\subsection{Standard Setting} \label{sec:std}

Let $f$ be a function on the compact domain $D = [0,1]^d$; by simple re-scaling, the results that we state readily extend to other rectangular domains.  The smoothness of $f$ is modeled by assuming that $\|f\|_{k} \le B$, where $\|\cdot\|_k$ is the RKHS norm associated with some kernel function $k(\xv,\xv')$ \cite{Ras06}.  The set of all functions satisfying $\|f\|_{k} \le B$ is denoted by $\Fc_k(B)$, and $\xv^*$ denotes an arbitrary maximizer of $f$.

At each round indexed by $t$, the algorithm selects some $\xv_t \in D$, and observes a noisy sample $y_t = f(\xv_t) + z_t$.  Here the noise term is distributed as $N(0,\sigma^2)$, with $\sigma^2 > 0$ and independence between times.

We measure the performance using the following two widespread notions of regret:
\begin{itemize}[leftmargin=5ex,itemsep=0ex,topsep=0.25ex]
    \item {\bf Simple regret:} After $T$ rounds, an additional point $\xv^{(T)}$ is returned, and the simple regret is given by $r(\xv^{(T)}) = f(\xv^*) - f(\xv^{(T)})$.  
    \item {\bf Cumulative regret:} After $T$ rounds, the cumulative regret incurred is $R_T = \sum_{t=1}^T r_t$, where $r_t = f(\xv^*) - f(\xv_t)$.  
\end{itemize}

As with the previous work on noisy lower bounds \cite{Sca17a}, we focus on the squared exponential (SE) and Mat\'ern kernels, defined as follows with length-scale $l > 0$ and smoothness $\nu > 0$ \cite{Ras06}: 
\begin{align}
    \kSE(\xv,\xv') &= \exp \bigg(- \dfrac{r_{\xv,\xv'}^2}{2l^2} \bigg) \label{eq:kSE} \\ 
    \kMat(\xv,\xv') &= \dfrac{2^{1-\nu}}{\Gamma(\nu)} \bigg(\dfrac{\sqrt{2\nu}\,r_{\xv,\xv'}}{l}\bigg)^{\nu}  J_{\nu}\bigg(\dfrac{\sqrt{2 \nu}\,r_{\xv,\xv'}}{l} \bigg), \label{eq:kMat}
    \end{align}
where $r_{\xv,\xv'} = \|\xv - \xv'\|$, and $J_{\nu}$ denotes the modified Bessel function.  

{\bf Existing lower bounds.} The results of \cite{Sca17a} are informally summarized as follows:
\begin{itemize}[leftmargin=5ex,itemsep=0ex,topsep=0.2ex]
    \item Attaining (average or constant-probability) simple regret $\epsilon$ requires the time horizon to satisfy $T = \Omega\big(\frac{1}{\epsilon^{2}}\big(\log\frac{1}{\epsilon}\big)^{d/2}\big)$  for the SE kernel, and $T = \Omega\big(\big(\frac{1}{\epsilon}\big)^{2+d/\nu}\big)$ for the Mat\'ern kernel.
    \item The (average or constant-probability) cumulative regret is lower bounded according to $R_T = \Omega\big(\sqrt{T(\log T)^{d/2}}\big)$ for the SE kernel, and $R_T = \Omega\big(T^{\frac{\nu+d}{2\nu+d}}\big)$ for the Mat\'ern kernel.
\end{itemize}
The SE kernel bounds have near-matching upper bounds \cite{Sri09}, and while standard results yield wider gaps for the Mat\'ern kernel, these have been tightened in recent works; see Appendix \ref{app:existing} for details.

In Sections \ref{sec:ana_std_simple}--\ref{sec:simplified_matern}, we will present novel analysis techniques that can both simplify the proofs and strengthen the dependence on the error probability compared to the lower bounds in \cite{Sca17a}.\footnote{We are not aware of any way to adapt the analysis of \cite{Sca17a} to obtain a high-probability lower bound that grows unbounded as the target error probability approaches zero.}

\subsection{Robust Setting -- Corrupted Samples} \label{sec:setup_corr}

In the robust setting studied in \cite{Bog20}, the optimization goal is similar, but each sampled point is further subject to adversarial noise; for $t=1,\dots,T$:
\begin{itemize}[leftmargin=5ex,itemsep=0ex,topsep=0.25ex]
    \item Based on the previous samples $\lbrace (\xv_i, \ytil_i) \rbrace_{i=1}^{t-1}$, the player selects a distribution $\Phi_t(\cdot)$ over $D$.
  \item Given knowledge of the true function $f$, the previous samples $\lbrace (\xv_i, y_i) \rbrace_{i=1}^{t-1}$, and the player's distribution $\Phi_t(\cdot)$, an adversary selects a function $c_t(\cdot):D \rightarrow [-B_0, B_0]$, where $B_0 > 0$ is constant.
  \item The player draws $\xv_t \in D$ from the distribution $\Phi_t$, and observes the corrupted sample
  \begin{equation}
    \label{eq:corrupted_observation}
    \ytil_t = y_t + c_t(\xv_t),
  \end{equation}
  where $y_t$ is the noisy non-corrupted observation $y_t = f(\xv_t) + z_t$ as in Section \ref{sec:std}.
\end{itemize}
Note that in the special case that $\Phi_t(\cdot)$ is deterministic, the adversary knowing $\Phi_t$ also implies knowledge of $\xv_t$.


For this problem to be meaningful, the adversary must be constrained.  Following \cite{Bog20}, we assume the following constraint for some corruption level $C$:
\begin{equation}
\label{eq:total_corruption}
  \sum_{t=1}^T \max_{\xv \in D} |c_{t}(\xv)| \leq C.
\end{equation}
When $C = 0$, we reduce to the setup of Section \ref{sec:std}.

While both the simple regret and cumulative regret could be considered here, we focus entirely on the latter, as it has been the focus of the related existing works \cite{Bog20,Bog20a,Lyk18,Gup19,Li19b}.  See also \citep[App.~C]{Bog20} for discussion on the use of simple regret in this setting.

{\bf Existing lower bound.} The only lower bound stated in \cite{Bog20} states that $R_T = \Omega(C)$ for any algorithm, whereas the upper bound therein essentially amounts to {\em multiplying} (rather than adding) the uncorrupted regret bound by $C$.  Thus, significant gaps remain in terms of the {\em joint} dependence on $C$ and $T$, which our lower bound in Section \ref{sec:corr_setting} will partially address.

\subsection{Robust Setting -- Corrupted Final Point} \label{sec:setup_adv}

\footnotetext[\value{auxFootnote}]{Analogous results are also given for the standard simple regret (time to $\epsilon$-optimality).}
\footnotetext[\value{auxFootnote2}]{Here we have presented simplified and slightly loosened forms; the refined variants are stated at the end of Appendix \ref{sec:improved_matern}.}
\setcounter{footnote}{2}

Here we detail a different robust setting, previously considered in \cite{Bog18}, in which the samples themselves are only subject to random (non-adversarial) noise, but the {\em final point} returned may be adversarially perturbed.  For a real-valued function $\dist(\xv,\xv')$ and constant $\xi$, we define the set-valued function
\begin{equation}
    \Delta_{\xi}(\xv) = \big\{ \xv' - \xv \,:\, \xv' \in D \;\; \text{and}\;\; \dist(\xv,\xv') \le \xi \big\}
\end{equation}
representing the set of perturbations of $\xv$ such that the newly obtained point $\xv'$ is within a ``distance'' $\xi$ of $\xv$.  

We seek to attain a function value as high as possible following the worst-case perturbation within $\Delta_{\xi}(\cdot)$; in particular, the global robust optimizer is given by
\begin{equation}
    \xv^*_{\xi} \in \argmax_{\xv \in D} \min_{\bdelta \in \Delta_{\xi}(\xv)} f(\xv + \bdelta). \label{eq:eps_stable_input}
\end{equation}
Then, if the algorithm returns  $\xv^{(T)}$, the performance is measured by the {\em $\xi$-regret}:
\begin{equation} 
  \label{eq:eps_regret}
    r_{\xi}(\xv) = \min_{\bdelta \in \Delta_{\xi}(\xv^*_{\xi})} f(\xv^*_{\xi} + \bdelta) - \min_{\bdelta \in \Delta_{\xi}(\xv)} f(\xv + \bdelta).
\end{equation}
We focus our attention on the primary case of interest  in which $\dist(\xv,\xv') = \|\xv - \xv'\|_2$, meaning that achieving low $\xi$-regret amounts to favoring {\em broad peaks} instead of narrow ones, particularly for higher $\xi$.  

While robust cumulative regret notions are possible \cite{Kir20}, we focus on the (simple) $\xi$-regret, as it was the focus of \cite{Bog18} and extensive related works \cite{Ses20,Ngu20,Ber10}.

{\bf Existing lower bound.} A lower bound is proved in \cite{Bog18} for the case of constant $\xi > 0$, with the same scaling as the standard setting.  However, \cite{Bog18} only proves this hardness result for succeeding with probability very close to one; our lower bound in Section \ref{sec:adv_robust_results} overcomes this limitation.

\section{Main Results}

In this section, we formally state the new lower bounds that are summarized in Table \ref{tbl:standard}.

\subsection{Standard Setting} \label{sec:res_std}

Our first contribution is to provide a new approach to establishing lower bounds in the standard setting (Section \ref{sec:std}), with several advantages compared to \cite{Sca17a} discussed in Section \ref{sec:cmp}.

In the standard multi-armed bandit problem with a finite number of independent arms, \citep[Lemma 1]{Kau16} gives a versatile tool for deriving regret bounds based on the data processing inequality for KL divergence (e.g., see \citep[Sec.~6.2]{Pol14}).  The idea is that if two bandit instances must produce different outcomes (e.g., a different final point $\xv^{(T)}$ must be returned) in order to succeed, but their sample distributions are close in KL divergence, then the time horizon must be large.

While \citep[Lemma 1]{Kau16} is only stated for a finite number of arms, the proof technique therein readily yields the variant in Lemma \ref{lem:relating} below for a continuous input space, with the KL divergence quantities defined by maximizing within each of a finite number of regions partitioning the space.  See also \cite{Azi18} for an extension of \citep[Lemma 1]{Kau16} to a different infinite-arm problem.

In the following, we let $\PP_f[\cdot]$ denote probabilities (with respect to the random noise) when the underlying function is $f$, and we let $P_f(y|\xv)$ be the conditional distribution $N(f(\xv),\sigma^2)$ according to the Gaussian noise model.

\begin{lem} \label{lem:relating}
    {\em (Relating Two Instances -- Adapted from \citep[Lemma 1]{Kau16})}
    Fix $f, f' \in \Fc_k(B)$, let $\{\Rc_j\}_{j=1}^M$ be a partition of the input space into $M$ disjoint regions, and let $\Ac$ be any event depending on the history up to some almost-surely finite stopping time $\tau$.\footnote{Following \cite{Kau16}, we state this result for general algorithms that are allowed to choose when to stop.  Our focus in this paper is on the fixed-length setting in which the time horizon is pre-specified, and this setting is recovered by simply setting $\tau = T$ deterministically.}  Then, for $\delta \in \big(0,\frac{1}{3}\big)$, if $\PP_{f}[\Ac] \ge 1-\delta$ and $\PP_{f'}[\Ac] \le \delta$, we have
    \begin{equation}
        \sum_{j=1}^M \EE_{f}[N_j(\tau)] \Dbar^j_{f,f'} \ge \log\frac{1}{2.4 \delta},
    \end{equation}
    where $N_j(\tau)$ is the number of selected points in the $j$-th region up to time $\tau$, and
    \begin{equation}
        \Dbar^j_{f,f'} = \max_{\xv \in \Rc_j} D\big( P_{f}(\cdot | \xv) \,\|\, P_{f'}(\cdot | \xv) \big) \label{eq:Dbar_ff}
    \end{equation}
    is the maximum KL divergence between samples (i.e., noisy function values) from $f$ and $f'$ in the $j$-th region.
\end{lem}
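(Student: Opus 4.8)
The plan is to follow the change-of-measure argument of \citep[Lemma 1]{Kau16}, replacing its per-arm divergence decomposition with a per-sample one valid on a continuous input space. First I would introduce the log-likelihood ratio of the observed history,
\begin{equation}
    L_\tau = \sum_{t=1}^\tau \log \frac{P_f(y_t \mid \xv_t)}{P_{f'}(y_t \mid \xv_t)},
\end{equation}
and note that since both instances run the \emph{same} algorithm, the sampling rule cancels from the ratio, so $L_\tau$ depends only on the observation kernels. Conditioned on $\xv_t$, the sample $y_t$ is drawn from $P_f(\cdot\mid\xv_t)$ under $\PP_f$, whence $\EE_f\big[\log\frac{P_f(y_t\mid\xv_t)}{P_{f'}(y_t\mid\xv_t)}\,\big|\,\xv_t\big] = D\big(P_f(\cdot\mid\xv_t)\,\|\,P_{f'}(\cdot\mid\xv_t)\big)$. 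A Wald-type optional-stopping identity (using that $\tau$ is a.s.\ finite and that the Gaussian divergences are uniformly bounded on the compact domain $D$) then yields
\begin{equation}
    \EE_f[L_\tau] = \EE_f\Big[\textstyle\sum_{t=1}^\tau D\big(P_f(\cdot\mid\xv_t)\,\|\,P_{f'}(\cdot\mid\xv_t)\big)\Big].
\end{equation}

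Second, I would turn this exact identity into the claimed inequality via two reductions. For the upper direction, each $\xv_t$ lies in exactly one region $\Rc_j$, so $D(P_f(\cdot\mid\xv_t)\,\|\,P_{f'}(\cdot\mid\xv_t)) \le \Dbar^j_{f,f'}$ by the definition in \eqref{eq:Dbar_ff}; grouping the $\tau$ samples by region and taking expectations gives $\EE_f[L_\tau] \le \sum_{j=1}^M \EE_f[N_j(\tau)]\,\Dbar^j_{f,f'}$. For the matching lower bound -- the crux inherited from \cite{Kau16} -- I would recognize $\EE_f[L_\tau]$ as the KL divergence between the laws of the history $\Hc_\tau$ under the two instances and apply the data-processing inequality to the binary statistic $\openone[\Ac]$, giving
\begin{equation}
    \EE_f[L_\tau] = D\big(\PP_f^{\Hc_\tau}\,\|\,\PP_{f'}^{\Hc_\tau}\big) \ge d\big(\PP_f[\Ac],\,\PP_{f'}[\Ac]\big),
\end{equation}
where $d(a,b) = a\log\tfrac ab + (1-a)\log\tfrac{1-a}{1-b}$ is the binary relative entropy. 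Since $d(a,b)$ is non-decreasing in $a$ and non-increasing in $b$ on $\{a>b\}$, the hypotheses $\PP_f[\Ac]\ge 1-\delta$ and $\PP_{f'}[\Ac]\le\delta$ reduce this to $d(1-\delta,\delta)$, and the elementary inequality $d(1-\delta,\delta)\ge\log\frac{1}{2.4\delta}$ for $\delta\in(0,\tfrac13)$ finishes the chain.

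The main obstacle I anticipate is the rigorous justification of the stopping-time identity $\EE_f[L_\tau] = D(\PP_f^{\Hc_\tau}\,\|\,\PP_{f'}^{\Hc_\tau})$, together with the accompanying data-processing step, when $\tau$ is a random rather than deterministic stopping time; this is precisely where \citep[Lemma 1]{Kau16} does its careful work, and the only genuinely new ingredient is verifying that the passage from finitely many arms to a continuum of inputs leaves these steps intact. Both concerns are mild: the sampling-rule cancellation and the tower-property computation go through verbatim for measurable observation kernels, and the uniform boundedness of $D(P_f(\cdot\mid\xv)\,\|\,P_{f'}(\cdot\mid\xv)) = \frac{(f(\xv)-f'(\xv))^2}{2\sigma^2}$ over $\xv\in D$ (finite since $f,f'$ are bounded RKHS functions on a compact domain) supplies the integrability required for optional stopping. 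The partition plays a purely bookkeeping role, letting the continuous decomposition be expressed through the finitely many counts $N_j(\tau)$ that the downstream lower-bound arguments will control.
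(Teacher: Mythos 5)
Your proposal is correct and follows essentially the same route the paper intends: the paper gives no standalone proof but explicitly states that the argument of \citep[Lemma 1]{Kau16} (change of measure via the log-likelihood ratio, Wald-type identity at the stopping time, data-processing to the binary relative entropy, and the bound $d(1-\delta,\delta)\ge\log\frac{1}{2.4\delta}$) ``readily yields'' the continuous-space variant once the per-sample divergences are upper-bounded by the per-region maxima $\Dbar^j_{f,f'}$. Your reconstruction supplies exactly that adaptation, including the correct identification of where the only new work lies (the region-wise bookkeeping and uniform boundedness of the Gaussian KL on the compact domain), so nothing further is needed.
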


In Section \ref{sec:proofs}, we will use Lemma \ref{lem:relating} to prove the following lower bounds on the simple regret and cumulative regret, which are similar to those of \cite{Sca17a} but enjoy an improved $\log\frac{1}{\delta}$ dependence on the target error probability $\delta$.  Despite this improvement, we highlight that the key contribution in this part of the paper is the novel lower bounding techniques for GP bandits via Lemma \ref{lem:relating}, rather than the results themselves.  See Section \ref{sec:cmp} for a comparison to the approach of \cite{Sca17a}.

\begin{thm} \label{thm:simple_lb_new}
    \emph{(Simple Regret Lower Bound -- Standard Setting)}
    Fix $\delta \in \big(0,\frac{1}{3}\big)$, $\epsilon \in \big(0,\frac{1}{2}\big)$, $B > 0$, and $T \in \ZZ$.   Suppose there exists an algorithm that, for any $f \in \Fc_k(B)$, achieves average simple regret $r(\xv^{(T)}) \le \epsilon$ with probability at least $1 - \delta$.  Then, if $\frac{\epsilon}{B}$ is sufficiently small, we have the following:
    \begin{enumerate}
        \item For \emph{$k = \kSE$}, it is necessary that
        \begin{equation}
            T = \Omega\bigg( \frac{\sigma^2}{\epsilon^2} \Big(\log\frac{B}{\epsilon}\Big)^{d/2} \log\frac{1}{\delta} \bigg). \label{eq:inst_se_new}
       \end{equation}
        \item For \emph{$k = \kMat$}, it is necessary that
        \begin{equation}
            T = \Omega\bigg( \frac{\sigma^2}{\epsilon^2} \Big(\frac{B}{\epsilon}\Big)^{d/\nu} \log\frac{1}{\delta} \bigg). \label{eq:inst_Mat_new}
       \end{equation}
    \end{enumerate}
    Here, the implied constants may depend on $(d,l,\nu)$.
\end{thm}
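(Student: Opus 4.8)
The plan is to derive the bound from the change-of-measure inequality in Lemma~\ref{lem:relating}, applied to a family of ``single-bump'' functions with the zero function as a common reference. The crucial point is that taking the reference $f_0 \equiv 0$ as the \emph{measure} $f$ in the lemma (rather than the bumped function) lets one sampling budget control all $M$ regions simultaneously, which is exactly what produces the extra packing factor $(\log\frac{B}{\epsilon})^{d/2}$ (resp.\ $(B/\epsilon)^{d/\nu}$) on top of the per-region cost $\frac{\sigma^2}{\epsilon^2}\log\frac1\delta$.

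\emph{Construction.} First I would partition $D = [0,1]^d$ into $M$ congruent sub-cubes $\{\Rc_j\}_{j=1}^M$ and, in each, place a smooth bump $g_m$ of peak height $2\epsilon$ whose support (and whose superlevel set $\{g_m \ge \epsilon\}$) lies strictly inside $\Rc_m$. Setting $f_0 \equiv 0$ and $f_m = g_m$, the difference $f_m - f_0$ is supported in $\Rc_m$. The bump width $w$ is taken as large as the constraint $\|f_m\|_k \le B$ permits. Importing the standard RKHS-norm estimates for these kernels, for $k = \kSE$ this forces $w = \Theta\big(1/\sqrt{\log(B/\epsilon)}\big)$ and hence $M = \Theta(w^{-d}) = \Theta\big((\log\tfrac{B}{\epsilon})^{d/2}\big)$, whereas for $k = \kMat$ it forces $w = \Theta\big((\epsilon/B)^{1/\nu}\big)$ and $M = \Theta\big((B/\epsilon)^{d/\nu}\big)$. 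Verifying these norm bounds is the kernel-specific technical step, and I expect it to be the main obstacle; the rest of the argument is kernel-independent.

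\emph{Reduction and the lemma.} Since $f_m$ attains the value $2\epsilon$ only inside $\Rc_m$ and equals $0$ elsewhere, any returned $\xv^{(T)}$ with $r(\xv^{(T)}) \le \epsilon$ must satisfy $f_m(\xv^{(T)}) \ge \epsilon$, i.e.\ $\xv^{(T)} \in \Rc_m$; the success hypothesis thus gives $\PP_{f_m}[\xv^{(T)} \notin \Rc_m] \le \delta$. On the other hand, because the $\Rc_m$ are disjoint, $\sum_{m=1}^M \PP_{f_0}[\xv^{(T)} \in \Rc_m] \le 1$, so at least $M/2$ regions (call them ``good'') satisfy $\PP_{f_0}[\xv^{(T)} \in \Rc_m] \le 2/M \le \delta$, the last step holding once $\epsilon/B$ is small enough that $M \ge 2/\delta$. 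For each good $m$ I would apply Lemma~\ref{lem:relating} with $f = f_0$, $f' = f_m$, and $\Ac_m = \{\xv^{(T)} \notin \Rc_m\}$, for which $\PP_{f_0}[\Ac_m] \ge 1-\delta$ and $\PP_{f_m}[\Ac_m] \le \delta$. Using the Gaussian identity $D\big(N(a,\sigma^2)\,\|\,N(b,\sigma^2)\big) = \tfrac{(a-b)^2}{2\sigma^2}$ together with $|f_0 - f_m| \le 2\epsilon$, only the $j=m$ term survives the KL sum, with $\Dbar^m_{f_0,f_m} \le 2\epsilon^2/\sigma^2$, giving
\[ \EE_{f_0}[N_m(\tau)] \ \ge\ \frac{\sigma^2}{2\epsilon^2}\,\log\frac{1}{2.4\delta}. \]

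\emph{Summation.} Finally I would sum this over the good regions and use that, under $f_0$, the counts $N_m(\tau)$ refer to disjoint regions, so $\sum_m \EE_{f_0}[N_m(\tau)] \le \EE_{f_0}[\tau] = T$. This yields $T \ge \frac{M}{2} \cdot \frac{\sigma^2}{2\epsilon^2}\log\frac{1}{2.4\delta}$, and since $\log\frac{1}{2.4\delta} = \Omega\big(\log\frac1\delta\big)$ uniformly for $\delta \in \big(0,\tfrac13\big)$, substituting the two values of $M$ produces \eqref{eq:inst_se_new} and \eqref{eq:inst_Mat_new}. The SE and Mat\'ern cases are identical apart from the packing number $M$ and the norm estimate justifying it, so the only real work beyond the change-of-measure step is kernel-specific.
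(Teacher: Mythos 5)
Your Mat\'ern argument is essentially sound, but the SE half of your proposal contains a genuine gap, and it is exactly where you predicted the difficulty would be: the norm estimate for the construction. A smooth bump of height $2\epsilon$ whose support lies strictly inside a sub-cube has \emph{infinite} RKHS norm under $\kSE$, no matter how its width $w$ is chosen -- the SE RKHS contains only analytic functions, and a nonzero compactly supported function cannot be analytic. (The paper states this explicitly in Section \ref{sec:simplified_matern}, crediting the observation to \cite{Sca17a}.) So your claim that $w = \Theta\big(1/\sqrt{\log(B/\epsilon)}\big)$ yields $\|f_m\|_{\kSE} \le B$ is false, and with it the step ``only the $j=m$ term survives the KL sum'' collapses for the SE case. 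The paper's proof instead uses the function class of \cite{Sca17a}, whose members are bump functions in the \emph{frequency} domain: they decay rapidly but are nonzero almost everywhere in space, so every region contributes to the KL sum in Lemma \ref{lem:relating}. The cross-region contributions are then controlled not by disjointness but by Lemma \ref{lem:vbar_sums}(iii), $\sum_{m}(\vbar^j_{m})^2 = O(\epsilon^2)$, together with a swap of the two summations and $\sum_j \EE[N_j] = T$. Your overall skeleton (change of measure, packing, summation) survives this substitution, since your reduction step only needs that the $\epsilon$-optimal points of $f_m$ are confined to $\Rc_m$, which the \cite{Sca17a} class also guarantees; but as written, the SE bound \eqref{eq:inst_se_new} is not proved.

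For the Mat\'ern kernel your construction is legitimate -- it is precisely the bounded-support bump of Bull \cite{Bul11} (Lemma \ref{lem:simpler_function}), and your analysis closely mirrors the paper's simplified Mat\'ern argument in Section \ref{sec:simplified_matern}. One substantive difference: you pair the zero function $f_0$ against $f_m$ and, because the success hypothesis says nothing about where $\xv^{(T)}$ lands under $f_0$, you need the pigeonhole step with $M \ge 2/\delta$, so your ``$\frac{\epsilon}{B}$ sufficiently small'' threshold depends on $\delta$. The paper avoids this by pairing $f = f_m$ against $f' = f_m + 2f_{m'}$: both instances then have disjoint $\epsilon$-optimal regions, so both probability conditions $\PP_f[\Ac]\ge 1-\delta$ and $\PP_{f'}[\Ac]\le\delta$ come directly from the success hypothesis, with no constraint tying $M$ to $\delta$. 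This costs a triangle-inequality bookkeeping step ($B/3$ in place of $B$) but gives a cleaner quantifier structure; you would do well to adopt that pairing, after which the same device also carries the SE case once the \cite{Sca17a} class is swapped in.
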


\begin{thm} \label{thm:cumul_lb_new}
    \emph{(Cumulative Regret Lower Bound -- Standard Setting)}
    Given $T \in \ZZ$, $\delta \in \big(0,\frac{1}{3}\big)$, and $B > 0$, for any algorithm, we must have the following:
    \begin{enumerate}
        \item For $k = \kSE$, there exists $f \in \Fc_k(B)$ such that the following holds with probability at least $\delta$:\footnote{This ``failure'' event occurring with probability $\delta$ implies that the algorithm is unable to attain a $(1-\delta)$-probability of ``success''.}
        \begin{equation}
            R_T = \Omega\Bigg( \sqrt{T\sigma^2 \Big(\log \frac{B^2 T}{\sigma^2 \log\frac{1}{\delta}} \Big)^{d/2} \log\frac{1}{\delta} } \Bigg) \label{eq:cumul_SE_new}
       \end{equation}
       provided that\footnote{As discussed in \cite{Sca17a}, scaling assumptions of this kind are very mild, and are needed to avoid the right-hand side of \eqref{eq:cumul_SE_new} contradicting a trivial $O(BT)$ upper bound.} $\frac{\sigma^2 \log\frac{1}{\delta}}{B^2} = O(T)$ with a sufficiently small implied constant.
        \item For $k = \kMat$, there exists $f \in \Fc_k(B)$ such that the following holds with probability at least $\delta$:
        \begin{equation}
            R_T = \Omega\bigg( B^{\frac{d}{2\nu + d}} T^{\frac{\nu + d}{2\nu + d}} \Big(\sigma^2\log\frac{1}{\delta}\Big)^{\frac{\nu}{2\nu + d}}  \bigg) \label{eq:cumul_Mat_new}
       \end{equation}
       provided that $\frac{\sigma^2 \log\frac{1}{\delta}}{B^2} = O\big(T^{\frac{1}{2+d/\nu}} \big)$ with a sufficiently small implied constant.
    \end{enumerate}
    Here, the implied constants may depend on $(d,l,\nu)$.
\end{thm}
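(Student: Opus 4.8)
The plan is to reuse the hard-instance construction underlying the simple-regret bound (Theorem~\ref{thm:simple_lb_new}, following \cite{Sca17a}) and feed it into Lemma~\ref{lem:relating} in a way that aggregates over all regions, so that the kernel-dependent factor and the $\log\frac1\delta$ factor enter multiplicatively. Concretely, fix a width parameter and partition $D=[0,1]^d$ into $M$ congruent cells $\{\Rc_j\}_{j=1}^M$, each carrying a single ``bump'' of peak height $2\epsilon$ localized near its center, giving functions $f_1,\dots,f_M\in\Fc_k(B)$ together with the baseline $f_0\equiv0\in\Fc_k(B)$. The construction is chosen so that (i) $\|f_j\|_k\le B$, which forces $M=\Theta\big((\log\frac B\epsilon)^{d/2}\big)$ for $\kSE$ and $M=\Theta\big((\frac B\epsilon)^{d/\nu}\big)$ for $\kMat$, exactly as in Theorem~\ref{thm:simple_lb_new}; (ii) each $f_j$ differs from $f_0$ only inside $\Rc_j$; and (iii) the instantaneous regret of any point outside $\Rc_j$ under $f_j$ is at least $\epsilon$ (the bump decays below $\epsilon$ away from its own cell). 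Because the noise is Gaussian with variance $\sigma^2$, property (ii) gives $\Dbar^{j'}_{f_0,f_j}=0$ for $j'\neq j$ and $\Dbar^{j}_{f_0,f_j}\le\frac{(2\epsilon)^2}{2\sigma^2}=\frac{2\epsilon^2}{\sigma^2}$.

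Next I would pass from regret to occupancy counts. Under $f_j$, property (iii) gives $R_T\ge\epsilon\,(T-N_j(T))$, so the event $\mathcal{A}_j=\{N_j(T)\ge T/2\}$ is implied by $R_T\le \epsilon T/2$; hence if the algorithm attained $R_T\le\epsilon T/2$ with probability $\ge1-\delta$ for every $f_j$, we would have $\PP_{f_j}[\mathcal{A}_j]\ge1-\delta$. On the baseline side, $\sum_j N_j(T)=T$ forces $\sum_j\openone[N_j(T)\ge T/2]\le2$, so $\sum_j\PP_{f_0}[\mathcal{A}_j]\le2$ and therefore all but at most $2/\delta$ of the cells satisfy $\PP_{f_0}[\mathcal{A}_j]\le\delta$. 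For each such ``good'' cell I would apply Lemma~\ref{lem:relating} with $f=f_0$, $f'=f_j$, and the event $\mathcal{A}_j^c$ (which has probability $\ge1-\delta$ under $f_0$ and $\le\delta$ under $f_j$); since only the $j$-th region contributes to the KL sum, this yields
\begin{equation}
\EE_{f_0}[N_j(T)]\,\frac{2\epsilon^2}{\sigma^2}\ \ge\ \log\frac{1}{2.4\delta}.
\end{equation}

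Summing this over the $\ge M-2/\delta$ good cells and using $\sum_j\EE_{f_0}[N_j(T)]=T$ gives $T\ge\big(M-\tfrac2\delta\big)\frac{\sigma^2}{2\epsilon^2}\log\frac1{2.4\delta}$, i.e.\ (in the regime $M\ge4/\delta$ guaranteed by the stated scaling assumption) $T=\Omega\big(M\frac{\sigma^2}{\epsilon^2}\log\frac1\delta\big)$. This is a \emph{necessary} condition for $R_T\le\epsilon T/2$ to hold with probability $\ge1-\delta$ on all instances; its contrapositive says that whenever $T$ is smaller than this threshold there is some $f_j$ with $\PP_{f_j}[R_T>\epsilon T/2]>\delta$, which is exactly the ``holds with probability at least $\delta$'' conclusion of the theorem. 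It then remains to choose $\epsilon$ as large as the constraint permits, $\epsilon=\Theta\big(\sigma\sqrt{M\log\frac1\delta/T}\big)$, and read off $R_T\gtrsim\epsilon T\asymp\sqrt{TM\sigma^2\log\frac1\delta}$. Substituting the kernel-specific $M$ and solving the resulting self-consistent equation for the logarithm (for $\kSE$, $\log\frac B\epsilon\asymp\frac12\log\frac{B^2T}{\sigma^2\log\frac1\delta}$, so $M\asymp(\log\frac{B^2T}{\sigma^2\log\frac1\delta})^{d/2}$) reproduces \eqref{eq:cumul_SE_new}, and the analogous polynomial bookkeeping for $M\asymp(\frac B\epsilon)^{d/\nu}$ reproduces \eqref{eq:cumul_Mat_new}.

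I expect the main obstacle to be the aggregation and bookkeeping rather than any single inequality: getting the factor $M$ (hence the kernel-dependent polylog or polynomial term) to appear requires summing the per-cell bound from Lemma~\ref{lem:relating} over a constant fraction of cells, which in turn needs the good-cell count $M-O(1/\delta)$ to be $\Omega(M)$ — this is where the scaling assumption $\frac{\sigma^2\log\frac1\delta}{B^2}=O(T)$ with a small implied constant is used to ensure $M$ is large enough, and where one must verify that the self-consistent solution for $\epsilon$ and $M$ stays in the admissible range $\epsilon\in(0,\tfrac12)$ with $\frac\epsilon B$ small. The conceptual payoff, and the reason for routing everything through Lemma~\ref{lem:relating} rather than an averaging argument, is that the per-cell KL inequality carries the $\log\frac1\delta$ factor directly and yields a genuine probability-$\delta$ statement, which is precisely the improved error-probability dependence claimed over \cite{Sca17a}.
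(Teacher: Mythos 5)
Your overall architecture (Lemma \ref{lem:relating} applied to a multi-cell hard class, conversion of regret bounds into occupancy events, a contrapositive, then a self-consistent choice of $\epsilon$) mirrors the paper's, but your choice of reference instance creates a genuine gap. Because you pair each $f_j$ against the zero function $f_0$, under which the algorithm's behavior is completely unconstrained, you must discard every cell where $\PP_{f_0}[N_j(T)\ge T/2] > \delta$; the counting bound $\sum_j \openone[N_j(T)\ge T/2]\le 2$ plus Markov's inequality only limits the number of such cells to $2/\delta$, so your aggregate inequality reads $T \ge \big(M-\tfrac{2}{\delta}\big)\tfrac{\sigma^2}{2\epsilon^2}\log\tfrac{1}{2.4\delta}$ and is vacuous unless $M \gtrsim 1/\delta$. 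That condition is not implied by the theorem's hypotheses, and it fails exactly where the theorem has content: for the SE kernel the self-consistent solution gives $M = \Theta\big(\big(\log\tfrac{B^2T}{\sigma^2\log(1/\delta)}\big)^{d/2}\big)$, which is only polylogarithmic in $T$, whereas $1/\delta$ may be polynomial or nearly exponential in $T$ while still satisfying $\tfrac{\sigma^2\log\frac{1}{\delta}}{B^2}=O(T)$. So your argument only recovers the claim for $\delta\gtrsim 1/M$, losing precisely the small-$\delta$ regime whose $\log\tfrac{1}{\delta}$ dependence is the advertised improvement over \cite{Sca17a}. The paper avoids this entirely by never leaving the success-constrained class: it takes $f=f_m$ and $f'=f_m+2f_{m'}$ (both of RKHS norm at most $B$), with $\Ac$ the event that at least $T/2$ samples fall in $\Rc_m$; each sample outside $\Rc_m$ has regret at least $\epsilon$ under $f$, and each sample inside $\Rc_m$ has regret at least $\epsilon$ under $f'$, so the success hypothesis applied to both functions directly yields $\PP_f[\Ac]\ge 1-\delta$ and $\PP_{f'}[\Ac]\le\delta$. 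No cells are discarded, \eqref{eq:std_simple_pf2} and hence \eqref{eq:std_simple_pf5} hold for every pair $(m,m')$ and every $\delta\in(0,\tfrac13)$, and the bookkeeping steps you describe then go through unchanged.

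A second, more minor problem: your premise (ii), that each $f_j$ differs from $f_0$ only inside $\Rc_j$, is unachievable for the SE kernel, since a nonzero compactly supported function has infinite SE-RKHS norm; this is exactly why the paper reserves the disjoint-support bump construction for the Mat\'ern kernel (Lemma \ref{lem:simpler_function} and Section \ref{sec:simplified_matern}) and uses the rapidly decaying but everywhere-nonzero class of \cite{Sca17a} for the SE case. This part is repairable within your scheme: with that class the cross-cell divergences $\Dbar^{j'}_{f_0,f_j}$ are nonzero but controlled by part (iii) of Lemma \ref{lem:vbar_sums}, namely $\sum_{j}(\vbar_{j}^{j'})^2 = O(\epsilon^2)$ for each region $j'$, so summing your per-cell inequality over the retained cells and swapping the order of summation recovers $T=\Omega\big(M\sigma^2\epsilon^{-2}\log\tfrac{1}{\delta}\big)$ up to constants. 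The $2/\delta$ loss, however, is structural to the $f_0$-based pairing and needs the paper's pairing (or a strengthened form of Lemma \ref{lem:relating} stated in terms of binary relative entropy) to remove.
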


In Section \ref{sec:simplified_matern}, we show that for the Mat\'ern kernel, the analysis can be simplified even further by using a function class proposed in \cite{Bul11} (which studied the noiseless setting) with bounded support.

\subsection{Robust Setting -- Corrupted Samples} \label{sec:res_corr}

In the general setup studied in \cite{Bog20} and presented in Section \ref{sec:setup_corr}, the player may randomize the choice of action, and the adversary can know the distribution but not the specific action.  However, if the player's actions are deterministic (given the history), then knowing the distribution is equivalent to knowing the specific action.  In this section, we provide a lower bound for such scenarios.  While a lower bound that only holds for deterministic algorithms may seem limited, it is worth noting that the smallest regret upper bound in \cite{Bog20} (see Theorem \ref{thm:known_C_ub} in Appendix \ref{app:existing}) is established using such an algorithm.  More generally, it is important to know to what extent randomization is needed for robustness, so bounds for both deterministic and randomized algorithms are of significant interest.

\smallskip
\begin{thm} \label{thm:corr_lower}
    \emph{(Lower Bound -- Corrupted Samples)}
    In the setting of corrupted samples with a corruption level satisfying $\Theta(1) \le C \le T^{1 - \Omega(1)}$, even in the noiseless setting ($\sigma^2 = 0$), any deterministic algorithm (including those having knowledge of $C$) yields the following with probability one for some $f \in \Fc_k(B)$:
    \begin{itemize}[leftmargin=5ex,itemsep=0ex,topsep=0.2ex]
        \item Under the SE kernel, $R_T = \Omega\big( C (\log T)^{d/2} \big)$;
        \item Under the Mat\'ern-$\nu$ kernel, $R_T = \Omega\big( C^{\frac{\nu}{d+\nu}} T^{\frac{d}{d+\nu}}  \big)$.
    \end{itemize}
\end{thm}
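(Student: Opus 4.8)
\emph{Proof plan.} The plan is to exploit determinism: since the player's actions are a deterministic function of the observation history, an adversary that always reports the \emph{uncorrupted baseline} value $0$ forces a fixed, predictable query sequence $\xv_1,\dots,\xv_T$, regardless of where a hidden ``bump'' is placed. Concretely, I would reuse the RKHS bump constructions underlying Theorem~\ref{thm:cumul_lb_new} (and \cite{Sca17a}, or the bounded-support variant of \cite{Bul11} for the Mat\'ern case): partition $D=[0,1]^d$ into $M$ congruent sub-cubes of side $w=\Theta(M^{-1/d})$, and in a chosen ``good'' cell place a single bump of peak height $\epsilon$ whose support lies inside that cell, with $f\equiv 0$ elsewhere; by the standard norm estimates this lies in $\Fc_k(B)$ as long as $M=\Theta\big((\log(B/\epsilon))^{d/2}\big)$ for $\kSE$ and $M=\Theta\big((B/\epsilon)^{d/\nu}\big)$ for $\kMat$. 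The adversary first simulates the deterministic algorithm (even one tuned to $C$) against all-zero observations to obtain the entire trajectory, then commits the bump to the \emph{least-visited} cell, which by pigeonhole is queried at most $T/M$ times, and thereafter hides the bump by setting $c_t(\xv_t)=-f(\xv_t)$ at every round, so that every reported sample is exactly $0$ and the actual noiseless run reproduces the simulated trajectory.

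Next I would check feasibility of the budget. Since the bump is supported in the good cell and the reported samples are identically $0$, the total corruption spent is $\sum_{t:\,\xv_t\in\text{good cell}} f(\xv_t)\le n_{\text{good}}\,\epsilon \le (T/M)\,\epsilon$, which for the bounded-support construction is an exact accounting with no leakage into other cells. Imposing $(T/M)\epsilon\le C$, i.e.\ $\epsilon\le CM/T$, keeps the adversary within \eqref{eq:total_corruption}, and $\epsilon\le B_0$ holds automatically once $\epsilon$ is small. Because the bump is never revealed, $f(\xv_t)=0$ at every query outside the good cell while $f(\xv^*)=\epsilon$, so $r_t=\epsilon$ on at least $T-n_{\text{good}}\ge T(1-1/M)\ge T/2$ rounds, giving $R_T\ge \tfrac{\epsilon T}{2}$ deterministically.

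Finally I would optimize the height. Taking $\epsilon=\Theta(CM/T)$ (the largest budget-feasible height, where the constraint binds because $M$ decreases as $\epsilon$ grows) yields $R_T=\Omega(CM)$, and it remains to solve the kernel-specific fixed point coupling $M$ and $\epsilon$. For $\kMat$ this is a clean power law: substituting $M=\Theta((B/\epsilon)^{d/\nu})$ into $\epsilon=\Theta(CM/T)$ gives $\epsilon=\Theta\big(C^{\nu/(d+\nu)}B^{d/(d+\nu)}T^{-\nu/(d+\nu)}\big)$ and hence $R_T=\Omega\big(C^{\nu/(d+\nu)}T^{d/(d+\nu)}\big)$. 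For $\kSE$ the coupling is through a logarithm, and the main obstacle is to verify self-consistency: with $\Theta(1)\le C\le T^{1-\Omega(1)}$ one shows $\epsilon=\Theta\big(C(\log T)^{d/2}/T\big)$ lies in a range where $\log(B/\epsilon)=\Theta(\log T)$, so that $M=\Theta((\log T)^{d/2})$ holds at the fixed point and $R_T=\Omega\big(C(\log T)^{d/2}\big)$. The same regime condition guarantees $\epsilon/B$ is small enough for the bump estimates to apply and $M\ge 2$, making the ``least-visited cell'' argument non-vacuous; these verifications, together with absorbing the negligible SE tail-leakage into the constants, are the only genuinely technical points.
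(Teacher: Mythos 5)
Your overall strategy is essentially the paper's own proof (Appendix \ref{app:pf_corr}): exploit determinism so that an always-zeroing adversary pins down a single query trajectory, use a pigeonhole/averaging step to find a bump that is cheap to hide along that trajectory, impose the budget constraint $T\epsilon/M \lesssim C$, conclude $R_T = \Omega(T\epsilon) = \Omega(CM)$, and invert the kernel-specific relation between $M$ and $\epsilon$ (your SE self-consistency check is exactly the paper's inversion step in Appendix \ref{sec:inversion}). For the Mat\'ern kernel your variant is correct and clean: with the disjoint-support bumps of Lemma \ref{lem:simpler_function}, the corruption cost really is confined to the chosen cell, so ``least-visited cell by query count'' does bound the cost by $(T/M)\epsilon$.

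However, there is a genuine gap in your SE-kernel accounting. The SE construction of Section \ref{sec:prelim} only guarantees $g(\xv) < \epsilon$ once $\|\xv\|_{\infty} \ge w/2$; the super-polynomial decay of $h$ kicks in only over \emph{several} grid cells, so the bump's value inside a cell \emph{adjacent} to its own can be $\Theta(\epsilon)$, not negligible. Consequently, ``least-visited by count'' does not control the hiding cost: if the deterministic all-zero trajectory concentrates all $T$ queries at a point near a cell boundary, every other cell is tied at zero visits, and placing the bump in the adjacent cell incurs cost $\Theta(T\epsilon) = \Theta(CM) \gg C$, blowing the budget by a factor of $M$. This cannot be ``absorbed into the constants'': the leakage is only small \emph{in aggregate over cells}, not per visit. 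The fix is to change the selection criterion from visit count to the actual hiding cost: choose the cell $m$ minimizing $\sum_{t=1}^T |f_m(\xv_t)|$, and bound the minimum by the average over $m$, swapping the order of summation and invoking part (ii) of Lemma \ref{lem:vbar_sums} ($\sum_{m} \vbar_m^j = O(\epsilon)$ for every $j$) to get cost $O(T\epsilon/M)$ for some cell (indeed, for a constant fraction of cells by Markov's inequality). This is precisely the paper's Lemma \ref{lem:corr_budget}. With that single change, your argument goes through for both kernels and yields the stated bounds.
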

\smallskip


We provide a proof outline in Section \ref{sec:corr_setting}, and the full details in Appendix \ref{app:pf_corr}.  We note that the assumption $\Theta(1) \le C \le T^{1 - \Omega(1)}$ primarily rules out the case $C = \Theta(T)$ in which the adversary can corrupt every point by a constant amount. This assumption also ensures that the bound $R_T = \Omega\big( C^{\frac{\nu}{d+\nu}} T^{\frac{d}{d+\nu}}  \big)$ is stronger than the bound $R_T = \Omega(C)$ from \cite{Bog20}. Note also that any lower bound for the standard setting applies here, since the adversary can choose not to corrupt.

Theorem \ref{thm:corr_lower} addresses a question posed in \cite{Bog20} on the joint dependence of the cumulative regret on $C$ and $T$.  The upper bounds established therein (one of which we replicate in Theorem \ref{thm:known_C_ub} in Appendix \ref{app:existing}) are of the form $O( C \Rbar^{(0)} )$, where $\Rbar^{(0)}$ is a standard (non-corrupted) regret bound, whereas analogous results from the multi-armed bandit literature \cite{Gup19} suggest that $\tilde{O}( \Rbar^{(0)} + C)$ may be possible, where the $\tilde{O}(\cdot)$ notation hides dimension-independent logarithmic factors.

Theorem \ref{thm:corr_lower} shows that, at least for deterministic algorithms, such a level of improvement is impossible in the RKHS setting.  On the other hand, further gaps remain between the lower bounds in Theorem \ref{thm:corr_lower} and the $O( C \Rbar^{(0)} )$ upper bounds of \cite{Bog20} (e.g., for the SE kernel, the latter introduces an $\tilde{O}(C \sqrt{T (\log  T)^{2d}})$ term, whereas Theorem \ref{thm:corr_lower} gives an $\Omega\big( C (\log T)^{d/2} \big)$ lower bound).  Recent results for the linear bandit setting \cite{Bog20a} suggest that the looseness here may be in the upper bound; this is left for future work.

\subsection{Robust Setting -- Corrupted Final Point} \label{sec:res_adv}

Here we provide improved variant of the lower bound in \cite{Bog18} (replicated in Theorem \ref{thm:lower_robust} in Appendix \ref{app:existing}) for the adversarially robust setting with a corrupted final point, described in Section \ref{sec:setup_adv}.

\smallskip
\begin{thm} \label{thm:lower_robust_new}
    \emph{(Improved Lower Bound -- Corrupted Final Point)}
    Fix $\xi \in \big(0,\frac{1}{2}\big)$, $\epsilon \in \big(0,\frac{1}{2}\big)$, $B > 0$, and $T \in \ZZ$, and set $\dist(\xv,\xv') = \|\xv - \xv'\|_2$.   Suppose that there exists an algorithm that, for any $f \in \Fc_k(B)$, reports $\xv^{(T)}$ achieving $\xi$-regret $r_{\xi}(\xv^{(T)}) \le \epsilon$ with probability at least $1 - \delta$.  Then, provided that $\frac{\epsilon}{B}$ is sufficiently small, we have the following:
    \begin{enumerate}[leftmargin=5ex,itemsep=0ex,topsep=0.2ex]
        \item For \emph{$k = \kSE$}, it is necessary that
        $T = \Omega\big( \frac{\sigma^2}{\epsilon^2} \big(\log\frac{B}{\epsilon}\big)^{d/2} \log\frac{1}{\delta} \big)$. 
        \item For \emph{$k = \kMat$}, it is necessary that
        $T = \Omega\big( \frac{\sigma^2}{\epsilon^2} \big(\frac{B}{\epsilon}\big)^{d/\nu} \log\frac{1}{\delta} \big).$ 
    \end{enumerate}
    Here, the implied constants may depend on $(\xi,d,l,\nu)$.
\end{thm}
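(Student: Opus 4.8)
The plan is to reduce the corrupted-final-point problem to a standard simple-regret problem for a transformed objective and then deploy the KL/change-of-measure machinery of Lemma \ref{lem:relating}, exactly as in Theorem \ref{thm:simple_lb_new}. The key structural observation is that here the samples $y_t$ are only subject to Gaussian noise (the adversary touches only the reported point $\xv^{(T)}$), so the information available to the algorithm is identical to the standard setting; only the success criterion changes. Writing $g_f(\xv) = \min_{\bdelta \in \Delta_\xi(\xv)} f(\xv + \bdelta)$ for the $\xi$-robustified objective, the $\xi$-regret of $f$ is precisely the ordinary simple regret of $g_f$, since $r_\xi(\xv) = \max_{\xv'} g_f(\xv') - g_f(\xv)$. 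Thus $r_\xi(\xv^{(T)}) \le \epsilon$ means reporting a near-maximizer of $g_f$, and it suffices to build a family $\{f_m\}_{m=1}^M \subset \Fc_k(B)$ whose robustifications have well-separated maxima, while the sample laws — governed by $f_m$ and not $g_{f_m}$ — remain close in KL divergence.

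The construction is where this departs from Theorem \ref{thm:simple_lb_new}, because $\xi$-erosion destroys any feature of width below $\xi$, so the needle functions of \cite{Sca17a} cannot be used directly once $\epsilon$ is small. The correct device is to exploit erosion in the other direction: if $f_m$ is a flat-topped plateau of height $\Theta(\epsilon)$ supported on a ball of radius $\xi + \rho$ around a center $\xv_m$, with transition width $\tau$, then $g_{f_m} = \min_{\|\bdelta\| \le \xi} f_m(\cdot + \bdelta)$ equals $\Theta(\epsilon)$ only on the eroded ball $B(\xv_m, \rho)$ and decays over width $\Theta(\tau)$. Hence a wide plateau for $f_m$ produces a narrow bump for $g_{f_m}$. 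Crucially, the packing is then limited not by $\xi$ but by the sharpest transition the RKHS norm allows — $\tau \sim (\log \frac{B}{\epsilon})^{-1/2}$ for the SE kernel and $\tau \sim (\epsilon/B)^{1/\nu}$ for Mat\'ern, exactly as in the standard analysis — so spacing the centers $\xv_m$ by $\Theta(\tau)$ gives $M = \Theta((\log\frac{B}{\epsilon})^{d/2})$ and $M = \Theta((B/\epsilon)^{d/\nu})$ distinguishable robust maxima, with $\xi$ surviving only in the implied constants. Since the plateaus have height $\Theta(\epsilon)$, the per-region KL divergence $\Dbar^j_{f,f'}$ is $\Theta(\epsilon^2/\sigma^2)$ wherever two instances differ.

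I would then invoke Lemma \ref{lem:relating} to convert identifiability into a sample lower bound. With $\Ac_m = \{\xv^{(T)} \in \Rc_m\}$ for the success region $\Rc_m \approx B(\xv_m, \tau)$, a point achieving $\xi$-regret $\le \epsilon$ under $f_m$ must lie in $\Rc_m$, so $\PP_{f_m}[\Ac_m] \ge 1 - \delta$, while under a suitable reference $f_0$ one has $\PP_{f_0}[\Ac_m] \le \delta$; the lemma then forces $\Omega(\frac{\sigma^2}{\epsilon^2}\log\frac1\delta)$ samples in the region where that instance differs from the reference. Aggregating these over the $M$ instances, together with the fact that the total sample budget is $T$, should yield $T = \Omega(M \frac{\sigma^2}{\epsilon^2}\log\frac1\delta)$, which is the claimed bound after substituting the kernel-dependent $M$. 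It is this data-processing route — rather than the argument used in \cite{Bog18} — that produces the extra $\log\frac1\delta$ factor and, more importantly, remains valid for every $\delta \in (0,\frac13)$ instead of only $\delta \le O(\xi^d)$.

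The step I expect to be the main obstacle is precisely this aggregation, because the plateaus that make robust optimization nontrivial are wide ($\sim \xi$) and, at center spacing $\tau \ll \xi$, overlap heavily: two instances $f_m, f_{m'}$ differ not on a single isolated cell but on a shell of thickness $\Theta(\tau)$ near the boundary spheres of radius $\sim \xi$. A naive single-reference count would then charge each sample to $\sim (\xi/\tau)^{d-1}$ different instances, deflating the bound from the desired $\Theta((\log\frac{B}{\epsilon})^{d/2})$ packing down to a mere $\Theta(\xi^{-d})$. The real work is therefore to choose the reference function, the partition $\{\Rc_j\}$, and the assignment of ``informative'' samples to instances so that samples useful for separating one robust optimum are genuinely uninformative for far-away ones, thereby recovering the full packing without the overlap penalty; this is the same tension that confined \cite{Bog18} to $\delta \le O(\xi^d)$, and routing it through Lemma \ref{lem:relating} is what lets us cover all $\delta < \frac13$.
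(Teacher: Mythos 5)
Your reduction (samples governed by $f$, success criterion governed by the eroded objective $g_f$) and your use of Lemma \ref{lem:relating} do match the paper's high-level strategy, and your diagnosis of the obstacle in your final paragraph is exactly right --- but you never resolve it, and the construction you propose cannot be repaired by cleverer accounting. Packed plateaus of radius $\approx \xi$ with centers spaced $\tau \ll \xi$ apart necessarily have pairwise differences supported on thin shells at radius $\approx \xi$: any single sample point then lies in the symmetric difference of $f_m$ and $f_{m'}$ for order-$M$ many pairs, the per-point sum $\sum_{m'} (f_m(\xv)-f_{m'}(\xv))^2$ is $\Theta(M\epsilon^2)$ rather than the $O(\epsilon^2)$ that makes the aggregation work in the standard proof (cf.\ part (iii) of Lemma \ref{lem:vbar_sums}), and the resulting bound loses the factor $M$ entirely. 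Making the plateau supports disjoint does not help either, since each support has radius exceeding $\xi = \Theta(1)$ and hence only $M = O(1)$ fit in $[0,1]^d$. So the problem is the function class itself, not the choice of reference, partition, or sample-to-instance assignment.

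The paper's proof (Section \ref{sec:adv_robust_setting}, Appendix \ref{app:pf_adv}, Figure \ref{fig:func_class_new_robust}) uses a structurally different class that sidesteps the overlap altogether. All $M+1$ functions share a single \emph{fixed} plateau: the common part equals $\approx -2\epsilon$ everywhere except $\approx 0$ on a ball of diameter roughly $3\xi$ at the center of the domain, whose $\xi$-erosion (the ``plain'' region) has diameter $\xi$. The instances $f_1,\dotsc,f_M$ are obtained from the reference $f_0$ by subtracting one narrow spike (width $w$, depth $4\epsilon$) at one of $M$ pairwise-disjoint locations inside the plain region. Because that region has diameter $\xi$, a spike placed anywhere inside it drags the robust value of \emph{every} plain point down to $-4\epsilon$: success under $f_0$ forces $\xv^{(T)}$ into the plain region, while success under any $f_m$ forces it out. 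This is the key mechanism your proposal lacks --- a perturbation of $f$ that is localized to a width-$w$ spike produces a \emph{global} flip of the success set, precisely because the erosion operator spreads a narrow negative feature over a $\xi$-ball. The KL divergence between $f_0$ and $f_m$ is then supported on spike $m$ alone, the spikes are disjoint, and summing Lemma \ref{lem:relating} over $m$ against the common reference $f_0$ gives $T \ge \frac{\sigma^2 M}{32\epsilon^2}\log\frac{1}{2.4\delta}$, with $M = \Theta\big((\xi/w)^d\big)$ recovering the scalings \eqref{eq:M_se} and \eqref{eq:M_matern} up to $\xi$-dependent constants. In short: the right device is not ``erode a shifted positive bump into a narrow one'' but ``let a narrow negative spike poison an entire $\xi$-ball under erosion''; without that idea the proof does not go through.
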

\smallskip

Compared to the existing lower bound in \cite{Bog18} (Theorem \ref{thm:lower_robust} in Appendix \ref{app:existing}), we have removed the restrictive requirement that $\delta$ is sufficiently small (see Appendix \ref{sec:adv_robust_results} for further discussion), giving the same scaling laws even when the algorithm is only required to succeed with a small probability such as $0.01$ (i.e, $\delta = 0.99$).  In addition, for small $\delta$, we attain a $\log\frac{1}{\delta}$ factor improvement similar to Theorem \ref{thm:simple_lb_new}.

\section{Mathematical Analysis and Proofs} \label{sec:proofs}

\subsection{Preliminaries} \label{sec:prelim}

\begin{figure}
    \begin{centering}
        \includegraphics[width=0.4\textwidth]{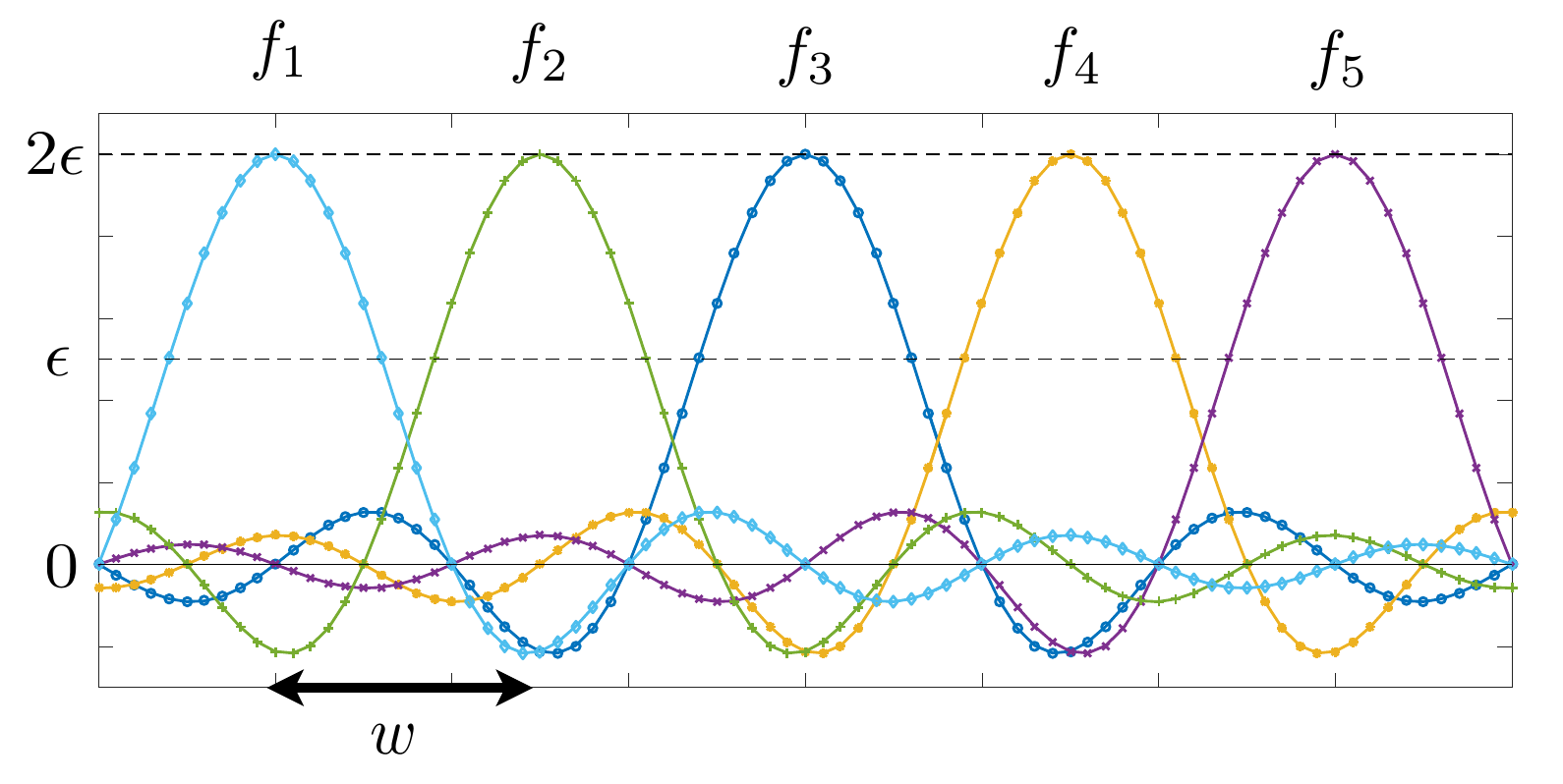}
        \par
    \end{centering}
    
    \caption{Illustration of functions $f_1,\dotsc,f_5$ such that any given point is $\epsilon$-optimal for at most one function. \label{fig:func_class_standard}} 
\end{figure}

In this section, we introduce some preliminary auxiliary results from \cite{Sca17a} that will be used throughout our analysis.  While we utilize the function class and auxiliary results from this existing work, we apply them in a significantly different manner in order to broaden the limited techniques known for GP bandit lower bounds, and to reap the advantages outlined above and in Section \ref{sec:cmp}.


We proceed as follows \cite{Sca17a}:
\begin{itemize}[leftmargin=5ex,itemsep=0ex,topsep=0.2ex]
    \item We lower bound the worst-case regret within $\Fc_k(B)$ by the regret averaged over a finite collection $\{f_1,\dotsc,f_M\} \subset \Fc_k(B)$ of size $M$.
    \item Except where stated otherwise, we choose each $f_m(\xv)$ to be a shifted version of a common function $g(\xv)$ on $\RR^d$.  Specifically, each $f_m(\xv)$ is obtained by shifting $g(\xv)$ by a different amount, and then cropping to $D = [0,1]^d$.  For our purposes, we require $g(\xv)$ to satisfy the following properties:
    \begin{enumerate}
        \item The RKHS norm in $\RR^d$ satisfies $\|g\|_k \le B$;
        \item We have (i) $g(\xv) \in [-2\epsilon,2\epsilon]$ with maximum value $g(0) = 2\epsilon$, and (ii) there is a ``width'' $w$ such that $g(\xv) < \epsilon$ for all $\|\xv\|_{\infty} \ge \frac{w}{2}$;
        \item There are absolute constants $h_0 > 0$ and $\zeta > 0$ such that $g(\xv) = \frac{2\epsilon}{h_0} h\big(\frac{\xv\zeta}{w}\big)$ for some function $h(\zv)$ that decays faster than any finite power of $\|\zv\|_2^{-1}$ as $\|\zv\|_2 \to \infty$.
    \end{enumerate}
    Letting $g(\xv)$ be such a function, we construct the $M$ functions by shifting $g(\xv)$ so that each $f_m(\xv)$ is centered on a unique point in a uniform grid, with points separated by $w$ in each dimension.  Since $D = [0,1]^d$, one can construct
    \begin{equation}
        M = \Big\lfloor \Big( \frac{1}{w} \Big)^d \Big\rfloor \label{eq:Mw}
    \end{equation}
    such functions; we will always consider $w \ll 1$, so that the case $M = 0$ is avoided.  See Figure \ref{fig:func_class_standard} for an illustration of the function class.
    \item It is shown in \cite{Sca17a} that the above properties can be achieved with
    \begin{equation}
        M = \Bigg\lfloor \Bigg( \frac{ \sqrt{\log\frac{B (2\pi l^2)^{d/4} h(0)}{2\epsilon}} }{\zeta \pi l} \Bigg)^d \Bigg\rfloor \label{eq:M_se}
    \end{equation}
    in the case of the SE kernel, and with 
    \begin{equation}
        M = \Big\lfloor \Big( \frac{B c_3}{\epsilon} \Big)^{d/\nu} \Big\rfloor \label{eq:M_matern}
    \end{equation}
    in the case of the Mat\'ern kernel, where $c_3 :=  \big( \frac{1}{\zeta} \big)^{\nu} \cdot \big( \frac{ c_2^{-1/2} }{ 2 (8\pi^2)^{(\nu + d/2)/2} } \big)$, and where $c_2 > 0$ is an absolute constant.  Note that these values of $M$ amount to choosing $w$ in \eqref{eq:Mw}, and we will always consider $\frac{\epsilon}{B}$ to be sufficiently small, thus ensuring that $M \gg 1$ and  $w \ll 1$ as stated above.
\end{itemize}
In addition, we introduce the following notation:
\begin{itemize}[leftmargin=5ex,itemsep=0ex,topsep=0.2ex]
    \item The probability density function of the output sequence $\yv = (y_1,\dotsc,y_T)$ when $f=f_m$ is denoted by $P_m(\yv)$ (and implicitly depends on the arbitrary underlying bandit algorithm).  We also define $f_0(\xv) = 0$ to be the zero function, and define $P_0(\yv)$ analogously for the case that the optimization algorithm is run on $f_0$.  Expectations and probabilities (with respect to the noisy observations) are similarly written as $\EE_m$, $\PP_m$, $\EE_0$, and $\PP_0$ when the underlying function is $f_m$ or $f_0$.  On the other hand, in the absence of a subscript, $\EE$ and $\PP$ are taken with respect to the noisy observations {\em and} the random function $f$ drawn uniformly from $\{f_1,\dotsc,f_M\}$.  In addition, $\PP_f$ and $\EE_f$ will sometimes be used for generic $f$.
    \item Let $\{\Rc_m\}_{m=1}^M$ be a partition of the domain into $M$ regions according to the above-mentioned uniform grid, with $f_m$ taking its minimum value of $-2\epsilon$ in the center of $\Rc_m$.  Moreover, let $j_t$ be the index at time $t$ such that $\xv_t$ falls into $\Rc_{j_t}$; this can be thought of as a quantization of $\xv_t$.
    \item Define the maximum absolute function value within a given region $\Rc_j$ as
    \begin{equation}
    \vbar_m^j := \max_{\xv \in \Rc_j} |f_m(\xv)|, \label{eq:vbar}
    \end{equation}
    and the maximum KL divergence to $P_0$ within $\Rc_j$ as
    \begin{equation}
    \Dbar_m^j := \max_{\xv \in \Rc_j} D( P_0(\cdot|\xv) \| P_m(\cdot|\xv) ), \label{eq:Dbar}
    \end{equation}
    where $P_m(y|\xv)$ is the distribution of an observation $y$ for a given selected point $\xv$ under the function $f_m$, and similarly for $P_0(y|\xv)$.
    \item Let $N_j \in \{0,\dotsc,T\}$ be a random variable representing the number of points from $\Rc_j$ that are selected throughout the $T$ rounds.
\end{itemize}

Finally, the following auxiliary lemmas will be useful.

\begin{lem} \label{lem:Gaussian_div}
    {\em \citep[Eq.~(36)]{Sca17a}}
    For $P_1$ and $P_2$ being Gaussian with means $(\mu_1,\mu_2)$ and a common variance $\sigma^2$, we have $D(P_1 \| P_2) = \frac{ (\mu_1 - \mu_2)^2 }{ 2\sigma^2 }$.
\end{lem}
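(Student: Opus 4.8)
The plan is to compute the divergence directly from its definition as the expectation of the log-likelihood ratio under $P_1$. The single structural fact that makes everything collapse is the assumption of a \emph{common} variance $\sigma^2$: the normalizing constants $\tfrac{1}{\sqrt{2\pi\sigma^2}}$ in the two densities are identical and cancel in the ratio $\tfrac{P_1(y)}{P_2(y)}$, leaving only the difference of the two quadratic exponents. So I would first reduce to
\begin{equation}
    D(P_1 \| P_2) = \EE_{P_1}\!\left[ \log \frac{P_1(Y)}{P_2(Y)} \right] = \EE_{P_1}\!\left[ \frac{(Y-\mu_2)^2 - (Y-\mu_1)^2}{2\sigma^2} \right].
\end{equation}

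The key step is to expand the two squares, at which point the $Y^2$ terms cancel and the integrand becomes \emph{affine} in $Y$:
\begin{equation}
    (Y-\mu_2)^2 - (Y-\mu_1)^2 = 2Y(\mu_1 - \mu_2) + (\mu_2^2 - \mu_1^2).
\end{equation}
This linearity is the crux of the argument: once the log-ratio is affine in $Y$, only the first moment of $P_1$ is needed, and I can substitute $\EE_{P_1}[Y] = \mu_1$ without evaluating any Gaussian integral beyond the mean. Doing so yields the numerator $2\mu_1(\mu_1 - \mu_2) + (\mu_2^2 - \mu_1^2) = (\mu_1 - \mu_2)^2$, and hence $D(P_1 \| P_2) = \tfrac{(\mu_1 - \mu_2)^2}{2\sigma^2}$, as claimed.

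There is no genuine obstacle here; the identity is essentially a one-line consequence of the equal-variance hypothesis reducing the log-ratio to an affine function of the sample. The only point requiring care is to keep track of \emph{why} the equal variances matter at both stages — they are what allow the constant prefactors to cancel and, separately, what eliminate the $Y^2$ term in the exponent difference. With unequal variances neither cancellation occurs, and the formula would acquire an additional $\log\tfrac{\sigma_2}{\sigma_1}$ term together with a variance-ratio contribution; so the statement as given is specific to the shared-$\sigma^2$ Gaussian noise model used throughout the paper.
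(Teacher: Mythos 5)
Your proof is correct and is exactly the standard computation behind this identity: the paper itself does not reprove the lemma (it simply cites \citep[Eq.~(36)]{Sca17a}), and the cited derivation is the same direct expansion of the log-likelihood ratio, using the equal-variance cancellation and $\EE_{P_1}[Y]=\mu_1$. Nothing further is needed.
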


\begin{lem} \label{lem:vbar_sums}
    {\em \citep[Lemma 7]{Sca17a}}
	The functions $\{f_m\}$ corresponding to \eqref{eq:M_se}--\eqref{eq:M_matern} are such that the quantities $\vbar_m^j$ in \eqref{eq:vbar} satisfy (i) $\sum_{j=1}^M \vbar_m^j = O(\epsilon)$ for all $m$; (ii) $\sum_{m=1}^M \vbar_m^j = O(\epsilon)$ for all $j$; and (iii) $\sum_{m=1}^M (\vbar_m^j)^2 = O(\epsilon^2)$ for all $j$.
\end{lem}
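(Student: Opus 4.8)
The plan is to exploit the fact that every $f_m$ is a common rapidly-decaying shape $g$ shifted to a grid center, and that the regions $\Rc_j$ are exactly the grid cells, so that $\vbar_m^j$ is governed by a quantity decaying rapidly in the grid displacement between $m$ and $j$. Write $\cv_m$ for the grid center of $f_m$, so that $f_m(\xv) = g(\xv - \cv_m)$ on $D$ and $\Rc_j$ is the width-$w$ cube centered at $\cv_j$; since cropping to $D$ only restricts the domain and $\Rc_j \subseteq D$, we have $\vbar_m^j = \max_{\xv \in \Rc_j} |g(\xv - \cv_m)|$. For $\xv \in \Rc_j$ the argument $\xv - \cv_m$ ranges over a width-$w$ cube centered at $\cv_j - \cv_m = w\kv$, where $\kv := (\cv_j - \cv_m)/w \in \ZZ^d$. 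Applying the change of variables $\zv = \frac{\zeta}{w}(\xv - \cv_m)$ together with the third listed property of $g$, namely $g(\xv) = \frac{2\epsilon}{h_0} h\big(\frac{\xv\zeta}{w}\big)$, I would obtain
$$ \vbar_m^j \;\le\; \frac{2\epsilon}{h_0}\,\sup_{\zv \in Q_{\kv}} |h(\zv)|, $$
where $Q_{\kv}$ is the cube of side $\zeta$ centered at $\zeta\kv$. The crucial point is that this cube has fixed side $\zeta$ independent of $w$, so the bound depends on the pair $(m,j)$ only through the integer displacement $\kv$.

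Next I would turn the decay hypothesis into a summable envelope. Since $h(\zv)$ decays faster than any finite power of $\|\zv\|_2^{-1}$, fixing any $p > d$ gives a constant $C_p$ with $|h(\zv)| \le C_p(1 + \|\zv\|_2)^{-p}$; because every point of $Q_{\kv}$ has norm at least $\zeta\|\kv\|_2 - \tfrac{\zeta\sqrt{d}}{2}$, this yields $\sup_{\zv \in Q_{\kv}}|h(\zv)| \le C_p'(1+\|\kv\|_2)^{-p}$ with $C_p'$ independent of $w$ and $\epsilon$ (for the finitely many small $\|\kv\|$ one simply uses $\sup_{\RR^d}|h| < \infty$). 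Hence $\vbar_m^j \le \frac{2\epsilon C_p'}{h_0}(1+\|\kv\|_2)^{-p}$.

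With this envelope the three claims follow from convergence of lattice sums. For (i) I fix $m$ and sum over $j$; as $j$ ranges over the grid, $\kv$ ranges over a subset of $\ZZ^d$, so $\sum_j \vbar_m^j \le \frac{2\epsilon C_p'}{h_0}\sum_{\kv \in \ZZ^d}(1+\|\kv\|_2)^{-p}$, and this lattice sum converges to a $w$-independent constant precisely because $p > d$, giving $O(\epsilon)$. Claim (ii) is identical after swapping the roles of $m$ and $j$, since the displacement $\kv = (\cv_j - \cv_m)/w$ again runs over a subset of $\ZZ^d$. For (iii) I square the envelope, $(\vbar_m^j)^2 \le \frac{4\epsilon^2 (C_p')^2}{h_0^2}(1+\|\kv\|_2)^{-2p}$, and sum over $m$; since $2p > d$ the sum $\sum_{\kv}(1+\|\kv\|_2)^{-2p}$ converges, giving $O(\epsilon^2)$.

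The main obstacle is purely in handling the ``faster than any finite power'' decay uniformly: the work lies in passing from pointwise decay of $h$ to a supremum bound over each grid cell, and in verifying that the resulting constant does not depend on $w$ (equivalently, on $M$ or $\epsilon$). This $w$-independence is exactly what keeps the bounds at $O(\epsilon)$ and $O(\epsilon^2)$ as $w \to 0$; it hinges on the cells $Q_{\kv}$ having fixed side $\zeta$ in the rescaled coordinates, a feature built into the scaling relation for $g$.
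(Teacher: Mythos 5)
Your proof is correct, and it is essentially the argument behind this lemma in the cited source: the paper itself does not reprove the statement but defers to \cite{Sca17a} (Lemma 7), whose proof likewise reduces $\vbar_m^j$ to the value of $h$ on a rescaled cell indexed by the integer grid displacement, invokes the faster-than-polynomial decay to get a summable envelope, and bounds the three sums by convergent lattice series (organized there by shells of grid distance rather than your $(1+\|\kv\|_2)^{-p}$ envelope, a cosmetic difference). Your attention to the $w$-independence of the constants is exactly the right point to emphasize, since it is what keeps the bounds at $O(\epsilon)$ and $O(\epsilon^2)$ uniformly as the grid is refined.
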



\subsection{Standard Setting -- Simple Regret} \label{sec:ana_std_simple}

For fixed $\epsilon > 0$, we consider the function class $\{f_1,\dotsc,f_M\}$ described in Section \ref{sec:prelim}, with $B$ replaced by $\frac{B}{3}$.  This change should be understood as applying to all previous equations and auxiliary results that we use, e.g., replacing $B$ by $\frac{B}{3}$ in \eqref{eq:M_matern}, but this only affects constant factors, which we do not attempt to optimize anyway.  Before continuing, we recall the important property that any $\xv \in [0,1]^d$ can be $\epsilon$-optimal for at most one function.

For fixed $m$ and $m'$, we will apply Lemma \ref{lem:relating} with $f(\xv) = f_m(\xv)$ and $f'(\xv) = f_m(\xv) + 2f_{m'}(\xv)$.  Intuitively, this choice is made so that $f$ and $f'$ have different maximizers, but remain near-identical except in a small region around the peak of $f'$.  It will be useful to characterize the quantity $\Dbar^j_{f,f'}$ in \eqref{eq:Dbar_ff}; by Lemma \ref{lem:Gaussian_div},
\begin{equation}
    \Dbar^j_{f,f'} = \max_{\xv \in \Rc_j} \frac{|2f_{m'}(\xv)|^2}{2\sigma^2} = \frac{2(\vbar_{m'}^j)^2}{\sigma^2}, \label{eq:eval_Dbar}
\end{equation}
using the definition of $\vbar_m^j$ in \eqref{eq:vbar}.

In the following, let $\Ac$ be the event that the returned point $\xv^{(T)}$ lies in the region $\Rc_m$ (defined just above \eqref{eq:vbar}). Suppose that an algorithm attains simple regret at most $\epsilon$ for both $f$ and $f'$ (note that $\|f'\|_k \le B$ by the triangle inequality and $\max_j \|f_j\|_k \le \frac{B}{3}$), each with probability at least $1-\delta$.   We claim that this implies $\PP_f[\Ac] \ge 1-\delta$ and $\PP_{f'}[\Ac] \le \delta$.  Indeed, by construction in Section \ref{sec:prelim}, only points in $\Rc_m$ can be $\epsilon$-optimal under $f = f_m$, and only points in $\Rc_{m'}$ can be $\epsilon$-optimal under $f' = f_m + 2f_{m'}$.  Hence, Lemma \ref{lem:relating} and \eqref{eq:eval_Dbar} give
\begin{equation}
    \frac{2}{\sigma^2} \sum_{j=1}^M \EE_{m}[N_j] \cdot (\vbar^j_{m'})^2 \ge \log\frac{1}{2.4 \delta}, \label{eq:std_simple_pf2}
\end{equation}
and summing over all $m' \ne m$ gives
\begin{equation}
    \frac{2}{\sigma^2} \sum_{m' \ne m} \sum_{j=1}^M \EE_{m}[N_j] \cdot (\vbar^j_{m'})^2 \ge (M-1) \log\frac{1}{2.4 \delta}. \label{eq:std_simple_pf3}
\end{equation}
Swapping the summations, using Lemma \ref{lem:vbar_sums} to upper bound $\sum_{m' \ne m} (\vbar^j_{m'})^2 \le O(\epsilon^2)$, and applying $\sum_{j=1}^M \EE_{m}[N_j(\tau)] = T$, we obtain 
\begin{equation}
    \frac{2c_0 \epsilon^2 T}{\sigma^2}  \ge (M-1) \log\frac{1}{2.4 \delta}
\end{equation}
for some constant $c_0$, or equivalently,
\begin{equation}
    T \ge \frac{(M-1)\sigma^2}{2c_0 \epsilon^2} \log\frac{1}{2.4 \delta}.  \label{eq:std_simple_pf5}
\end{equation}
Theorem \ref{thm:simple_lb_new} now follows using \eqref{eq:M_se} (with $\frac{B}{3}$ in place of $B$) for the SE kernel, or \eqref{eq:M_matern} for the Mat\'ern-$\nu$ kernel.

\begin{rem} \label{rem:variableT}
    The preceding analysis can easily be adapted to show that when $T$ is allowed to have variable length (i.e., the algorithm is allowed to choose when to stop), $\EE[T]$ is lower bounded by the right-hand side of \eqref{eq:std_simple_pf5}.  See \cite{Gab12} for a discussion on analogous variations in the context of multi-armed bandits.
\end{rem}

\subsection{Standard Setting -- Cumulative Regret} \label{sec:ana_std_cumul}

We fix some $\epsilon > 0$ to be specified later, consider the function class $\{f_1,\dotsc,f_M\}$ from Section \ref{sec:prelim}, and show that it is not possible to attain $R_T \le \frac{T\epsilon}{2}$ with probability at least $1-\delta$ for all functions with $\|f\|_k \le B$.

Assuming by contradiction that the preceding goal is possible, this class of functions includes the choices of $f$ and $f'$ at the start of Section \ref{sec:ana_std_simple}.  However, if we let $\Ac$ be the event that at least $\frac{T}{2}$ of the sampled points lie in $\Rc_m$, it follows that $\PP_f[\Ac] \ge 1-\delta$ and $\PP_{f'}[\Ac] \le \delta$, since (i) each sample outside $\Rc_m$ incurs regret at least $\epsilon$ under $f$; and (ii) each sample within $\Rc_m$ incurs regret at least $\epsilon$ under $f'$.  

Hence, despite being derived with a different choice of $\Ac$, \eqref{eq:std_simple_pf2} still holds in this case, and \eqref{eq:std_simple_pf5} follows.  This was derived under the assumption that $R_T \le \frac{T\epsilon}{2}$ with probability at least $1-\delta$ for all functions with $\|f\|_k \le B$; the contrapositive statement is that when 
\begin{equation}
    T < \frac{(M-1)\sigma^2}{2c_0 \epsilon^2} \log\frac{1}{2.4 \delta}, \label{eq:std_cumu_pf1}
\end{equation}
it must be the case that some function yields $R_T > \frac{T\epsilon}{2}$ with probability at least $\delta$.

The remainder of the proof of Theorem \ref{thm:cumul_lb_new} follows that of \citep[Sec.~5.3]{Sca17a}, but with $\sigma^2 \log\frac{1}{\delta}$ in place of $\sigma^2$, and the final regret expressions adjusted accordingly (e.g., compare Theorem \ref{thm:cumul_lb_new} with Theorem \ref{thm:cumul_lb} in Appendix \ref{app:existing}).  Due to this similarity, we only outline the details:
\begin{itemize}[leftmargin=5ex,itemsep=0ex,topsep=0.2ex]
    \item[(i)] Consider \eqref{eq:std_cumu_pf1} nearly holding with equality (e.g., $T = \frac{M\sigma^2}{4c_0 \epsilon^2} \log\frac{1}{2.4 \delta}$ suffices).
    \item[(ii)] Substitute $M$ from \eqref{eq:M_se} or \eqref{eq:M_matern} (with $\frac{B}{3}$ in place of $B$) into this choice from $T$, and solve to get an asymptotic expression for $\epsilon$ in terms of $T$.
    \item[(iii)] Substitute this expression for $\epsilon$ into $R_T > \frac{T\epsilon}{2}$ to obtain the final regret bound. 
\end{itemize}
See also Appendix \ref{app:pf_corr} for similar steps given in more detail, albeit in the robust setting.

\subsection{Comparison of Proof Techniques} \label{sec:cmp}

The above analysis borrows ingredients from \cite{Sca17a} and establishes similar final results; the key difference is in the use of Lemma \ref{lem:relating} in place of an additive change-of-measure result (see Lemma \ref{lem:auer} in Appendix \ref{app:further}).  We highlight the following advantages of our approach:
\begin{itemize}[leftmargin=5ex,itemsep=0ex,topsep=0.2ex]
    \item While both approaches can be used to lower bound the average or constant-probability regret,\footnote{Under the new proof given here, this is achieved by setting $\delta = \frac{1}{2}$, or any other fixed constant in $(0,1)$.} the above analysis gives the more precise $\log\frac{1}{\delta}$ dependence when the algorithm is required to succeed with probability at least $1-\delta$.
    \item As highlighted in Remark \ref{rem:variableT}, the above simple regret analysis extends immediately to provide a lower bound on $\EE[T]$ in the varying-$T$ setting, whereas attaining this via the approach of \cite{Sca17a} appears to be less straightforward.
    \item Although we do not explore it in this paper, we expect our approach to be more amenable to deriving {\em instance-dependent} regret bounds, rather than worst-case regret bounds over the function class.  The idea, as in the multi-armed bandit setting \cite{Kau16}, is that if we can ``perturb'' one function/instance to another so that the set of near-optimal points changes significantly, we can use Lemma \ref{lem:relating} to infer bounds on the required number of time steps in the original instance.
    \item As evidence of the versatility of our approach, we will use it in Section \ref{sec:adv_robust_setting} to derive an improved result over that of \cite{Bog18} in the robust setting with a corrupted final point.
\end{itemize}

\subsection{Simplified Analysis -- Mat\'ern Kernel} \label{sec:simplified_matern}

In the function class from \cite{Sca17a} used above, each function is a bump function (with bounded support) in the frequency domain, meaning that it is non-zero almost everywhere in the spatial domain.  In contrast, the earlier work of Bull \cite{Bul11} for the noiseless setting directly adopts a bump function in the spatial domain, permitting a simple analysis for the Mat\'ern kernel.  

It was noted in \cite{Sca17a} that such a choice is infeasible for the SE kernel, since its RKHS norm is infinite.  Nevertheless, in this section, we show that the (spatial) bump function is indeed much simpler to work with under the Mat\'ern kernel, not only in the noiseless setting of \cite{Bul11}, but also in the presence of noise.

The following result is stated in \citep[Sec.~A.2]{Bul11}, and follows using Lemma 5 therein.

\begin{lem} \label{lem:simpler_function}
    {\em (Bounded-Support Function Construction \cite{Bul11})}
    Let $h(\xv) = \exp\big( \frac{-1}{1-\|\xv\|^2} \big) \openone\{ \|\xv\|_2 < 1 \}$ be the $d$-dimensional bump function, and define $g(\xv) = \frac{2\epsilon}{h(\bzero)} h\big(\frac{\xv}{w}\big)$ for some $w > 0$ and $\epsilon > 0$. Then, $g$ satisfies the following properties:
    \begin{itemize}[leftmargin=5ex,itemsep=0ex,topsep=0.2ex]
        \item $g(\xv) = 0$ for all $\xv$ outside the $\ell_2$-ball of radius $w$ centered at the origin;
        \item $g(\xv) \in [0,2\epsilon]$ for all $\xv$, and $g(\bzero) = 2\epsilon$.
        \item $\| g \|_k \le c_1\frac{2\epsilon}{h(0)} \big(\frac{1}{w}\big)^{\nu} \|h\|_k$ when $k$ is the Mat\'ern-$\nu$ kernel on $\RR^d$, where $c_1$ is constant.  In particular, we have $\| g \|_k \le B$ when $w = \big( \frac{ 2\epsilon c_1 \|h\|_k }{ h(\bzero) B } \big)^{1/\nu}$.
    \end{itemize}
\end{lem}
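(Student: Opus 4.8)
The plan is to verify the three properties in turn, with the first two being essentially immediate from the definition of the bump function and the real content residing entirely in the RKHS-norm bound. For the support property, I would simply note that $h(\zv) = 0$ whenever $\|\zv\|_2 \ge 1$, so $g(\xv) = \frac{2\epsilon}{h(\bzero)} h(\xv/w)$ vanishes as soon as $\|\xv/w\|_2 \ge 1$, i.e.\ $\|\xv\|_2 \ge w$. For the range property, observe that with $u = \|\zv\|_2^2$ the map $u \mapsto \exp\big(\frac{-1}{1-u}\big)$ is decreasing on $[0,1)$, so $h$ is radially decreasing, attaining its maximum $h(\bzero)$ at the origin and decaying to $0$ as $\|\zv\|_2 \to 1^-$; hence $h(\zv) \in [0,h(\bzero)]$ for all $\zv$, and consequently $g(\xv) \in [0,2\epsilon]$ with $g(\bzero) = \frac{2\epsilon}{h(\bzero)} h(\bzero) = 2\epsilon$.

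The crux is the norm bound, which I would establish via the spectral (Fourier) characterization of the Mat\'ern RKHS, as in Lemma~5 of \cite{Bul11}. Writing $S(\omega)$ for the spectral density of $\kMat$ on $\RR^d$ --- which satisfies $S(\omega) \asymp (\alpha^2 + \|\omega\|_2^2)^{-(\nu+d/2)}$ with $\alpha = \frac{\sqrt{2\nu}}{l}$ --- the norm is $\|f\|_k^2 = \int_{\RR^d} \frac{|\hat{f}(\omega)|^2}{S(\omega)}\,d\omega$. Since $g = \frac{2\epsilon}{h(\bzero)} h(\cdot/w)$, dilation gives $\hat{g}(\omega) = \frac{2\epsilon}{h(\bzero)} w^d \hat{h}(w\omega)$; substituting and changing variables $\eta = w\omega$ yields (up to the constants hidden in $S(\omega) \asymp \cdot$)
\[
\|g\|_k^2 = \Big(\tfrac{2\epsilon}{h(\bzero)}\Big)^2 w^d \int_{\RR^d} \Big(\alpha^2 + \tfrac{\|\eta\|_2^2}{w^2}\Big)^{\nu+d/2} |\hat{h}(\eta)|^2\,d\eta .
\]
The key elementary inequality is that for $w \le 1$ one has $\alpha^2 + \frac{\|\eta\|_2^2}{w^2} \le \frac{1}{w^2}(\alpha^2 + \|\eta\|_2^2)$, so the integrand is dominated by $w^{-(2\nu+d)}(\alpha^2 + \|\eta\|_2^2)^{\nu+d/2}|\hat{h}(\eta)|^2$. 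Pulling out the factor $w^d \cdot w^{-(2\nu+d)} = w^{-2\nu}$ leaves exactly a constant multiple of $\|h\|_k^2$, giving $\|g\|_k \le c_1 \frac{2\epsilon}{h(\bzero)} \big(\frac{1}{w}\big)^\nu \|h\|_k$; finiteness of $\|h\|_k$ is guaranteed since $h$ is smooth with compact support and hence lies in every Sobolev space $H^{\nu+d/2}$. The ``in particular'' claim then follows by solving $c_1 \frac{2\epsilon}{h(\bzero)} w^{-\nu} \|h\|_k = B$ for $w$.

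The main obstacle --- and the only place where the Mat\'ern structure is genuinely used --- is this dilation scaling of the norm: the $(1/w)^\nu$ rate (rather than $(1/w)^{\nu+d/2}$) arises from the cancellation between the $w^d$ Jacobian of the change of variables and the $w^{-(2\nu+d)}$ high-frequency growth of the inverse spectral density, and it is essential that only an \emph{upper} bound is needed (valid for $w \le 1$), so that the crude inequality above suffices in place of an exact evaluation. Since this is precisely the content of \citep[Lemma~5]{Bul11}, I would invoke that result directly and devote the remaining write-up only to the bookkeeping of the absolute constant $c_1$.
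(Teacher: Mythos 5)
Your proposal is correct and takes essentially the same route as the paper: the paper offers no proof of its own, stating only that the result "follows using Lemma 5" of \cite{Bul11}, and your Fourier-dilation argument (spectral characterization of the Mat\'ern RKHS norm, the change of variables $\eta = w\omega$, and the inequality $\alpha^2 + \|\eta\|_2^2/w^2 \le w^{-2}\big(\alpha^2 + \|\eta\|_2^2\big)$ for $w \le 1$, yielding the $w^{-\nu}$ scaling) is precisely the content of that cited lemma, with the first two properties being immediate as you say. Your closing remark that one may simply invoke \cite{Bul11} and track the constant $c_1$ matches the paper's treatment exactly, so the only difference is that you have usefully unpacked the citation into a self-contained argument.
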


This function can be used to simplify both the original analysis in \cite{Sca17a}, and the alternative proof in Sections \ref{sec:ana_std_simple}--\ref{sec:ana_std_cumul}.  We focus on the latter, and on the simple regret; the cumulative regret can be handled similarly.

We consider functions $\{f_1,\dotsc,f_M\}$ constructed similarly to Section \ref{sec:prelim}, but with each $f_m$ being a shifted and scaled version of $g(\xv)$ in Lemma \ref{lem:simpler_function}, using the choice of $w$ in the third statement of the lemma.  By the first part of the lemma, the functions have disjoint support as long as their center points are separated by at least $w$.  We also use $\frac{B}{3}$ in place of $B$ in the same way as Section \ref{sec:ana_std_simple}.  By forming a regularly spaced grid in each dimension, it follows that we can form
\begin{equation}
    M = \Big\lfloor \frac{1}{w} \Big\rfloor^d = \Big\lfloor  \frac{ h(\bzero) B }{ 6\epsilon c_1 \|h\|_k }\Big\rfloor^{d/\nu}
\end{equation}
such functions.\footnote{We can seemingly fit significantly more points using a sphere packing argument (e.g., \citep[Sec.~13.2.3]{DuchiNotes}), but this would only increase $M$ by a constant factor depending on $d$, and we do not attempt to optimize constants in this paper.}  Observe that this matches the $O\big( \big(\frac{B}{\epsilon}\big)^{d/\nu} \big)$ scaling in \eqref{eq:M_matern}.

We clearly still have the property that any point $\xv$ is $\epsilon$-optimal for at most one function.  The additional useful property here is that any point $\xv$ yields {\em any non-zero value} for at most one function.  Letting $\{\Rc_j\}_{j=1}^M$ be the partition of the domain induced by the above-mentioned grid (so that each $f_m$'s support is a subset of $\Rc_m$), we notice that \eqref{eq:std_simple_pf2} still holds (with the choice of function in the definition of $v^{j}_{m'}$ suitably modified), but now simplifies to
\begin{equation}
    \EE_{m}[N_{m'}(\tau)] \cdot (v^{m'}_{m'})^2 \ge \frac{\sigma^2}{2} \log\frac{1}{2.4 \delta}, \label{eq:std_alt_pf1}
\end{equation}
since there is no difference between $f(\xv) = f_m(\xv)$ and $f'(\xv) = f_m(\xv) + 2f_{m'}(\xv)$ outside region $\Rc_{m'}$.

Since the maximum function value is $2\epsilon$, we have $(v^{m'}_{m'})^2 \le 4\epsilon^2$, and substituting into \eqref{eq:std_alt_pf1} and summing over $m' \ne m$ gives $T \ge \frac{\sigma^2 (M-1)}{8\epsilon^2} \log\frac{1}{2.4\delta}$.  This matches \eqref{eq:std_simple_pf5} up to modified constant factors, but is proved via a simpler analysis.

\subsection{Robust Setting -- Corrupted Samples} \label{sec:corr_setting}

The high-level ideas behind proving Theorem \ref{thm:corr_lower} are outlined as follows, with the details in Appendix \ref{app:pf_corr}:
\begin{itemize}[leftmargin=5ex,itemsep=0ex,topsep=0.2ex]
    \item We consider an adversary that pushes all function values down to zero until its budget is exhausted.
    \item The function class chosen is similar to that in Figure \ref{fig:func_class_standard}, and we observe that (i) the adversary does not utilize much of its budget unless the sampled point is near the function's peak, and (ii) as long as the adversary is still active, the regret incurred at each time instant is typically $O(\epsilon)$ (since the algorithm has only observed $y_1=\dotsc=y_t = 0$, and thus has not learned where the peak is).
    \item We choose $\epsilon$ in a manner such that the adversary is still active at time $T$ for at least half of the functions in the class, yielding $R_T = \Omega(T \epsilon)$.  With $M$ functions in the class, we show that occurs when $T\epsilon = \Theta(CM)$.
    \item Combining $T\epsilon = \Theta(CM)$ with the choices of $M$ in \eqref{eq:M_se} and \eqref{eq:M_matern} yields the desired result.
\end{itemize}

\subsection{Robust Setting -- Corrupted Final Point} \label{sec:adv_robust_setting}
\vspace*{1ex}

To prove Theorem \ref{thm:lower_robust_new}, we introduce a new function class that overcomes the limitation of that of \cite{Bog18} (illustrated in Figure \ref{fig:func_class_robust} in Appendix \ref{app:existing}) in only handling success probabilities very close to one.  Here we only present an idealized version of the function class that cannot be used directly due to yielding infinite RKHS norm.  In Appendix \ref{app:pf_adv}, we provide the proof details and the precise function class used.
The idealized function class is depicted in Figure \ref{fig:func_class_new_robust} for both $d=1$ and $d=2$.

We consider a class of functions of size $M+1$, denoted by $\{f_0,f_1,\dotsc,f_M\}$.  For every function in the class, most points are within distance $\xi$ of a point with value $-2\epsilon$.  However, there is a narrow region (depicted in plain color in Figure \ref{fig:func_class_new_robust}) where this may not be the case.  The functions $f_1,\dotsc,f_M$ are distinguished only by the existence of one additional narrow spike going down to $-4\epsilon$ in this region (see the 1D case in Figure \ref{fig:func_class_new_robust}), whereas for the function $f_0$, the spike is absent.  For instance, in the 1D case, if the narrow spike has width $w'$, then the number of functions is $M+1 = \frac{\xi}{w'}+1$.

With this class of functions, we have the following crucial observations on when the algorithm returns a point with $\xi$-stable regret at most $\epsilon$:
\begin{itemize}[leftmargin=5ex,itemsep=0ex,topsep=0.2ex]
    \item Under $f_0$, the returned point $\xv^{(T)}$ must lie within the plain region of diameter $\xi$;
    \item Under any of $f_1,\dotsc,f_M$, the returned point $\xv^{(T)}$ must lie outside that plain region;
    \item The only way to distinguish between $f_0$ and a given $f_i$ is to sample within the associated narrow spike in which the function value is $-4\epsilon$.
\end{itemize}

Due to the $N(0,\sigma^2)$ noise, this roughly amounts to needing to take $\Omega\big( \frac{\sigma^2}{\epsilon^2} \big)$ samples within the narrow spike, and since there are $M$ possible spike locations, this means that $\Omega\big( \frac{M\sigma^2}{\epsilon^2} \big)$ samples are needed.  We therefore have a similar lower bound to \eqref{eq:std_simple_pf5}, and a similar regret bound to the standard setting follows (with modified constants additionally depending on $\xi$).

\begin{figure}
    \begin{centering}
        \includegraphics[width=0.48\textwidth]{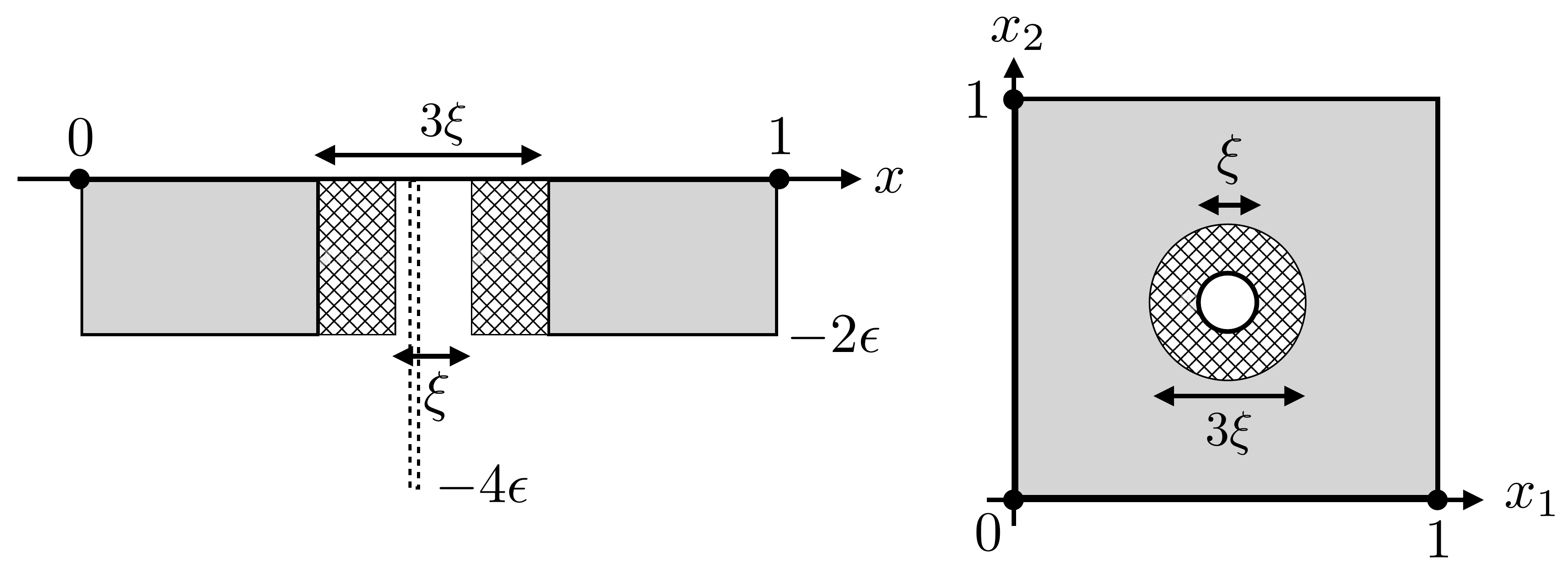}
        \par
    \end{centering}
    
    \caption{Idealized version of the function class under corrupted final points, in 1D (left) and 2D (right).  The shaded regions have value $-2\epsilon$; the checkered regions have value $0$ but their points can be perturbed into the shaded region; and the plain regions have value $0$ and cannot be.  If another spike is present, as per the dashed curve in the 1D case, then this creates a region (covering the entire plain region) that can be perturbed down to $-4\epsilon$. \label{fig:func_class_new_robust}} 
\end{figure}

\section{Conclusion}

We have provided novel techniques and results for algorithm-independent lower bounds in non-Bayesian GP bandit optimization.  In the standard setting, we have provided a new proof technique whose benefits include simplicity, versatility, and improved dependence on the error probability.

In the robust setting with corrupted samples, we have provided the first lower bound characterizing {\em joint} dependence on the corruption level and time horizon.  In the robust setting with a corrupted final point, we have overcome a limitation of the existing lower bound, demonstrating the impossibility of attaining any non-trivial constant error probability rather than only values close to one.  

An immediate direction for future work is to further close the gaps in the upper and lower bounds, particularly in the robust setting with corrupted samples.


\section*{Acknowledgement} 
This work was supported by the Singapore National Research Foundation (NRF) under grant number R-252-000-A74-281.

\newpage
\bibliographystyle{icml2021}
\bibliography{JS_References.bib}

\newpage
\onecolumn
\appendix

{\centering
    {\huge \bf Supplementary Material}
    
    {\Large \bf On Lower Bounds for Standard and Robust Gaussian \\ [1.5mm] Process Bandit Optimization (ICML 2021)}
    \par
}

\section{Formal Statements of Existing Results} \label{app:existing}

In this section, we provide a more detailed overview of existing regret bounds in the literature. While our main focus is on lower bounds, we also state several existing upper bounds for comparison purposes.  A particularly well-known upper bound from \cite{Sri09} is expressed in terms of the {\em maximum information gain}, defined as follows:
\begin{equation}
    \gamma_t = \max_{\xv_1, \cdots, \xv_t} \frac{1}{2} \log \det (\Iv_t + \sigma^{-2} \Kv_t), \label{eq:gamma_def}
\end{equation}
where $\Kv_t$ is a $t \times t$ kernel matrix with $(i,j)$-th entry $k(\xv_i,\xv_j)$.  It was established in \cite{Sri09} that  $\gamma_T = O( (\log T)^{d +1} )$ for the SE kernel, and $\gamma_T= O( T^{ \frac{d(d+1)}{ 2\nu + d(d+1) } } \log T )$  for the Mat\'ern-$\nu$ kernel, and we outline recent improvements on these bounds in Section \ref{sec:improved_matern}.

\subsection{Standard Setting}

We first state a standard cumulative regret upper bound \cite{Sri09,Cho17} and its straightforward adaptation to simple regret (e.g., see \citep[App.~C]{Bog20}).

\begin{thm} \label{thm:simple_ub}
    \emph{(Simple Regret Upper Bound -- Standard Setting \cite{Sri09,Cho17})}
    Fix $\epsilon > 0$, $B > 0$, $T \in \ZZ$, and $\delta \in (0,1)$, and suppose that
    \begin{equation}
        \frac{T}{\beta_T \gamma_T} \geq \frac{C_1}{\epsilon^2}, \label{eq:std_simple_general}
    \end{equation}
    where $C_1 = 8 / \log(1 + \sigma^{-2})$ and $\beta_T = \big(B + \sigma\sqrt{2(\gamma_{T-1} + \log\frac{e}{\delta})}\big)^2$.  Then, there exists an algorithm that, for any $f \in \Fc_k(B)$, returns $\xv^{(T)}$ satisfying $r(\xv^{(T)}) \le \epsilon$ with probability at least $1-\delta$.
\end{thm}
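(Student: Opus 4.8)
The plan is to instantiate the GP-UCB algorithm of \cite{Sri09} with the RKHS confidence bound of \cite{Cho17}, and to convert the resulting cumulative-regret control into a simple-regret guarantee. First I would have the algorithm maintain the posterior mean $\mu_{t-1}(\cdot)$ and standard deviation $\sigma_{t-1}(\cdot)$ of a surrogate Gaussian process with noise variance $\sigma^2$ (closed-form functions of the past samples), and at each round select
\[
    \xv_t = \argmax_{\xv \in D} \big( \mu_{t-1}(\xv) + \beta_t^{1/2}\sigma_{t-1}(\xv) \big),
    \qquad \beta_t = \Big(B + \sigma\sqrt{2\big(\gamma_{t-1}+\log\tfrac{e}{\delta}\big)}\Big)^2 .
\]
The single deep ingredient, which I would import verbatim from \cite{Cho17} rather than reprove, is the self-normalized concentration bound guaranteeing that the confidence event
\[
    \Ec := \Big\{\, |f(\xv) - \mu_{t-1}(\xv)| \le \beta_t^{1/2}\sigma_{t-1}(\xv)\ \text{ for all } \xv \in D \text{ and all } t \ge 1 \,\Big\}
\]
holds with probability at least $1-\delta$ for any fixed $f \in \Fc_k(B)$.

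Next, conditioning on $\Ec$, I would bound the per-round regret by combining the upper confidence bound at $\xv^*$, the optimality of $\xv_t$ for the acquisition function, and the lower confidence bound at $\xv_t$, which telescopes to
\[
    r_t = f(\xv^*) - f(\xv_t) \le 2\beta_t^{1/2}\sigma_{t-1}(\xv_t) \le 2\beta_T^{1/2}\sigma_{t-1}(\xv_t),
\]
the last step using the monotonicity of $\beta_t$. I would then control the sampled posterior variances through the maximum information gain, using the elementary inequality $u \le \frac{\sigma^{-2}}{\log(1+\sigma^{-2})}\log(1+u)$ for $u \in [0,\sigma^{-2}]$ together with the identity $\sum_{t=1}^T \log\big(1+\sigma^{-2}\sigma_{t-1}^2(\xv_t)\big) = \log\det\big(\Iv_T + \sigma^{-2}\Kv_T\big) \le 2\gamma_T$, yielding $\sum_{t=1}^T \sigma_{t-1}^2(\xv_t) \le \frac{2\gamma_T}{\log(1+\sigma^{-2})}$.

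Finally, to pass from cumulative to simple regret I would return the most-explored sampled point, $\xv^{(T)} = \xv_{\hat t}$ with $\hat t = \argmin_{t \le T}\sigma_{t-1}(\xv_t)$. On $\Ec$ this gives
\[
    r\big(\xv^{(T)}\big) \le 2\beta_T^{1/2}\min_{t\le T}\sigma_{t-1}(\xv_t) \le 2\beta_T^{1/2}\sqrt{\frac{1}{T}\sum_{t=1}^T \sigma_{t-1}^2(\xv_t)} \le 2\beta_T^{1/2}\sqrt{\frac{2\gamma_T}{T\log(1+\sigma^{-2})}} .
\]
Squaring and rearranging, the right-hand side is at most $\epsilon$ precisely when $\frac{T}{\beta_T\gamma_T} \ge \frac{8}{\epsilon^2\log(1+\sigma^{-2})} = \frac{C_1}{\epsilon^2}$, which is exactly hypothesis \eqref{eq:std_simple_general}; since the whole chain holds on $\Ec$, the claim $r(\xv^{(T)}) \le \epsilon$ follows with probability at least $1-\delta$. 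The only genuine obstacle is the confidence bound of \cite{Cho17}, whose proof rests on a martingale self-normalization argument in the RKHS; every downstream step is a routine regret decomposition, a Cauchy--Schwarz application, and the standard telescoping bound for the information gain.
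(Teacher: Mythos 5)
Your proposal is correct, and it reconstructs precisely the argument that the paper delegates to the literature: Theorem \ref{thm:simple_ub} is stated without proof as an imported result, with the confidence bound coming from \cite{Cho17}, the information-gain telescoping from \cite{Sri09}, and the cumulative-to-simple regret conversion from \citep[App.~C]{Bog20}, which is exactly the chain you assemble (note your choice of $\beta_t$ matches the Chowdhury--Gopalan width since $1+\log\frac{1}{\delta} = \log\frac{e}{\delta}$, and your constant works out to exactly $C_1 = 8/\log(1+\sigma^{-2})$). The only cosmetic quibbles are that your final step $\min_{t\le T}\sigma_{t-1}(\xv_t) \le \big(\frac{1}{T}\sum_{t=1}^T \sigma_{t-1}^2(\xv_t)\big)^{1/2}$ is a min-versus-mean bound rather than Cauchy--Schwarz, and that the variance-sum bound implicitly uses $k(\xv,\xv)\le 1$ (true for the SE and Mat\'ern kernels considered here).
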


\begin{thm} \label{thm:cumul_ub}
    \emph{(Cumulative Regret Upper Bound -- Standard Setting \cite{Sri09,Cho17})}
    Fix $B > 0$, $T \in \ZZ$, and $\delta \in (0,1)$, and let $C_1 = 8 / \log(1 + \sigma^{-2})$ and $\beta_T = \big(B + \sigma\sqrt{2(\gamma_{T-1} + \log\frac{e}{\delta})}\big)^2$.  Then, there exists an algorithm such that, for any $f \in \Fc_k(B)$, we have $R_T \le \sqrt{C_1 T \beta_T \gamma_T}$ with probability at least $1-\delta$.
\end{thm}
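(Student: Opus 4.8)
The plan is to follow the classic GP-UCB / IGP-UCB argument, the algorithm being upper-confidence-bound maximization built on a Gaussian-process surrogate (used purely as an algorithmic device, even though $f$ is a fixed element of $\Fc_k(B)$). Given the history up to round $t$, let $\mu_{t-1}(\xv)$ and $\sigma_{t-1}(\xv)$ denote the posterior mean and standard deviation of a GP model with observation-noise variance $\sigma^2$, and define the acquisition function $\UCB_t(\xv) = \mu_{t-1}(\xv) + \beta_t^{1/2}\sigma_{t-1}(\xv)$ with $\beta_t^{1/2} = B + \sigma\sqrt{2(\gamma_{t-1} + \log\frac{e}{\delta})}$. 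The algorithm selects $\xv_t \in \argmax_{\xv \in D}\UCB_t(\xv)$.

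The first and most important step --- and the one I expect to be the main obstacle --- is to establish a uniform confidence bound: on an event $\Ec$ of probability at least $1-\delta$, one has $|f(\xv) - \mu_{t-1}(\xv)| \le \beta_t^{1/2}\sigma_{t-1}(\xv)$ simultaneously for all $\xv \in D$ and all $t$. The RKHS-norm assumption $\|f\|_k \le B$ controls the surrogate's bias and contributes the $B$ term in $\beta_t^{1/2}$, while the fluctuation of $\mu_{t-1}$ around $f$ is driven by the accumulated observation noise $\{z_i\}$. Bounding this fluctuation uniformly over the continuous domain and over time is the delicate part: I would invoke a self-normalized concentration inequality for the noise martingale in the RKHS (the method-of-mixtures argument underlying \cite{Cho17}), which yields the $\sigma\sqrt{2(\gamma_{t-1}+\log\frac{e}{\delta})}$ term with no discretization of $D$. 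The alternative route of \cite{Sri09} instead discretizes $D$ and takes a union bound, at the cost of worse logarithmic factors.

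Given the confidence event, the per-round regret bound is routine. On $\Ec$, validity of the upper confidence bound gives $f(\xv^*) \le \UCB_t(\xv^*)$, the greedy rule gives $\UCB_t(\xv^*) \le \UCB_t(\xv_t)$, and the matching lower bound gives $f(\xv_t) \ge \mu_{t-1}(\xv_t) - \beta_t^{1/2}\sigma_{t-1}(\xv_t)$; chaining these three inequalities yields the instantaneous bound $r_t \le 2\beta_t^{1/2}\sigma_{t-1}(\xv_t)$. Summing over $t$, using that $\beta_t$ is nondecreasing in $t$ to replace $\beta_t$ by $\beta_T$, and applying Cauchy--Schwarz, I obtain $R_T \le 2\beta_T^{1/2}\sqrt{T\sum_{t=1}^T \sigma_{t-1}^2(\xv_t)}$.

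The final step converts the sum of posterior variances into the information gain. Using the normalization $k(\xv,\xv)\le 1$ (hence $\sigma_{t-1}^2(\xv_t)\le 1$), the elementary inequality $u \le \frac{\sigma^{-2}}{\log(1+\sigma^{-2})}\log(1+u)$ for $u \in [0,\sigma^{-2}]$ gives $\sigma_{t-1}^2(\xv_t) \le \frac{1}{\log(1+\sigma^{-2})}\log\big(1+\sigma^{-2}\sigma_{t-1}^2(\xv_t)\big)$, and summing the right-hand side telescopes into $\frac{1}{2}\log\det(\Iv_T + \sigma^{-2}\Kv_T) \le \gamma_T$ by the definition \eqref{eq:gamma_def}. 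Thus $\sum_{t=1}^T \sigma_{t-1}^2(\xv_t) \le \frac{2\gamma_T}{\log(1+\sigma^{-2})}$, and substituting back yields $R_T \le \sqrt{\frac{8}{\log(1+\sigma^{-2})}\,T\beta_T\gamma_T} = \sqrt{C_1 T \beta_T\gamma_T}$ on the event $\Ec$ of probability at least $1-\delta$, as claimed.
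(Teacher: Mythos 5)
Your proposal is correct and follows essentially the same route the paper relies on: Theorem \ref{thm:cumul_ub} is stated as an existing result from \cite{Sri09,Cho17}, and your argument---the self-normalized RKHS martingale confidence bound giving $|f(\xv)-\mu_{t-1}(\xv)|\le\beta_t^{1/2}\sigma_{t-1}(\xv)$ uniformly, the UCB chaining to $r_t\le 2\beta_t^{1/2}\sigma_{t-1}(\xv_t)$, Cauchy--Schwarz, and the conversion of $\sum_{t=1}^T\sigma_{t-1}^2(\xv_t)$ into $\frac{2\gamma_T}{\log(1+\sigma^{-2})}$ via the log-determinant identity---is precisely the IGP-UCB analysis of \cite{Cho17}, with the constants $C_1=8/\log(1+\sigma^{-2})$ and $\beta_T=\big(B+\sigma\sqrt{2(\gamma_{T-1}+\log\frac{e}{\delta})}\big)^2$ matching exactly. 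No gaps.
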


The following lower bounds were proved in \cite{Sca17a}.

\begin{thm} \label{thm:simple_lb}
    \emph{(Simple Regret Lower Bound -- Standard Setting \citep[Thm.~1]{Sca17a})}
    Fix $\epsilon \in \big(0,\frac{1}{2}\big)$, $B > 0$, and $T \in \ZZ$.   Suppose there exists an algorithm that, for any $f \in \Fc_k(B)$, achieves average simple regret $\EE[r(\xv^{(T)})] \le \epsilon$.  Then, if $\frac{\epsilon}{B}$ is sufficiently small, we have the following:
    \begin{enumerate}
        \item For \emph{$k = \kSE$}, it is necessary that
        \begin{equation}
            T = \Omega\bigg( \frac{\sigma^2}{\epsilon^2} \Big(\log\frac{B}{\epsilon}\Big)^{d/2} \bigg). \label{eq:inst_se}
       \end{equation}
        \item For \emph{$k = \kMat$}, it is necessary that
        \begin{equation}
            T = \Omega\bigg( \frac{\sigma^2}{\epsilon^2} \Big(\frac{B}{\epsilon}\Big)^{d/\nu} \bigg). \label{eq:inst_Mat}
       \end{equation}
    \end{enumerate}
    Here, the implied constants may depend on $(d,l,\nu)$.
\end{thm}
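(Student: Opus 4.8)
The plan is to follow the averaging-over-a-finite-class strategy of Section~\ref{sec:ana_std_simple}, specialized to a constant target error probability, and to reduce the \emph{average}-regret hypothesis to a constant-probability localization statement. Concretely, I would lower bound the worst-case average simple regret over $\Fc_k(B)$ by the regret averaged over the finite collection $\{f_1,\dotsc,f_M\}$ of Section~\ref{sec:prelim}, but with the peak value of each bump taken to be a constant multiple of $\epsilon$ (say $8\epsilon$) rather than $2\epsilon$. This constant slack is what makes the reduction work: since $\EE_m[r(\xv^{(T)})]\le\epsilon$ while the regret incurred by any point outside the correct grid cell $\Rc_m$ is $\Omega(\epsilon)$ with a strictly larger constant, Markov's inequality yields $\PP_m[\xv^{(T)}\in\Rc_m]\ge 1-\delta$ for a fixed constant $\delta<\tfrac13$.

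With this in hand, I would run exactly the comparison of Section~\ref{sec:ana_std_simple}. For each pair $m\ne m'$, compare $f=f_m$ against $f'=f_m+2f_{m'}$; these lie in $\Fc_k(B)$ (by the triangle inequality with $\max_j\|f_j\|_k\le\frac B3$), share a common profile away from $\Rc_{m'}$, but have their maximizers in $\Rc_m$ and $\Rc_{m'}$ respectively. Taking $\Ac=\{\xv^{(T)}\in\Rc_m\}$, the disjointness of the near-optimal regions gives $\PP_f[\Ac]\ge 1-\delta$ and $\PP_{f'}[\Ac]\le\delta$, so Lemma~\ref{lem:relating} applies. By Lemma~\ref{lem:Gaussian_div} the per-region divergence is $\Dbar^j_{f,f'}=2(\vbar^j_{m'})^2/\sigma^2$ as in \eqref{eq:eval_Dbar}, so Lemma~\ref{lem:relating} yields $\frac{2}{\sigma^2}\sum_{j}\EE_m[N_j](\vbar^j_{m'})^2\ge\log\frac{1}{2.4\delta}$.

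Summing this over all $m'\ne m$, swapping the order of summation, and invoking part~(iii) of Lemma~\ref{lem:vbar_sums} to bound $\sum_{m'}(\vbar^j_{m'})^2=O(\epsilon^2)$, together with $\sum_j\EE_m[N_j]=T$, collapses everything to $T=\Omega\!\big(M\sigma^2/\epsilon^2\big)$, with the fixed $\delta$ absorbed into the implied constant. The proof then finishes by substituting the kernel-specific cardinalities: $M=\Theta\big((\log\frac B\epsilon)^{d/2}\big)$ from \eqref{eq:M_se} for the SE kernel and $M=\Theta\big((\frac B\epsilon)^{d/\nu}\big)$ from \eqref{eq:M_matern} for the Mat\'ern kernel, both valid once $\frac\epsilon B$ is small enough that $M\gg1$. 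This produces \eqref{eq:inst_se} and \eqref{eq:inst_Mat}. Scaling the peak by a constant only perturbs $\log\frac B\epsilon$ by an additive constant and $(\frac B\epsilon)^{d/\nu}$ by a constant factor, so the claimed scalings are unaffected.

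The step I expect to be the main obstacle is the reduction from the average-regret hypothesis to the two-sided, constant-$\delta$ event required by Lemma~\ref{lem:relating}. A naive Markov bound at threshold $\epsilon$ is vacuous precisely because the optimality threshold coincides with the regret scale; the fix is to build in a constant multiplicative gap between the peak height and the near-optimal level, so that failing to localize in $\Rc_m$ costs regret bounded below by a constant times $\epsilon$. One must then verify that the \emph{same} constant $\delta<\frac13$ simultaneously certifies $\PP_f[\Ac]\ge1-\delta$ under $f$ and $\PP_{f'}[\Ac]\le\delta$ under $f'$, which relies on the geometry that $\epsilon$-optimality (even after the Markov inflation) still forces the returned point into a single grid cell. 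Everything else---the divergence evaluation, the summation, and the substitution of $M$---is routine given the auxiliary lemmas. I note that replacing Lemma~\ref{lem:relating} here by the additive change-of-measure result underlying \cite{Sca17a} recovers the original argument; the version above is simply the specialization of the paper's new technique (Theorem~\ref{thm:simple_lb_new}) to a fixed $\delta$.
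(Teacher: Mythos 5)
Your proof is correct, but it is worth being precise about which proof it matches. The theorem you were asked to prove is the existing result of \cite{Sca17a}, and the argument that paper (and this one, in its comparison) attributes to it is genuinely different from yours: it compares each $f_m$ to the \emph{zero function} $f_0$, using the additive change-of-measure inequality (Lemma~\ref{lem:auer}) together with the divergence decomposition $D(P_0\|P_m)\le\sum_{j}\EE_0[N_j]\Dbar_m^j$ (Lemma~\ref{lem:div_bound}); since the cells $\{\Rc_m\}$ are disjoint, the localization probabilities under $P_0$ average to at most $1/M$, and the \emph{expected} regret is lower bounded directly, with no Markov step and no two-sided event. What you propose instead is the specialization of this paper's new technique (Section~\ref{sec:ana_std_simple}, built on Lemma~\ref{lem:relating}) to a fixed constant $\delta$, plus a reduction from the expectation hypothesis to a constant-probability localization statement. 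That reduction is the one genuinely new ingredient you had to supply, and you handle it correctly: you rightly observe that Markov at threshold $\epsilon$ is vacuous, and that rescaling the class so the peak ($8\epsilon$) sits a constant factor above the mis-localization regret ($4\epsilon$) gives $\PP_m[\xv^{(T)}\in\Rc_m]\ge \frac34$ under $f_m$ and $\PP_{f'}[\xv^{(T)}\in\Rc_m]\le\frac14$ under $f'=f_m+2f_{m'}$, so that Lemma~\ref{lem:relating} applies with $\delta=\frac14<\frac13$; the rescaling changes $M$ in \eqref{eq:M_se}--\eqref{eq:M_matern} only by constants, so \eqref{eq:inst_se} and \eqref{eq:inst_Mat} follow. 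The trade-off between the two routes is exactly the one catalogued in Section~\ref{sec:cmp}: the original zero-baseline argument handles average regret natively (no constant-gap trick, no factor-of-four slack), whereas your route inherits the advantages of Lemma~\ref{lem:relating} --- the $\log\frac{1}{\delta}$ dependence when upgraded to Theorem~\ref{thm:simple_lb_new}, and the immediate extension to variable stopping times (Remark~\ref{rem:variableT}) --- at the cost of the Markov loss in constants.
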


\begin{thm} \label{thm:cumul_lb}
    \emph{(Cumulative Regret Lower Bound -- Standard Setting \citep[Thm.~2]{Sca17a})}
    For fixed $T \in \ZZ$ and $B > 0$, given any algorithm, we have the following:
    \begin{enumerate}
        \item For $k = \kSE$, there exists $f \in \Fc_k(B)$ such that
        \begin{equation}
            \EE[R_T] = \Omega\Bigg( \sqrt{T\sigma^2 \Big(\log \frac{B^2 T}{\sigma^2} \Big)^d } \Bigg) \label{eq:cumul_SE}
       \end{equation}
       provided that $\frac{\sigma}{B} = O\big(\sqrt{T}\big)$ with a sufficiently small implied constant.
        \item For $k = \kMat$, there exists $f \in \Fc_k(B)$ such that
        \begin{equation}
            \EE[R_T] = \Omega\bigg( B^{\frac{d}{2\nu + d}} \sigma^{\frac{2\nu}{2\nu + d}} T^{\frac{\nu + d}{2\nu + d}} \bigg) \label{eq:cumul_Mat}
       \end{equation}
       provided that $\frac{\sigma}{B} = O\big(\sqrt{T^{\frac{1}{2+d/\nu}}}\big)$ with a sufficiently small implied constant.
    \end{enumerate}
    Here, the implied constants may depend on $(d,l,\nu)$.
\end{thm}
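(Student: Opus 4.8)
The plan is to lower bound the worst-case expected cumulative regret over $\Fc_k(B)$ by the regret averaged over the finite class $\{f_1,\dotsc,f_M\}$ of shifted bumps from Section~\ref{sec:prelim} (each with $\|f_m\|_k\le B$), and then to tune the width---equivalently the count $M$ via \eqref{eq:M_se}--\eqref{eq:M_matern}---as a function of $T$. First I would reduce to a per-instance occupancy bound: under $f_m$ the maximum is $2\epsilon$, while every sample landing outside the peak region $\Rc_m$ has value below $\epsilon$ and hence incurs instantaneous regret at least $\epsilon$. Writing $N_m$ for the number of samples in $\Rc_m$, this gives $R_T \ge \epsilon(T-N_m)$ and therefore $\EE_m[R_T]\ge\epsilon(T-\EE_m[N_m])$. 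Averaging uniformly over $m$ and using $\max\ge\mathrm{avg}$, it suffices to show $\frac1M\sum_{m=1}^M\EE_m[N_m]\le\frac{T}{2}$, which yields average (hence worst-case) regret at least $\frac{T\epsilon}{2}=\Omega(T\epsilon)$ for some $f_m\in\Fc_k(B)$.

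The heart of the argument is controlling $\frac1M\sum_m\EE_m[N_m]$ by comparison with the null instance $f_0\equiv 0$, under which the observations carry no information about any peak and $\sum_m\EE_0[N_m]=T$, so $\frac1M\sum_m\EE_0[N_m]=\frac{T}{M}$ is negligible once $M\gg1$. Since $N_m\in\{0,\dotsc,T\}$, an additive change of measure via Pinsker's inequality gives $\EE_m[N_m]-\EE_0[N_m]\le T\,\dTV(P_0,P_m)\le T\sqrt{\tfrac12 D(P_0\|P_m)}$, and the divergence decomposition for sequentially chosen actions together with Lemma~\ref{lem:Gaussian_div} bounds the trajectory divergence region by region as $D(P_0\|P_m)\le\sum_{j=1}^M\EE_0[N_j]\,\Dbar_m^j$ with $\Dbar_m^j=(\vbar_m^j)^2/(2\sigma^2)$. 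Summing over $m$, applying Cauchy--Schwarz across $m$ to the square roots, and invoking Lemma~\ref{lem:vbar_sums}(iii) to replace $\sum_m(\vbar_m^j)^2$ by $O(\epsilon^2)$ (with $\sum_j\EE_0[N_j]=T$) produces $\frac1M\sum_m\EE_m[N_m]\le\frac{T}{M}+O\big(\frac{T^{3/2}\epsilon}{\sigma\sqrt M}\big)$. This is at most $\frac{T}{2}$ precisely when $T=O(M\sigma^2/\epsilon^2)$ with a sufficiently small implied constant, recovering the threshold \eqref{eq:std_cumu_pf1} with $\delta=\Theta(1)$.

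I would then close with the routine optimization already outlined in Section~\ref{sec:ana_std_cumul}: take $T=\Theta(M\sigma^2/\epsilon^2)$ so that the average regret is $\Omega(T\epsilon)$, substitute the explicit counts from \eqref{eq:M_se} or \eqref{eq:M_matern} (fixing $M$ as a polylogarithmic, resp.\ polynomial, function of $\frac{B}{\epsilon}$), solve for $\epsilon$ in terms of $T$, and back-substitute into $\Omega(T\epsilon)$. For the SE kernel this gives $\epsilon=\Theta\big(\sigma\sqrt{M/T}\big)$ with $M=\Theta\big((\log\frac{B^2T}{\sigma^2})^{d/2}\big)$, yielding the SE bound \eqref{eq:cumul_SE}; for the Mat\'ern kernel the same algebra with $M=\Theta\big((B/\epsilon)^{d/\nu}\big)$ solves to $\epsilon=\Theta\big(\sigma^{\frac{2\nu}{2\nu+d}}B^{\frac{d}{2\nu+d}}T^{-\frac{\nu}{2\nu+d}}\big)$ and hence \eqref{eq:cumul_Mat}. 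The stated scaling conditions $\frac{\sigma}{B}=O(\sqrt{T})$ (resp.\ $O(\sqrt{T^{1/(2+d/\nu)}})$) are exactly what keep $\frac{\epsilon}{B}$ small, so that $M\gg1$ and $w\ll1$ and the construction remains valid, and they also prevent the bound from exceeding the trivial $O(BT)$ ceiling.

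I expect the change-of-measure step to be the main obstacle and the place demanding the most care. Two subtleties arise: the per-step divergence $D(P_0(\cdot|\xv)\|P_m(\cdot|\xv))$ depends on the exact sampled point rather than on its region, so the decomposition must be performed on the full length-$T$ observation sequence and only afterwards coarsened to the regionwise maxima $\Dbar_m^j$; and $N_m$ is a random, algorithm-dependent functional of the adaptively generated trajectory, so the Pinsker/TV bound must be applied to the law of the entire sequence rather than pointwise. A cleaner but less self-contained alternative would be to invoke Theorem~\ref{thm:cumul_lb_new} with a constant $\delta$ and use $\EE[R_T]\ge r\,\PP_f[R_T\ge r]$ to turn its $\Omega(r)$-with-probability-$\delta$ guarantee into an $\Omega(\delta r)=\Omega(r)$ expected-regret bound, since both proofs pass through the same threshold \eqref{eq:std_cumu_pf1}.
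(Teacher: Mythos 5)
Your proposal is correct and follows essentially the same route as the paper's cited proof of this statement: lower-bounding by the average over the shifted-bump class, the occupancy bound $R_T \ge \epsilon(T - N_m)$, the additive change of measure of Lemma \ref{lem:auer} applied to $N_m \in [0,T]$, the region-wise divergence decomposition of Lemma \ref{lem:div_bound} with $\Dbar_m^j = (\vbar_m^j)^2/(2\sigma^2)$, the sums from Lemma \ref{lem:vbar_sums}, and the final tuning of $\epsilon$ via \eqref{eq:M_se}--\eqref{eq:M_matern} --- exactly the machinery this paper attributes to \cite{Sca17a} in Appendix \ref{app:further} (and deliberately contrasts with its new route via Lemma \ref{lem:relating} used for Theorem \ref{thm:cumul_lb_new}). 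One remark: your SE-kernel algebra correctly yields the exponent $d/2$ inside the square root, i.e.\ $\Omega\big(\sqrt{T\sigma^2\big(\log\frac{B^2T}{\sigma^2}\big)^{d/2}}\big)$, rather than the exponent $d$ printed in \eqref{eq:cumul_SE}; as the paper's footnote in Appendix \ref{sec:improved_matern} notes, the published version of \cite{Sca17a} mistakenly omits the division by two in that exponent, so your derivation matches the corrected statement rather than the replicated one.
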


\begin{rem}
    While Theorems \ref{thm:simple_lb} and \ref{thm:cumul_lb} are stated in terms of the average regret, it is also noted in \citep[Sec.~5.4]{Sca17a} that the same scaling laws hold for regret bounds that are required to hold with a fixed constant probability above $\frac{3}{4}$.  However, even when this probability is taken to approach one, the scaling of the lower bound therein remains unchanged, i.e., the dependence on the error probability is not characterized.  We provide refined bounds characterizing this dependence in Theorems \ref{thm:simple_lb_new} and \ref{thm:cumul_lb_new}.
\end{rem}

The function class used in the proofs of the above results is illustrated in Figure \ref{fig:func_class_standard}.  As discussed in \cite{Sca17a}, the upper and lower bounds are near-matching for the SE kernel, only differing in the constant multiplying $d$ in the exponent.  The gaps are more significant for the Mat\'ern kernel when relying on the bounds on $\gamma_T$ from \cite{Sri09}; however, in Section \ref{sec:improved_matern}, we overview some recent improved upper bounds that significantly close these gaps.

\subsection{Robust Setting -- Corrupted Samples}

In the robust setting with corrupted samples described in Section \ref{sec:corr_setting}, the following results were proved in \cite{Bog20}.

\begin{thm}
    \label{thm:known_C_ub}
    \emph{(Upper Bound -- Corrupted Samples \citep[Thm.~5]{Bog20})}
    In the setting of corrupted samples with corruption threshold $C$, RKHS norm bound $B$, and time horizon $T$, there exists an algorithm (assumed to have knowledge of $C$) that, with probability at least $1 - \delta$, attains cumulative regret
        $R_T = \mathcal{O}\big(\big(B + C + \sqrt{\ln(1 / \delta)}\big) \sqrt{\gamma_T T} + \gamma_{T}\sqrt{T} \big).$
\end{thm}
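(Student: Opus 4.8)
The plan is to analyze the (corruption-aware) GP-UCB rule that maintains a kernel-ridge posterior mean $\mu_{t-1}$ and posterior standard deviation $\sigma_{t-1}$ computed from the corrupted data $\{(\xv_i,\ytil_i)\}_{i<t}$ with a constant-order regularizer $\lambda>0$, and selects $\xv_t \in \argmax_{\xv\in D}\big(\mu_{t-1}(\xv)+\beta_t^{1/2}\sigma_{t-1}(\xv)\big)$ for a confidence width $\beta_t^{1/2}$ to be specified. Knowledge of $C$ enters \emph{only} through $\beta_t$. The whole argument reduces to establishing a corruption-robust confidence bound, namely that with probability at least $1-\delta$,
\[
    |f(\xv)-\mu_{t-1}(\xv)| \le \beta_t^{1/2}\sigma_{t-1}(\xv)\quad\text{for all }\xv\in D,\ t\le T,
\]
with $\beta_t^{1/2}=O\big(B+C+\sqrt{\gamma_{t-1}+\log\tfrac1\delta}\big)$, after which the standard regret machinery finishes the proof.

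To prove the confidence bound I would write $\ytil_i-f(\xv_i)=z_i+c_i(\xv_i)$ and decompose $\mu_{t-1}(\xv)-f(\xv)$ into a bias term (bounded by $O(B)\,\sigma_{t-1}(\xv)$ using $\|f\|_k\le B$), a stochastic term $\kv_{t-1}(\xv)^\top(\Kv_{t-1}+\lambda\Iv)^{-1}\zv_{t-1}$, and a corruption term $\kv_{t-1}(\xv)^\top(\Kv_{t-1}+\lambda\Iv)^{-1}\cv_{t-1}$, where $\cv_{t-1}=(c_1(\xv_1),\dots,c_{t-1}(\xv_{t-1}))$. The stochastic term is handled verbatim by the self-normalized / information-gain martingale bound of Chowdhury--Gopalan, contributing the $\sqrt{\gamma_{t-1}+\log\frac1\delta}$ factor. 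The new ingredient is the corruption term. Passing to feature space with $\Psi_{t-1}=[\phi(\xv_1),\dots,\phi(\xv_{t-1})]^\top$ and $\Vv_{t-1}=\lambda\Iv+\Psi_{t-1}^\top\Psi_{t-1}$, the push-through identity together with the variance identity $\sigma_{t-1}(\xv)=\sqrt{\lambda}\,\|\phi(\xv)\|_{\Vv_{t-1}^{-1}}$ give, via Cauchy--Schwarz in the $\Vv_{t-1}^{-1}$ metric,
\[
    \big|\kv_{t-1}(\xv)^\top(\Kv_{t-1}+\lambda\Iv)^{-1}\cv_{t-1}\big|
    \le \frac{\sigma_{t-1}(\xv)}{\sqrt{\lambda}}\,\big\|\Psi_{t-1}^\top\cv_{t-1}\big\|_{\Vv_{t-1}^{-1}}.
\]
Since $\Psi_{t-1}\Vv_{t-1}^{-1}\Psi_{t-1}^\top=\Kv_{t-1}(\Kv_{t-1}+\lambda\Iv)^{-1}\preceq\Iv$, the trailing norm is at most $\|\cv_{t-1}\|_2\le\|\cv_{t-1}\|_1$, and the budget constraint \eqref{eq:total_corruption} forces $\|\cv_{t-1}\|_1\le C$. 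Hence the corruption inflates the confidence width by only the additive constant $\tfrac{C}{\sqrt\lambda}=O(C)$, which is exactly the $+C$ entering $\beta_t^{1/2}$.

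With this bound in hand, the regret analysis is the textbook one: optimism gives $f(\xv^*)-f(\xv_t)\le 2\beta_t^{1/2}\sigma_{t-1}(\xv_t)$; Cauchy--Schwarz and the monotonicity $\beta_t\le\beta_T$ give $R_T\le 2\beta_T^{1/2}\sqrt{T\sum_{t=1}^T\sigma_{t-1}^2(\xv_t)}$; and the standard information-gain bound $\sum_{t=1}^T\sigma_{t-1}^2(\xv_t)=O(\gamma_T)$ yields $R_T=O\big(\beta_T^{1/2}\sqrt{T\gamma_T}\big)$. Substituting $\beta_T^{1/2}=O\big(B+C+\sqrt{\gamma_T}+\sqrt{\log\frac1\delta}\big)$ and splitting off the $\sqrt{\gamma_T}\cdot\sqrt{T\gamma_T}=\gamma_T\sqrt{T}$ contribution produces the claimed $O\big((B+C+\sqrt{\log\frac1\delta})\sqrt{\gamma_T T}+\gamma_T\sqrt{T}\big)$.

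The main obstacle is the corruption step: an arbitrary bounded adversarial sequence is not sub-Gaussian, so the martingale concentration used for $\zv_{t-1}$ does not apply to $\cv_{t-1}$, and one must instead show \emph{deterministically} that it widens the confidence interval by only a $C$-proportional constant rather than a quantity growing with $t$. Everything hinges on peeling off the $\sigma_{t-1}(\xv)$ factor through the feature-space factorization together with the contraction $\Kv_{t-1}(\Kv_{t-1}+\lambda\Iv)^{-1}\preceq\Iv$, which converts the per-round budget constraint $\|\cv_{t-1}\|_1\le C$ directly into the width inflation; keeping the norms and the role of $\lambda$ consistent throughout is the delicate part.
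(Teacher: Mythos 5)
Note first that this paper never proves Theorem~\ref{thm:known_C_ub}: it is quoted from \citep[Thm.~5]{Bog20} purely as a point of comparison for the new lower bounds, so the only meaningful comparison is with the proof in \cite{Bog20} itself. Your proposal is correct, and it follows essentially the same route as that original proof: run GP-UCB on the corrupted observations with the confidence width enlarged by an additive $O(C)$ term (this is where knowledge of $C$ is used), justify that enlargement \emph{deterministically} from the budget constraint \eqref{eq:total_corruption} rather than by concentration, and then finish with the standard optimism / information-gain machinery. The key step in \cite{Bog20} is likewise a lemma showing that the corruption shifts the posterior mean by at most $O(C)\,\sigma_{t-1}(\xv)$, obtained through the same feature-space representation $\kv_{t-1}(\xv)^{\top}(\Kv_{t-1}+\lambda\Iv)^{-1}\cv_{t-1} = \phi(\xv)^{\top}\Vv_{t-1}^{-1}\bPsi_{t-1}^{\top}\cv_{t-1}$; whether one then proceeds entrywise (bounding each $|\phi(\xv_i)^{\top}\Vv_{t-1}^{-1}\phi(\xv)|$ and applying an $\ell_1$--$\ell_\infty$ H\"older step against $\|\cv_{t-1}\|_1 \le C$) or, as you do, by a single Cauchy--Schwarz in the $\Vv_{t-1}^{-1}$ metric combined with the contraction $\Kv_{t-1}(\Kv_{t-1}+\lambda\Iv)^{-1}\preceq\Iv$, is an interchangeable technical device yielding the same conclusion. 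Your remaining ingredients --- the $B\sigma_{t-1}(\xv)$ bias bound, the Chowdhury--Gopalan self-normalized bound for the Gaussian noise (which indeed still applies here, since $c_t(\cdot)$ is selected before $z_t$ is realized, so the martingale structure is intact), and $\sum_{t}\sigma_{t-1}^{2}(\xv_t)=O(\gamma_T)$ --- are standard and invoked correctly, and the final substitution reproduces exactly the claimed $O\big((B+C+\sqrt{\log\frac{1}{\delta}})\sqrt{\gamma_T T}+\gamma_T\sqrt{T}\big)$.
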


\begin{thm}
\label{thm:known_C_lb}
    \emph{(Lower Bound -- Corrupted Samples \citep[App.~J]{Bog20})}
    In the setting of corrupted samples with corruption threshold $C$, if the RKHS norm $B$ exceeds some universal constant, then for any algorithm, there exists a function $f \in \Fc_k(B)$ that incurs $\Omega(C)$ cumulative regret with probability arbitrarily close to one for any time horizon $T \ge C$.
\end{thm}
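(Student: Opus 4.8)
The plan is to prove the bound by a change-of-measure (indistinguishability) argument that pits the zero function against a single peaked competitor. First I would take $f_0 = 0$ and let $f_1 = g \in \Fc_k(B)$ be one bump of height $2\epsilon$ supported near a single cell of the uniform grid of Section \ref{sec:prelim} (as in Figure \ref{fig:func_class_standard}), where $\epsilon$ is a fixed constant; such a $g$ exists with $\|g\|_k \le B$ exactly when $B$ exceeds the universal constant implicit in \eqref{eq:M_se}/\eqref{eq:M_matern} (or in Lemma \ref{lem:simpler_function} for the bounded-support Mat\'ern construction). The optimum of $f_1$ is $2\epsilon$ while its value is at most $\epsilon$ outside its home cell, so under $f_1$ every sample outside that cell incurs regret at least $\epsilon$.

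The adversary's job is to spend its budget \eqref{eq:total_corruption} to render the two worlds identical. Against $f_1$ I would have the adversary set $c_t(\cdot) = -g(\cdot)$ for the first $T_0 = \lfloor C/(2\epsilon)\rfloor$ rounds, so that the corrupted observation \eqref{eq:corrupted_observation} becomes $\ytil_t = f_1(\xv_t) + z_t - g(\xv_t) = z_t$, i.e. pure $N(0,\sigma^2)$ noise. The per-round cost is $\max_{\xv}|c_t(\xv)| = 2\epsilon$, so the total expenditure is exactly $2\epsilon T_0 \le C$ and the constraint \eqref{eq:total_corruption} holds \emph{deterministically}. Under a coupling of the noise (and of any internal randomness of the algorithm), the observation sequence on the $f_1$-run with this adversary is then identical to a genuine $f_0$-run throughout the first $T_0$ rounds, so the sequence of played cells is identical and, crucially, \emph{independent of where the peak of $f_1$ was placed}.

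I would then exploit this independence to locate the peak where it hurts. Running against $f_0$ induces cell-visit counts $N_1,\dots,N_M$ with $\sum_m N_m = T_0$, so some cell $m^\ast$ has $\EE[N_{m^\ast}] \le T_0/M$; I place the peak there (equivalently, randomize the peak cell uniformly and average over it). On the coupled $f_1$-run the number of near-peak samples is this same $N_{m^\ast}$, whence $R_T \ge \epsilon(T_0 - N_{m^\ast})$. By Markov's inequality the event $\{N_{m^\ast}\le T_0/2\}$ has probability at least $1 - 2/M$, and on this event $R_T \ge \epsilon T_0/2 = \Omega(C)$. Since regret is nonnegative on the remaining $T - T_0 \ge 0$ rounds — and $T \ge C \ge T_0$ because $\epsilon$ is a fixed constant — this bound holds for every horizon $T \ge C$, giving both an expected and a constant-probability $\Omega(C)$ lower bound.

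The hard part will be upgrading ``constant probability'' to ``probability arbitrarily close to one'' while keeping the $\Omega(C)$ constant uniform. The only lever in the argument above is $M$: enlarging the number of candidate peak cells sharpens $1 - 2/M$, but with $\|g\|_k \le B$ fixed, increasing $M$ through \eqref{eq:M_matern} forces the bump width, and hence its height $2\epsilon$, to shrink, which in turn enlarges the masking horizon $T_0 = C/(2\epsilon)$. Reconciling a prescribed target probability with the constraint $T \ge C$ therefore requires carefully tracking the joint dependence of $M$, $\epsilon$, and $T_0$ on $(B,C,T)$, so that the failure probability $O(1/M)$ is driven down without demanding a horizon exceeding $T$. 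This interplay is precisely what the present paper revisits: the crude $\Omega(C)$ bound holds $\epsilon$ constant, whereas the sharper joint $C$–$T$ dependence of Theorem \ref{thm:corr_lower} is obtained by instead choosing $\epsilon$ so that the blanketing adversary remains active for at least half of the functions in the class.
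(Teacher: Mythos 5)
Your masking-adversary argument is the right core idea: it is the same device that underlies the cited proof in \citep[App.~J]{Bog20}, and also the paper's own proof of Theorem \ref{thm:corr_lower} in Appendix \ref{app:pf_corr}, where the budget bookkeeping is done by averaging over the whole function class (Lemma \ref{lem:corr_budget}) rather than by your fixed per-round cost of $2\epsilon$. However, your write-up has a genuine error exactly where the theorem's strength lies. The claim ``$T \ge C \ge T_0$ because $\epsilon$ is a fixed constant'' is backwards: for any $\epsilon \le \frac{1}{2}$ one has $C/(2\epsilon) \ge C$, so (up to the floor) $T_0 \ge C$ rather than $T_0 \le C$; making your inequality true would require $\epsilon \ge \frac{1}{2}$. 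This is not a cosmetic slip, because the regime the theorem actually needs --- failure probability $2/M \le \delta$, hence $M$ large, hence by \eqref{eq:M_se}/\eqref{eq:M_matern} with $B$ fixed $\epsilon$ small --- is precisely the regime where $T_0 \gg C$, so the case $C \le T < T_0$ that your argument silently discards is the generic one, not a degenerate one.

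The second issue is that you leave the upgrade to ``probability arbitrarily close to one'' open, describing it as a delicate joint optimization over $(M,\epsilon,T_0)$ and $(B,C,T)$. It is in fact a one-line case split on top of what you already have. If $T \ge T_0$, your bound gives $R_T \ge \epsilon(T_0 - N_{m^*}) \ge \epsilon T_0/2 \ge C/4$ with probability at least $1-2/M$. If instead $C \le T < T_0$, the adversary's budget is never exhausted, so the coupling with the $f_0$-run holds on \emph{all} $T$ rounds, and the identical Markov argument applied to $N_{m^*}$ over $T$ rounds (with $\EE[N_{m^*}] \le T/M$) gives $R_T \ge \epsilon(T - N_{m^*}) \ge \epsilon T/2 \ge \epsilon C/2$ with probability at least $1-2/M$. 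Either way $R_T = \Omega(C)$ with probability at least $1-2/M$; given a target probability $1-\delta$, choose $M \ge 2/\delta$ and let $\epsilon = \epsilon(\delta,B)$ be the corresponding bump height. The point you missed is that the implied constant in $\Omega(C)$ --- here $\min\{\frac{1}{4}, \epsilon(\delta)/2\}$ --- is allowed to depend on the target probability level (as well as $B$, $d$, and the kernel parameters); the theorem does not assert a constant uniform in $\delta$. With that reading, your own construction already proves the statement once the $T < T_0$ case is handled as above, and no further tracking of the joint dependence on $(B,C,T)$ is needed; that joint dependence only becomes essential for the sharper bound of Theorem \ref{thm:corr_lower}.
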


Note that Theorem \ref{thm:known_C_ub} concerns the case that $C$ is known.  Additional upper bounds for the case of unknown $C$ are also given in \cite{Bog20}, but we focus on the known $C$ case; this is justified by the fact that we are focusing on lower bounds, and any given lower bound is stronger when it also applies to algorithms knowing $C$.  Having said this, in future work, it may be interesting to determine whether the case of unknown $C$ is provably harder; this is partially addressed in \cite{Bog20a} in the linear bandit setting.

Although Theorem \ref{thm:known_C_lb} shows that the linear dependence on $C$ is unavoidable, characterizing the optimal {\em joint} dependence on $C$ and $T$ is very much an open problem, as highlighted in \cite{Bog20}.  Letting $\Rbar^{(0)}_T$ and $\Runder^{(0)}_T$ be generic upper and lower bounds on the cumulative regret in the uncorrupted setting, we see that Theorem \ref{thm:known_C_ub} is of the form $O\big( C \Rbar^{(0)}_T \big)$ (multiplicative dependence on $C$), whereas Theorem \ref{thm:known_C_lb} implies a lower bound of $\Omega\big( \Runder^{(0)}_T + C \big)$ (additive dependence on $C$).

A similar gap briefly existed in the standard multi-armed bandit problem \cite{Lyk18}, but was closed in \cite{Gup19}, in which the additive dependence was shown to be tight.  However, the techniques for attaining a matching upper bound do not appear to extend easily to the RKHS setting.  In Section \ref{sec:corr_setting}, we show that, at least for deterministic algorithms, a fully additive dependence is in fact impossible.

\subsection{Robust Setting -- Corrupted Final Point} \label{sec:adv_robust_results}

In the robust setting with a corrupted final point described in Section \ref{sec:adv_robust_setting}, the following results were proved in \cite{Bog18}.

\begin{thm} \label{thm:upper}
    \emph{(Upper Bound -- Corrupted Final Point \citep[Thm.~1]{Bog18})}
    Fix $\xi > 0$, $\epsilon > 0$, $B > 0$, $T \in \ZZ$, $\delta \in (0,1)$, and a distance function $\dist(\xv,\xv')$, and suppose that
    \begin{equation}
        \frac{T}{\beta_T \gamma_T} \geq \frac{C_1}{\epsilon^2}, \label{eq:adv_rob_general}
    \end{equation}
    where $C_1 = 8 / \log(1 + \sigma^{-2})$ and $\beta_T = \big(B + \sigma\sqrt{2(\gamma_{T-1} + \log\frac{e}{\delta})}\big)^2$.  Then, there exists an algorithm that, for any $f \in \Fc_k(B)$, returns $\xv^{(T)}$ satisfying $r_{\xi}(\xv^{(T)}) \le \epsilon$ with probability at least $1-\delta$.
\end{thm}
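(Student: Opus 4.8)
The plan is to establish this upper bound constructively via a confidence-bound algorithm of the StableOpt type. First I would maintain the Gaussian process posterior mean $\mu_{t-1}(\cdot)$ and standard deviation $\sigma_{t-1}(\cdot)$ formed from the noisy samples, and invoke the standard RKHS confidence bound underlying Theorems~\ref{thm:simple_ub}--\ref{thm:cumul_ub}: for the stated $\beta_T$ (with $\beta_t \le \beta_T$ for $t \le T$), the event
\[
    \Ec = \Big\{ |f(\xv) - \mu_{t-1}(\xv)| \le \beta_t^{1/2}\sigma_{t-1}(\xv)\ \ \forall\, \xv \in D,\ \forall\, t \le T \Big\}
\]
holds with probability at least $1-\delta$ \cite{Sri09,Cho17}. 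All subsequent steps are carried out on the event $\Ec$.

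Second, I would define the confidence bounds $\UCB_t(\xv) = \mu_{t-1}(\xv) + \beta_t^{1/2}\sigma_{t-1}(\xv)$ and $\LCB_t(\xv) = \mu_{t-1}(\xv) - \beta_t^{1/2}\sigma_{t-1}(\xv)$, and let the algorithm select
\[
    \xv_t \in \argmax_{\xv \in D} \min_{\bdelta \in \Delta_{\xi}(\xv)} \UCB_t(\xv+\bdelta), \qquad \bdelta_t \in \argmin_{\bdelta \in \Delta_{\xi}(\xv_t)} \LCB_t(\xv_t+\bdelta),
\]
sampling at $\xv_t + \bdelta_t$. Writing $g(\xv) = \min_{\bdelta \in \Delta_{\xi}(\xv)} f(\xv+\bdelta)$ for the robust objective, the one-step bound I would prove is $g(\xv^*_{\xi}) - g(\xv_t) \le 2\beta_t^{1/2}\sigma_{t-1}(\xv_t+\bdelta_t)$: optimism gives $g(\xv^*_{\xi}) \le \min_{\bdelta}\UCB_t(\xv^*_{\xi}+\bdelta) \le \min_{\bdelta}\UCB_t(\xv_t+\bdelta)$, while the lower confidence bound at the worst-case perturbation $\bdelta_t$ certifies $g(\xv_t) \ge \min_{\bdelta} \LCB_t(\xv_t+\bdelta)$, and under $\Ec$ the two differ by at most $2\beta_t^{1/2}\sigma_{t-1}(\xv_t+\bdelta_t)$.

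Third, summing the one-step bound over $t$, applying Cauchy--Schwarz, and invoking the information-gain inequality $\sum_{t=1}^T \sigma_{t-1}(\xv_t+\bdelta_t)^2 \le \frac{2}{\log(1+\sigma^{-2})}\gamma_T$ yields
\[
    \sum_{t=1}^T \big( g(\xv^*_{\xi}) - g(\xv_t) \big) \le 2\beta_T^{1/2}\sqrt{T\,\textstyle\sum_{t=1}^T \sigma_{t-1}(\xv_t+\bdelta_t)^2} \le \sqrt{C_1\,\beta_T\,\gamma_T\,T},
\]
with $C_1 = 8/\log(1+\sigma^{-2})$ exactly as in the statement. Reporting $\xv^{(T)}$ as the sampled point with the largest robust lower confidence bound $\min_{\bdelta} \LCB_t(\cdot+\bdelta)$ makes its $\xi$-regret no larger than the per-round average $\frac{1}{T}\sqrt{C_1\beta_T\gamma_T T} = \sqrt{C_1\beta_T\gamma_T/T}$; requiring this to be at most $\epsilon$ is precisely the hypothesis $\frac{T}{\beta_T\gamma_T} \ge \frac{C_1}{\epsilon^2}$.

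The main obstacle I expect is the interaction between the inner minimization over perturbations and the confidence machinery: I must check that the single uniform event $\Ec$ controls $f$ simultaneously over every perturbed point $\xv_t+\bdelta$ with $\bdelta \in \Delta_{\xi}(\xv_t)$ (not merely at the sampled locations), and that the nested $\argmax$/$\argmin$ over the sets $\Delta_{\xi}(\cdot)$ are attained. Since $\dist(\xv,\xv')=\|\xv-\xv'\|_2$ makes each $\Delta_{\xi}(\xv)$ a compact set and $\UCB_t,\LCB_t$ are continuous, the extrema are attained; beyond this, the telescoping and the conversion from cumulative to reported-point regret are routine and parallel the non-robust argument behind Theorem~\ref{thm:cumul_ub}.
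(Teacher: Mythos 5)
Your proposal is correct, and it is essentially the same argument as the one behind this result: the theorem is stated in this paper without proof as a quotation of \citep[Thm.~1]{Bog18}, and the proof given there is exactly the StableOpt analysis you reconstruct (uniform confidence event, the max-min UCB selection with min-LCB perturbation, the one-step optimism bound, Cauchy--Schwarz plus the information-gain inequality, and reporting the sampled point with the largest robust lower confidence bound). No gaps; your final reporting step is the same chain used in \cite{Bog18} to convert the per-round average bound into a guarantee for the returned point.
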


\begin{thm} \label{thm:lower_robust}
    \emph{(Lower Bound -- Corrupted Final Point \citep[Thm.~2]{Bog18})}
    Fix $\xi \in \big(0,\frac{1}{2}\big)$, $\epsilon \in \big(0,\frac{1}{2}\big)$, $B > 0$, and $T \in \ZZ$, and set $\dist(\xv,\xv') = \|\xv - \xv'\|_2$.   Suppose that there exists an algorithm that, for any $f \in \Fc_k(B)$, reports a point $\xv^{(T)}$ achieving $\xi$-regret $r_{\xi}(\xv^{(T)}) \le \epsilon$ with probability at least $1 - \delta$.  Then, provided that $\frac{\epsilon}{B}$ and $\delta$ are sufficiently small, we have the following:
    \begin{enumerate}[leftmargin=5ex,itemsep=0ex,topsep=0.2ex]
        \item For \emph{$k = \kSE$}, it is necessary that
        $T = \Omega\big( \frac{\sigma^2}{\epsilon^2} \big(\log\frac{B}{\epsilon}\big)^{d/2} \big)$. 
        \item For \emph{$k = \kMat$}, it is necessary that
        $T = \Omega\big( \frac{\sigma^2}{\epsilon^2} \big(\frac{B}{\epsilon}\big)^{d/\nu} \big).$ 
    \end{enumerate}
    Here, the implied constants may depend on $(\xi,d,l,\nu)$.
\end{thm}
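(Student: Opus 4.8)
The plan is to reduce the robust ($\xi$-regret) problem to a multiple-hypothesis identification problem, in the same spirit as the standard-setting analysis but with a function class tailored to the robustness geometry. I would construct a family $\{f_0,f_1,\dots,f_M\} \subset \Fc_k(B)$ of the form idealized in Figure \ref{fig:func_class_new_robust}: a common background shape possessing one broad, $\xi$-stable-optimal plain region, together with $M$ functions $f_1,\dots,f_M$ that each add a single narrow spike (down to $-4\epsilon$) at a distinct location, while $f_0$ omits all spikes. To keep $\|f_m\|_k \le B$ I would assemble the spikes and background from shifted-and-scaled bump features of the type used in Section \ref{sec:prelim} (the spatial bump of Lemma \ref{lem:simpler_function} for $\kMat$, and its smoothed frequency-domain analogue for $\kSE$, since the idealized class has infinite norm), so that the number of admissible spike locations inherits the scaling of $M$ in \eqref{eq:M_se}--\eqref{eq:M_matern}, up to constants absorbing $\xi$.

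The first key step is to establish a clean dichotomy on the returned point $\xv^{(T)}$. Under $f_0$ the robust optimum sits in the plain region (robust value $0$), so $r_{\xi}(\xv^{(T)}) \le \epsilon$ forces $\xv^{(T)}$ into that region; under any $f_i$ the added spike drags the robust value of the \emph{entire} plain region down to $-4\epsilon$, shifting the robust optimum to a checkered region of robust value $-2\epsilon$, so the same accuracy now forces $\xv^{(T)}$ \emph{outside} the plain region. Writing $\Ac = \{\xv^{(T)} \in \text{plain region}\}$, any algorithm succeeding with probability $1-\delta$ on every function must obey $\PP_{f_0}[\Ac] \ge 1-\delta$ and $\PP_{f_i}[\Ac] \le \delta$ for each $i$. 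Since $f_0$ and $f_i$ coincide everywhere except inside spike $i$, the only informative samples are those landing in that spike, where the two functions differ by $\Theta(\epsilon)$ and Lemma \ref{lem:Gaussian_div} gives a per-sample KL divergence of $\Theta(\epsilon^2/\sigma^2)$.

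The second key step is the counting argument. Running the algorithm on $f_0$, the total expected number of spike samples is at most $T$, so among the $M$ disjoint spikes there is one, say $i$, with $\EE_{f_0}[N_i] \le T/M$; this bounds $D(\PP_{f_0}\|\PP_{f_i}) \le (T/M)\cdot\Theta(\epsilon^2/\sigma^2)$. If $T < c\,M\sigma^2/\epsilon^2$ with $c$ small, then $f_0$ and $f_i$ are close in total variation, whence $\PP_{f_i}[\Ac] \ge \PP_{f_0}[\Ac] - \mathrm{TV} \ge 1-\delta-\mathrm{TV}$, contradicting $\PP_{f_i}[\Ac]\le\delta$. Thus $T = \Omega\big(M\sigma^2/\epsilon^2\big)$, and substituting $M$ from \eqref{eq:M_se} and \eqref{eq:M_matern} yields the claimed $\frac{\sigma^2}{\epsilon^2}\big(\log\frac{B}{\epsilon}\big)^{d/2}$ and $\frac{\sigma^2}{\epsilon^2}\big(\frac{B}{\epsilon}\big)^{d/\nu}$ scalings. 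The requirement that $\delta$ be sufficiently small is exactly what this bookkeeping costs: the coupling only produces a contradiction when $1-\delta$ exceeds the total-variation slack, i.e.\ $\mathrm{TV} < 1-2\delta$, forcing $\delta$ below a fixed threshold. (Routing the same dichotomy through the sharper Lemma \ref{lem:relating}, as in Theorem \ref{thm:lower_robust_new}, both removes this restriction and supplies the extra $\log\frac{1}{\delta}$ factor.)

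I expect the main obstacle to be the construction itself, rather than the information argument. One must verify, for a genuinely norm-bounded (smoothed) surrogate of the idealized class, that a single spike of height $\Theta(\epsilon)$ and width $w'$ simultaneously (i) leaves $\|f_i\|_k \le B$, (ii) truly destroys the $\xi$-stability of the \emph{whole} plain region, so that the dichotomy is exact and not merely approximate near the spike, and (iii) still permits the number of disjoint spike locations within a region of diameter $\xi$ to be of the order dictated by \eqref{eq:M_se}--\eqref{eq:M_matern}. Reconciling (ii) with (iii) is the delicate point, since forcing each spike to dominate the entire plain region under $\xi$-perturbation competes with packing many disjoint spikes; managing this tension while controlling the RKHS norm is where I would expect the bulk of the technical work to lie.
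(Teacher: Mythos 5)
Your proposal is correct in substance, but it does not follow the route of the cited proof of Theorem \ref{thm:lower_robust} from \cite{Bog18}; instead you have essentially reconstructed this paper's own improved argument for Theorem \ref{thm:lower_robust_new} (Section \ref{sec:adv_robust_setting}, Appendix \ref{app:pf_adv}), with Lemma \ref{lem:relating} replaced by a total-variation/Pinsker change of measure. The \cite{Bog18} proof uses the class of Figure \ref{fig:func_class_robust}: shifted copies of a single function, each creating a ``bad'' region of volume $\Theta(\xi^d)$ that the returned point must avoid; since a blind guess already avoids it with probability $1-O(\xi^d)$, that argument only produces a contradiction when $\delta \le O(\xi^d)$, which is exactly why the theorem carries the ``$\delta$ sufficiently small'' hypothesis. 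Your $f_0$-versus-spiked-$f_i$ class (Figure \ref{fig:func_class_new_robust}) instead yields an exact in/out dichotomy on the plain region, so it works for any $\delta$ bounded away from $\frac{1}{2}$ and therefore proves a strictly stronger statement---this is precisely the improvement embodied in Theorem \ref{thm:lower_robust_new}. The TV route costs you only the $\log\frac{1}{\delta}$ refinement that Lemma \ref{lem:relating} would supply, which is immaterial for the statement at hand.

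Two remarks on the points you flag as delicate. First, the tension you perceive between requirements (ii) and (iii) dissolves once the plain region is taken to have \emph{diameter} (not radius) roughly $\xi$ and the spikes are packed inside it: any two points of a diameter-$\xi$ set are within distance $\xi$ of each other, so every spike automatically dominates the entire plain region under $\xi$-perturbation, while a packing argument still yields $M = \Theta\big((\xi/w)^d\big)$ disjoint spike locations, matching \eqref{eq:M_se}--\eqref{eq:M_matern} up to $\xi$-dependent constants that the theorem permits; the RKHS norm control is obtained by the decomposition $f_m(\xv) = -c(\xv)+b(\xv)-s_m(\xv)$ with a smoothed ball indicator $b$ (Lemma \ref{lem:circ_func}) and spikes from Lemma \ref{lem:simpler_function}, each of norm at most $\frac{B}{3}$. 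Second, for $k = \kSE$ the spikes cannot have compact support, so your step ``the only informative samples are those landing in that spike'' holds only approximately; the averaging over $i$ must then be run through Lemma \ref{lem:div_bound} together with part (iii) of Lemma \ref{lem:vbar_sums}, exactly as in Section \ref{sec:ana_std_simple}, which recovers the same $O\big(T\epsilon^2/(M\sigma^2)\big)$ divergence bound for the least-sampled spike and leaves your conclusion unchanged.
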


For the SE kernel, \eqref{eq:adv_rob_general} holds with $T = \Otil\big( \frac{1}{\epsilon^2} \big( \log\frac{1}{\epsilon} \big)^{2d} \big)$ for constant $B$ and $\sigma^2$ \cite{Bog18}, where $\Otil(\cdot)$ hides dimension-independent log factors.  Thus, the upper and lower bounds nearly match.  A detailed treatment of the Mat\'ern kernel is deferred to Appendix \ref{sec:improved_matern}.

The function class used in the proof of Theorem \ref{thm:lower_robust} is illustrated in Figure \ref{fig:func_class_robust}.  In contrast to the standard setting, where the difficulty of the class used (depicted in Figure \ref{fig:func_class_standard}) was in narrowing down the main (positive) peak, here the difficulty is in {\em avoiding} any point that can be perturbed into a {\em negative} valley.

While such an approach suffices for proving Theorem  \ref{thm:lower_robust}, it has the significant drawback that an algorithm that returns a {\em completely random} point (even with $T = 0$) has a fairly high chance of being within $\epsilon$ of optimal.  That is, Theorem  \ref{thm:lower_robust} only gives a hardness result for the case that the algorithm is required to succeed with probability sufficiently close to one (i.e., $\delta$ is sufficiently small).  

This problem is exacerbated further in higher dimensions and/or for smaller values of $\xi$.  To see this, note that the ``bad'' region (gray area in Figure \ref{fig:func_class_robust}) has volume proportional to $\xi^d$, which may be very close to zero (whereas the volume of the domain $[0,1]^d$ is one for any $d$).  In light of this limitation, it would be preferable to have a hardness result associated with any algorithm that succeeds with a universal constant probability, rather than only those that succeed with very high probability depending on $\xi$ and $d$.  In Section \ref{sec:adv_robust_setting}, we present a refined bound that addresses this exact issue.

\begin{figure}
    \begin{centering}
        \includegraphics[width=0.5\textwidth]{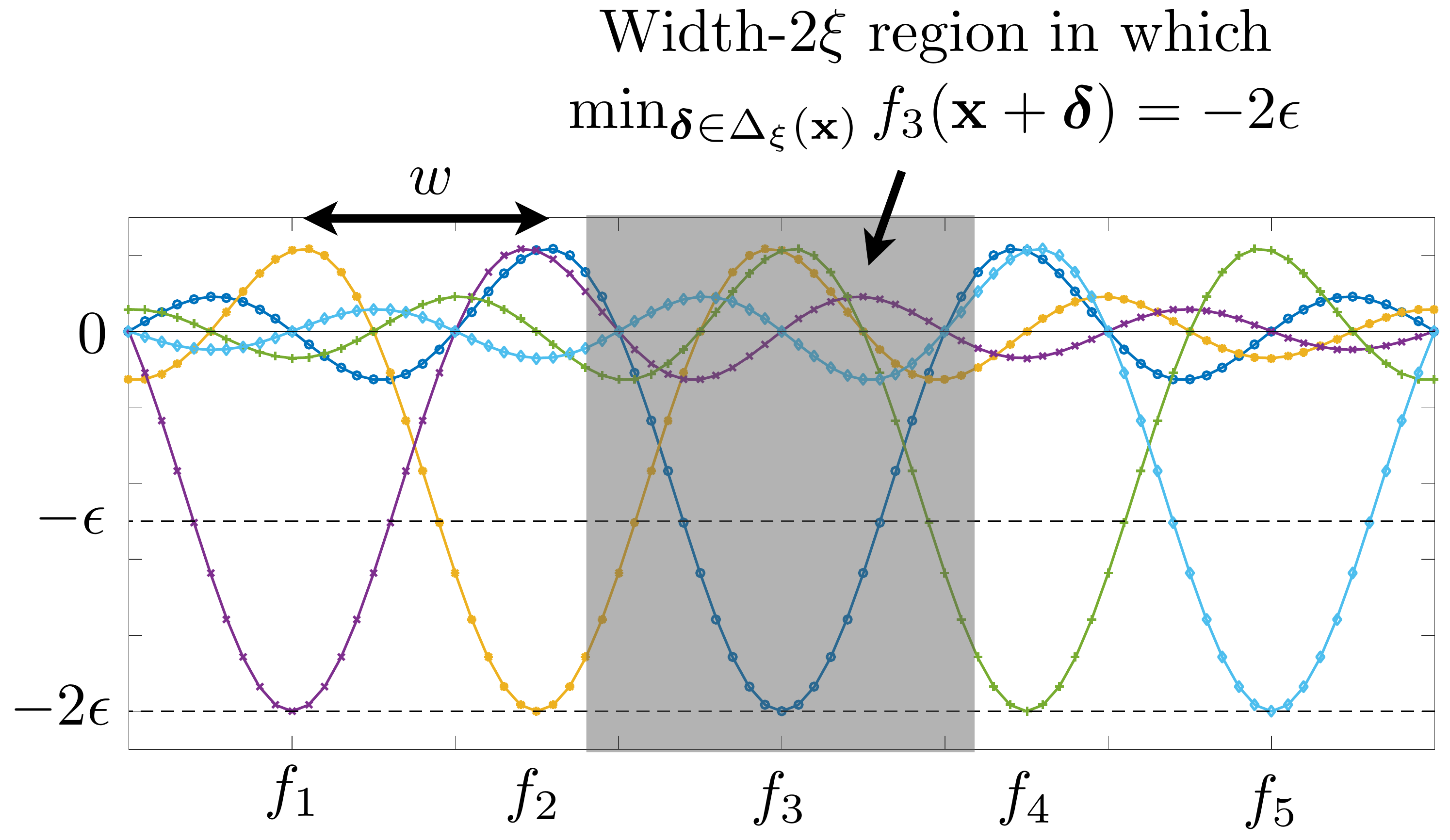}
        \par
    \end{centering}
    
    \caption{Illustration of functions $f_1,\dotsc,f_5$ equal to a common function shifted by various multiples of a given parameter $w$.  In the $\xi$-stable setting, there is a wide region (shown in gray for the dark blue curve $f_3$) within which the perturbed function value equals $-2\epsilon$. \label{fig:func_class_robust}}
\end{figure}

\subsection{Further Existing Upper Bounds for the Mat\'ern Kernel} \label{sec:improved_matern}

When comparing the lower bounds from \cite{Sca17a} with the upper bounds from \cite{Sri09}, the gaps are relatively small for the SE kernel, e.g., $\Otil\big( \sqrt{T (\log T)^{2d}} \big)$ \cite{Sri09} vs.~$\Omega\big( \sqrt{T (\log T)^{d/2}} \big)$ \cite{Sca17a} for the cumulative regret.\footnote{The published version of \cite{Sca17a} mistakenly omits the division by two in the exponent; see \url{https://arxiv.org/abs/1706.00090v3} for a corrected version.} In contrast, the gaps are more significant for the Mat\'ern kernel, e.g., $\Otil\big(T^{\frac{1}{2}\cdot\frac{2\nu+3d(d+1)}{2\nu+d(d+1)}}\big)$ \cite{Sri09} vs.~$\Otil\big(T^{\frac{\nu+d}{2\nu+d}}\big)$ \cite{Sca17a}, with the former in fact failing to be sub-linear in $T$ unless $2\nu-d(d+1)>0$.  In the following, we outline some more recent results and observations that significantly close these gaps for the Mat\'ern kernel. We focus our discussion on the cumulative regret (with constant values of $B$ and $\sigma^2$), but except where stated otherwise, similar observations apply in the case of simple regret.

Recently, \cite{Jan20} gave the first practical algorithm (referred to as $\pi$-GP-UCB) to provably attain sublinear regret for all $\nu > 1$ and $d \ge 1$ in the RKHS setting.  Specifically, the regret bound is $\Otil\big( T^{\frac{d(2d+3)+2\nu}{d(2d+4)+4\nu}} \big)$. 
It was also pointed out in \cite{Jan20} that the SupKernelUCB algorithm of \cite{Val13} can be extended to continuous domains via a fine discretization; the dependence on the number of arms $N$ is logarithmic, so even a very fine discretization of the space with $\frac{1}{{\rm poly}(T)}$ spacing only amounts to a $\log N = O(d \log T)$ term.  With this approach, one attains a cumulative regret of $R_T = \Otil(\sqrt{T\gamma_T})$ (assuming $d$ to be constant); in contrast with the result in \cite{Sri09}, this yields $R_T = o(T)$ whenever $\gamma_T = o(T)$ (or more precisely, whenever $\gamma_T$ is sufficiently sublinear to overcome any hidden logarithmic factors).
Despite its significant theoretical value, SupKernelUCB has been observed to perform poorly in practice \cite{Cal19}.

Another recent work \cite{She20} gave an algorithm whose regret bounds further improve on those of \cite{Jan20}.  The algorithm is again impractical due to the large constant factors, though a practical heuristic is also given.  Unlike the other works outlined above, the simple regret and cumulative regret behave very differently, due to the exploratory nature of the algorithm.  Defining $\Ic_0 = (0,1]$, $\Ic_1 = \big(1,\frac{d(d+1)}{2}\big]$, $\Ic_2 = \big(\frac{d(d+1)}{2},\frac{d^2+5d+12}{4}\big]$, and $\Ic_3 = (0,\infty) \setminus (\Ic_0 \cup \Ic_1 \cup \Ic_2)$, the simple regret bounds of \cite{She20} are summarized as follows:
\begin{itemize}
    \item For $\nu \in \Ic_0 \cup \Ic_1$ (i.e., $\nu \le \frac{d(d+1)}{2}$), one has $r(\xv^{(T)}) = \Otil\big( T^{\frac{-\nu}{2\nu + d}} \big)$, which matches the lower bound of \cite{Sca17a};
    \item For $\nu \in \Ic_2$ (which is only possible for $d \le 5$), one has $r(\xv^{(T)}) = \Otil\big( T^{\frac{-1}{d + 2}} \big)$;
    \item For $\nu \in \Ic_3$, one has $r(\xv^{(T)}) = \Otil\big( T^{-\frac{1}{2} + \frac{d(d+3)}{4\nu+d(d+5)}} \big)$.
\end{itemize}
In addition, the cumulative regret bounds of \cite{She20} are summarized as follows:
\begin{itemize}
    \item For $\nu \in \Ic_0$ (i.e., $\nu \le 1$), one has $R_T = \Otil( T^{\frac{\nu+d}{2\nu + d}} )$, which matches the lower bound of \cite{Sca17a}.
    \item For $\nu \in \Ic_1 \cup \Ic_2$, one has $R_T = \Otil( T^{\frac{d+1}{d+2}} )$;
    \item For $\nu \in \Ic_3$, one has $R_T = \Otil\big( T^{\frac{1}{2} + \frac{d(d+3)}{4\nu+d(d+5)}} \big)$.
\end{itemize}
Note that all of the preceding bounds are high-probability bounds (e.g., holding with probability $0.99$).

Finally, in a very recent work \cite{Vak20a}, improved bounds on the information gain $\gamma_T$ were given, in particular yielding $\gamma_T= \Otil\big(T^{\frac{d}{2\nu + d}}\big)$ for the Mat\'ern kernel with $\nu > \frac{1}{2}$.  When combined with the above-mentioned $R_T = \Otil(\sqrt{T\gamma_T})$ upper bound, this in fact yields matching upper and lower regret bounds for the Mat\'ern kernel, up to logarithmic factors.  

With the above outline in place, we are now in a position to explain the Mat\'ern kernel upper bounds shown in Table \ref{tbl:standard}:
\begin{itemize}
    \item For the standard setting, we first apply the above-mentioned upper bound $R_T = \Otil(\sqrt{T\gamma_T})$ for SupKernelUCB, which also has a multiplicative $\sqrt{\log\frac{1}{\delta}}$ dependence on the target error probability $\delta$ \cite{Val13}.  Substituting $\gamma_T= \Otil\big(T^{\frac{d}{2\nu + d}}\big)$ gives the desired bound, $R_T = O^{*}\big(T^{\frac{\nu+d}{2\nu+d}} \sqrt{\log\frac{1}{\delta}}\big)$.
    \item For the setting of corrupted samples, we substitute $\gamma_T= \Otil\big(T^{\frac{d}{2\nu + d}}\big)$ and $\delta = \Theta(1)$ into Theorem \ref{thm:known_C_ub}.  Notice that the term containing $C$ is only multiplied by $\sqrt{\gamma_T}$, rather than $\gamma_T$.
    \item For the setting of a corrupted final point, we first substitute $\gamma_T= \Otil\big(T^{\frac{d}{2\nu + d}}\big)$ into Theorem \ref{thm:upper}.  We then note that $\beta_T = O(\max\{ \gamma_T, \log\frac{1}{\delta} \})$, and treat two cases separately depending on which term attains the maximum. The case that $\beta_T = O(\gamma_T)$ leads to the term $O^{*}\big(\big(\frac{1}{\epsilon}\big)^{\frac{2(2\nu+d)}{2\nu-d}}\big)$ (and requires $d < 2 \nu$ to obtain a non-vacuous statement in \eqref{eq:adv_rob_general}), and the case that $\beta_T = O\big( \log\frac{1}{\delta} \big)$ leads to the term $O^{*}\big(\big(\frac{\log\frac{1}{\delta}}{\epsilon^{2}}\big)^{1+\frac{d}{2\nu}}\big)$.
\end{itemize}
Finally, we briefly note that we could similarly slightly improve the SE kernel upper bounds containing $\delta$ in Table \ref{tbl:standard} by treating two cases separately similarly to last item above; however, for the SE kernel this only amounts to minor differences.  Specifically, the standard cumulative regret from \cite{Cho17} is $O^{*}\big(\sqrt{T(\log T)^{2d} + T (\log T)^d \log\frac{1}{\delta}} \big)$, and the time to $\epsilon$-optimality from \cite{Bog18} is $O^{*}\big(\frac{1}{\epsilon^{2}}\big(\log\frac{1}{\epsilon}\big)^{2d} + \frac{1}{\epsilon^{2}}\big(\log\frac{1}{\epsilon}\big)^{d} \log\frac{1}{\delta}\big)$.

\subsection{Other Settings} \label{sec:other}


The following works are more distinct from ours, so we only give a brief outline:
\begin{itemize}
    \item The noiseless setting with RKHS functions was studied in \cite{Bul11,Lyu19,Vak20}.  In particular, \cite{Bul11} characterized the minimax-optimal simple regret for the Mat\'ern kernel, showing that $O\big( \big(\frac{1}{\epsilon}\big)^{d/\nu} \big)$ is the optimal time to $\epsilon$-optimality.  The papers \cite{Lyu19,Vak20} also considered the cumulative regret, which can be considerably smaller compared to the noisy setting (e.g., \cite{Vak20} shows that $O(1)$ cumulative regret is possible when $\nu > d$).
    \item A counterpart to the non-Bayesian RKHS setting is the Bayesian setting, in which $f$ is assumed to be randomly drawn from a Gaussian process.  Regret bounds for such settings have also been extensively studied, and the regret can again be much smaller in the noiseless setting \cite{Gru10,Def12,Kaw15,Gri18}.  In the noisy setting, the upper bounds of \cite{Sri09} are dictated by $\sqrt{T \gamma_T}$, in contrast with the higher $\sqrt{T \gamma_T^2}$ term in the RKHS setting.  Further improvements over \cite{Sri09} are given in \cite{She17,Sca18a,Wan20}, including broad scenarios in which the order-optimal scaling of the cumulative regret is roughly $\sqrt T$.
    \item Follow-up works to \cite{Bog18} include (i) investigations of mixed strategies \cite{Ses20} with an average function value requirement, and (ii) distributional robustness in place of the simple perturbation model \cite{Kir20,Ngu20}.  In addition, earlier works considered related notions of {\em input noise} in Bayesian optimization \cite{Nog16,Bel17,Dai17}.  Other robustness notions in GP optimization include outliers \cite{Mar18}, batch settings with failed experiments \cite{Bog18a}, and heavy-tailed noise \cite{Cho19}.  The latter of these provides lower bounds adapted from the techniques of \cite{Sca17a}, indicating that our alternative techniques based on Lemma \ref{lem:relating} could also be of interest in such a setting.
\end{itemize}

\section{Proof of Theorem \ref{thm:corr_lower} (Corrupted Samples)} \label{app:pf_corr}

The analysis re-uses some aspects of the proof of Theorem \ref{thm:cumul_lb_new} for the standard setting, given in Section \ref{sec:proofs}.  We consider the function class $\Fc = \{f_1,\dotsc,f_M\}$ from Section \ref{sec:prelim}, with a parameter $\epsilon > 0$ to be chosen later.  As mentioned in Section \ref{sec:setup_corr}, for a deterministic algorithm, the adversary knows which action is played each round.  Hence, we can consider an adversary that trivially pushes the function value $f(\xv_t)$ down to zero, at a cost of $|c_t(\xv_t)| = f(\xv_t)$, until its budget does not allow doing so.  When the remaining budget does not allow pushing the function value down to zero, the adversary pushes the value as close to zero as possible, after which its budget is exhausted and no further corruptions occur.

Since we are considering the noiseless setting, the player simply observes $y_1 = y_2 = \dotsc = y_t = 0$ for any time $t$ before the adversary exhausts its budget.  In particular, if the adversary still has not exhausted its budget by time $T$, then we have $y_1 = y_2 = \dotsc = y_T = 0$.  Intuitively, in this case, since the player has not observed any function values, it cannot know where the function peak is.  Since any sample away from the peak incurs regret $\Theta(\epsilon)$ by construction, this leads to $\Omega(T\epsilon)$ regret.  We note that ideas with similar intuition have also appeared in simpler robust bandit problems, including the finite-arm setting \citep[Sec.~5]{Lyk18} and the case of linear rewards \cite{Bog20a}.

To make this intuition precise, we first introduce the following terminology:  For a given set of sampled points $\xv_1,\dotsc,\xv_T$ up to time $T$, define a given function $\tilde{f} \in \Fc$ to be {\em corruptible after time $T$} if $\sum_{t=1}^T |\tilde{f}(\xv_t)| < C$, and {\em non-corruptible after time $T$} otherwise.   Then, we have the following lemma.

\begin{lem} \label{lem:corr_budget}
    Suppose that  $T = \frac{\alpha CM}{\epsilon}$ for some sufficiently small constant $\alpha > 0$.  Then, under the preceding setup, for any set of sampled points $\xv_1,\dotsc,\xv_T$, there exists a set of $\frac{M}{2}$ functions among $\{f_j\}_{j=1}^M$ that are corruptible after time $T$, i.e., $\sum_{t=1}^T |f_j(\xv_t)| < C$.
\end{lem}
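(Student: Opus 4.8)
The plan is to bound the \emph{total} corruption budget demanded across the whole function class by a single averaging step, and then observe that this total is too small for more than half of the functions to exhaust a budget of $C$. Note first that the claim is deterministic: it asserts something about \emph{every} fixed realization $\xv_1,\dotsc,\xv_T$ of the sample path, so no probabilistic reasoning is needed. For each $j$, write $S_j = \sum_{t=1}^T |f_j(\xv_t)|$ for the budget the adversary would need to push $f_j$ down to zero along this path; by the definition given just above the lemma, $f_j$ is corruptible after time $T$ exactly when $S_j < C$.

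First I would swap the order of summation,
\begin{equation}
    \sum_{j=1}^M S_j = \sum_{t=1}^T \sum_{j=1}^M |f_j(\xv_t)|,
\end{equation}
and then control the inner sum for a single sampled point. Since $\{\Rc_j\}_{j=1}^M$ partitions the domain, each $\xv_t$ lies in exactly one region $\Rc_{i(t)}$, so $|f_j(\xv_t)| \le \max_{\xv \in \Rc_{i(t)}} |f_j(\xv)| = \vbar_j^{i(t)}$. Property (ii) of Lemma \ref{lem:vbar_sums} then gives $\sum_{j=1}^M |f_j(\xv_t)| \le \sum_{j=1}^M \vbar_j^{i(t)} \le c_0' \epsilon$ for a constant $c_0'$ depending only on $(d,l,\nu)$. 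Summing over the $T$ rounds and substituting $T = \frac{\alpha C M}{\epsilon}$ yields
\begin{equation}
    \sum_{j=1}^M S_j \le c_0' \epsilon T = c_0' \alpha C M.
\end{equation}

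Finally I would close with a counting (Markov-type) argument. Let $N_{\mathrm{bad}}$ denote the number of non-corruptible functions; each such $j$ contributes $S_j \ge C$ to the left-hand side, so $N_{\mathrm{bad}} \cdot C \le \sum_{j=1}^M S_j \le c_0' \alpha C M$, giving $N_{\mathrm{bad}} \le c_0' \alpha M$. Choosing $\alpha \le \frac{1}{2 c_0'}$ forces $N_{\mathrm{bad}} \le \frac{M}{2}$, so at least $M - N_{\mathrm{bad}} \ge \frac{M}{2}$ of the functions are corruptible, as claimed; this also pins down what ``sufficiently small $\alpha$'' means. I do not expect a genuine obstacle here: the only real content is the per-point overlap bound $\sum_{j=1}^M |f_j(\xv_t)| = O(\epsilon)$, which is precisely property (ii) of Lemma \ref{lem:vbar_sums} and reflects that the shifted bump functions have rapidly decaying, well-separated peaks so that their absolute values sum to a controlled $O(\epsilon)$ at any fixed location. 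The crucial conceptual point is recognizing that this makes the aggregate demand scale as $O(\epsilon T)$ rather than $O(M \epsilon T)$; everything else is routine.
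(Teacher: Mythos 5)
Your proof is correct and follows essentially the same route as the paper's: both bound the aggregate demand $\sum_{j} \sum_{t} |f_j(\xv_t)|$ by $O(\epsilon T) = O(\alpha C M)$ via property (ii) of Lemma \ref{lem:vbar_sums}, then apply a Markov-type counting argument to conclude that only a small fraction of the $M$ functions can require budget $\ge C$. Your version is marginally more explicit in two places --- spelling out the step $|f_j(\xv_t)| \le \vbar_j^{i(t)}$ that links pointwise values to the region maxima, and pinning down the admissible range $\alpha \le \frac{1}{2c_0'}$ --- but these are presentational refinements, not a different argument.
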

\begin{proof}
    We will upper bound the average of $\sum_{t=1}^T |f_j(\xv_t)|$ over all values $j \in \{1,\dotsc,M\}$, and then use Markov's inequality to establish the existence of the $\frac{M}{2}$ values of $j$ in the lemma statement.  First, for any {\em fixed} time index $t$ and point $\xv_t$, the second part of Lemma \ref{lem:vbar_sums} implies that
    \begin{equation}
        \sum_{m=1}^M |f_m(\xv_t)| = O(\epsilon).
    \end{equation}
    Renaming $m$ to $j$, summing both sides over $t =1,\dotsc,T$, and dividing both sides by $M$, it follows that
    \begin{equation}
        \frac{1}{M} \sum_{j=1}^M \bigg( \sum_{t=1}^T |f_j(\xv_t)| \bigg) = O\Big( \frac{T\epsilon}{M} \Big) = O(\alpha C) \le \frac{C}{4}, \label{eq:avg_M}
    \end{equation}
    where the first equality uses the assumption $T = \frac{\alpha CM}{\epsilon}$, and the second equality uses the assumption that $\alpha$ is sufficiently small.  

    As hinted above, interpreting the left-hand side of \eqref{eq:avg_M} as an average of $\sum_{t=1}^T |f_m(\xv_t)|$ with respect to $j$, Markov's inequality implies that we can only have $\sum_{t=1}^T |f_j(\xv_t)| \ge C$ for at most a $\frac{1}{4}$ fraction of the $j$ values in $\{1,\dotsc,M\}$.  Thus, at least $\frac{M}{2}$ of the functions give $\sum_{t=1}^T |f_j(\xv_t)| < C$, as desired.

\end{proof}

Since we are considering deterministic algorithms, we have that when observing $y_1=0$, $y_2=0$, and so on, any algorithm can only follow a fixed corresponding sequence $\xv_1$, $\xv_2$, and so on (until a non-zero $y_t$ value is observed).  However, Lemma \ref{lem:corr_budget} implies that no matter which such fixed sequence is chosen, there exist $\frac{M}{2}$ functions under which the adversary is able to continue corrupting $y_1=y_2=\dotsc=y_T = 0$ up until the final point $T$.  Since the function class is such as that any given point is $\epsilon$-optimal for at most one function, it follows that the algorithm can only attain $R_T = o(T\epsilon)$ for at most one of these $\frac{M}{2}$ functions; all of the others must incur $R_T = \Omega(T\epsilon)$.



To make the $\Omega(T \epsilon)$ lower bound explicit, we need to select $\epsilon$ as a function of $T$.  However, such a choice must be consistent with two assumptions already made in the above analysis: (i) $T = \frac{\alpha CM}{\epsilon}$ for some sufficiently small $\alpha = \Theta(1)$ in Lemma \ref{lem:corr_budget}, and (ii) the function class in Section \ref{sec:prelim} is such that $M$ satisfies \eqref{eq:M_se} (SE kernel) or \eqref{eq:M_matern} (Mat\'ern) kernel.  Handling the two kernels separately, we proceed as follows:
\begin{itemize}
    \item For the SE kernel, substituting \eqref{eq:M_se} into $T = \frac{\alpha CM}{\epsilon}$ yields $T = \Theta\big( \frac{C}{\epsilon} \big(\log\frac{1}{\epsilon}\big)^{d/2} \big)$, and using the assumptions $\Theta(1) \le C \le T^{1-\Omega(1)}$ and $d = \Theta(1)$, an inversion of this expression (detailed in Section \ref{sec:inversion} below) gives $\epsilon = \Theta\big( \frac{C}{T} (\log T)^{d/2} \big)$.  Hence, we have $R_T = \Omega(T\epsilon) = \Omega\big( C (\log T)^{d/2} \big)$.
    \item For the Mat\'ern kernel, substituting \eqref{eq:M_matern} into $T = \frac{\alpha CM}{\epsilon}$ yields $T = \Theta\big( \frac{C}{\epsilon} \big(\frac{1}{\epsilon}\big)^{d/\nu} \big)$, and inverting this gives $\epsilon = \Theta\big( \big(\frac{C}{T}\big)^{\frac{1}{1+d/\nu}} \big) = \Theta\big( \big(\frac{C}{T}\big)^{\frac{\nu}{d+\nu}} \big)$.  Hence, we have $R_T = \Omega(T\epsilon) = \Omega\big( C^{\frac{\nu}{d+\nu}} T^{\frac{d}{d+\nu}} \big)$.
\end{itemize}
This completes the proof of Theorem \ref{thm:corr_lower}.  

We remark that since this proof considers the noiseless setting (i..e, $\sigma^2 = 0$), it may be interesting to establish whether the arguments can be refined for the noisy setting in order to obtain an improved lower bound on $R_T$.

\subsection{Final Inversion Step for the SE Kernel} \label{sec:inversion}

We first note that the scaling $T = \Theta\big( \frac{C}{\epsilon} \big(\log\frac{1}{\epsilon}\big)^{d/2} \big)$ is equivalent to
\begin{equation}
    \frac{T}{C} = \Theta\bigg( \frac{1}{\epsilon} \Big(\log\frac{1}{\epsilon}\Big)^{d/2} \bigg). \label{eq:two_sides}
\end{equation}  
We proceed by taking the logarithm on both sides.  For the left-hand side, the assumption $\Theta(1) \le C \le T^{1-\Omega(1)}$ implies $\log \frac{T}{C} = \Theta(\log T)$.  In addition, the assumption $d = \Theta(1)$ implies that the logarithm of the right-hand side of \eqref{eq:two_sides} behaves as $\Theta\big(\log\frac{1}{\epsilon} + \frac{d}{2}\log\log\frac{1}{\epsilon}\big) = \Theta\big( \log\frac{1}{\epsilon} \big)$ (note that $\epsilon \in (0,1)$, so $\frac{1}{\epsilon} > 1$).  Hence, overall, taking the logarithm on both sides of \eqref{eq:two_sides} gives $\log\frac{1}{\epsilon} = \Theta(\log T)$.  Substituting this finding into \eqref{eq:two_sides} gives $\frac{T}{C} = \Theta\big( \frac{1}{\epsilon} (\log T )^{d/2}  \big)$, and re-arranging gives $\epsilon = \Theta\big( \frac{C}{T} (\log T)^{d/2} \big)$ as claimed.

\section{Proof of Theorem \ref{thm:lower_robust_new} (Corrupted Final Point)} \label{app:pf_adv}

We continue the proof following the intuition provided for the idealized function class in Section \ref{sec:adv_robust_setting}.

\subsection{Details for the Mat\'ern Kernel}

We seek to provide a function class that captures the essential properties of the idealized version, while ensuring that every function in the class has RKHS norm at most $B$ under the Mat\'ern kernel.  Recall that Theorem \ref{thm:lower_robust_new} assumes that $\frac{\epsilon}{B}$ is sufficiently small.

To construct a given function $f_m$ of the form in Figure \ref{fig:func_class_new_robust}, we will decompose it as
\begin{equation}
    f_m(\xv) = -c(\xv) + b(\xv) - s_m(\xv), \label{eq:f_decomp}
\end{equation}
where $c(\cdot)$ is a constant function equaling $-2\epsilon$ across the whole domain $[0,1]^d$, $b(\cdot)$ approximates the indicator function (scaled by $2\epsilon$) of being within a ball ($d \ge 2$) or interval ($d=1$) of diameter roughly $3\xi$ at the center of the domain (see below for details), and $s_m(\xv)$ is the narrow spike whose location is determined by $m \in \{1,\dotsc,M\}$ (whereas $s_0(\xv) = 0$ for all $\xv$).  We proceed by showing that suitable functions can be constructed having RKHS norm at most $\frac{B}{3}$ each, so that the triangle inequality applied to \eqref{eq:f_decomp} yields $\|f_m\|_k \le B$.  

For convenience, we first work with auxiliary functions centered at the origin, before shifting them to be centered at a suitable point in $[0,1]^d$.  

\begin{lem} \label{lem:circ_func}
    Let $k$ be the Mat\'ern-$\nu$ kernel, let $r > 0$ and $0 < w_0 \le \frac{r}{2}$ be fixed constants, and let $\epsilon > 0$ and $B > 0$ be such that $\frac{\epsilon}{B}$ is sufficiently small.  There exists a function $b_0(\xv)$ on $\RR^d$ satisfying (i) $b_0(\xv) = 2\epsilon$ whenever $\|\xv\|_2 \le r-w_0$; (ii) $b_0(\xv) = 0$ whenever $\|\xv\|_2 \ge r+w_0$; (iii) $b_0(\xv) \in [0,2\epsilon]$ whenever $r-w_0 \le \|\xv\|_2 \le r+w_0$; and (iv) $\|b_0\|_k \le \frac{B}{3}$.
\end{lem}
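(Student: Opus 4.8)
The plan is to obtain $b_0$ by \emph{mollifying} the scaled indicator of the ball of radius $r$, so that the flat interior, the smooth transition shell, and the zero exterior all arise automatically. Concretely, let $\psi(\xv) = h(\xv) / \int_{\RR^d} h(\yv)\,d\yv$ be the normalized version of the bump function $h$ from Lemma~\ref{lem:simpler_function} (so $\psi \ge 0$, $\supp\psi \subseteq \{\|\xv\|_2 \le 1\}$, and $\int_{\RR^d}\psi = 1$), and set $\psi_{w_0}(\xv) = w_0^{-d}\psi(\xv/w_0)$, a nonnegative $C^\infty$ mollifier supported in the $\ell_2$-ball of radius $w_0$ with unit integral. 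I would then define
\[
    b_0(\xv) = 2\epsilon\,\big( \openone\{\|\cdot\|_2 \le r\} * \psi_{w_0} \big)(\xv),
\]
which, being the convolution of a compactly-supported $L^1$ function with a $C_c^\infty$ kernel, is itself $C^\infty$ with compact support.

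Properties (i)--(iii) are then immediate from the elementary behavior of convolution against a nonnegative unit-mass kernel of radius $w_0$. If $\|\xv\|_2 \le r - w_0$, the entire support of $\psi_{w_0}(\xv - \cdot)$ lies inside the ball, so the convolution equals $\int \psi_{w_0} = 1$ and $b_0(\xv) = 2\epsilon$; if $\|\xv\|_2 \ge r + w_0$, the support lies entirely outside the ball and $b_0(\xv) = 0$; and for $r - w_0 \le \|\xv\|_2 \le r + w_0$ the convolution is a weighted average of values in $\{0,1\}$, hence lies in $[0,1]$, giving $b_0(\xv) \in [0,2\epsilon]$. The hypothesis $w_0 \le \frac{r}{2}$ only serves to guarantee that the plateau region $\{\|\xv\|_2 \le r - w_0\}$ is nonempty.

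The crux is the norm bound (iv). I would invoke the standard spectral characterization of the Mat\'ern-$\nu$ RKHS on $\RR^d$: its norm is equivalent to the Sobolev norm of order $s := \nu + \frac{d}{2}$, since the spectral density satisfies $\hat k(\omega) \asymp (1 + \|\omega\|_2^2)^{-(\nu + d/2)}$ and hence $\|f\|_k^2 = \int_{\RR^d} |\hat f(\omega)|^2 / \hat k(\omega)\,d\omega \asymp \int_{\RR^d} |\hat f(\omega)|^2 (1 + \|\omega\|_2^2)^{s}\,d\omega$, with implied constants depending only on $(d,l,\nu)$. Because $b_0$ is $C^\infty$ with compact support it lies in every Sobolev space, so this integral is finite. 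The key observation is that $b_0 = 2\epsilon\,\phi$, where $\phi := \openone\{\|\cdot\|_2 \le r\} * \psi_{w_0}$ is a \emph{fixed} function, independent of both $\epsilon$ and $B$ (depending only on $r$, $w_0$, and $d$). Consequently $\|b_0\|_k = 2\epsilon\,\|\phi\|_k = C\epsilon$ for a finite constant $C = C(r,w_0,d,\nu,l)$, and the requirement $\|b_0\|_k \le \frac{B}{3}$ reduces to $\frac{\epsilon}{B} \le \frac{1}{3C}$, which holds whenever $\frac{\epsilon}{B}$ is sufficiently small, exactly as assumed.

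The main obstacle is therefore not the construction but justifying the Sobolev equivalence cleanly and confirming that $\|\phi\|_k$ is genuinely finite and independent of $\epsilon$ and $B$; I would cite the spectral-density/Sobolev correspondence for the Mat\'ern kernel (as used implicitly in Lemma~\ref{lem:simpler_function}) rather than reprove it. One minor point to verify afterwards is that $\phi$, and hence $b_0$, can be shifted to be centered at an arbitrary point of $[0,1]^d$ without changing its RKHS norm, which follows from the translation-invariance of the Mat\'ern kernel; this is what allows the subsequent decomposition \eqref{eq:f_decomp} to place $b$ at the center of the domain.
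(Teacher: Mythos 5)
Your proof is correct, and your construction is exactly the paper's: the paper also forms the plateau function by convolving the indicator $\openone\{\|\xv\|_2 \le r\}$ with a $w_0$-scaled copy of the bump function $h$ from Lemma~\ref{lem:simpler_function}, and properties (i)--(iii) are read off from the convolution in the same way (your normalization of the mollifier to unit mass is a cosmetic simplification that makes the plateau equal $2\epsilon$ directly, where the paper rescales after the fact). Where you genuinely diverge is the norm bound (iv). The paper stays inside its own toolkit: writing $\tilde b_0 = g_0 \star {\rm ball}$, it applies the Fourier formula of Lemma~\ref{lem:rkhs_norm}, uses the convolution theorem to factor $|G_0(\bxi)|^2\,|{\rm BALL}(\bxi)|^2$, bounds $|{\rm BALL}(\bxi)|$ by the constant $\int_{\|\xv\|_2\le r}{\rm d}\xv$, and concludes $\|\tilde b_0\|_k \le O(1)\,\|g_0\|_k < \infty$, with the final $O(\epsilon)$ scaling argument identical to yours. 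You instead discard the convolution structure entirely and argue that $\phi$ is $C^\infty$ with compact support, hence lies in $H^{\nu+d/2}$, which is norm-equivalent to the Mat\'ern RKHS. Your route is more modular (any smooth compactly supported plateau function would do), but it is intrinsically Mat\'ern-specific: the Sobolev equivalence fails for the SE kernel, under which compactly supported smooth functions generically have \emph{infinite} RKHS norm (as the paper notes in Section~\ref{sec:simplified_matern}). The paper's convolution-based argument, by contrast, carries over to the SE kernel essentially unchanged once the bump is replaced by the ``approximate bump'' of Section~\ref{sec:prelim} --- which is precisely how Appendix~\ref{app:pf_adv} handles the SE case --- so the paper's extra bookkeeping buys reusability that your argument would not provide.
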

\begin{proof}
    Define the auxiliary ``ball'' function with radius $r > 0$ as
    \begin{equation}
        {\rm ball}(\xv) = \openone\{ \|\xv\|^2 \le r \},
    \end{equation}
    and for fixed $w_0 > 0$, let $g_0(\xv) = h\big( \frac{\xv}{w_0} \big)$ be a scaled version of the bump function $h(\cdot)$ from Lemma \ref{lem:simpler_function}, and define
    \begin{equation}
        \tilde{b}_0(\xv) = (g_0 \star {\rm ball})(\xv),
    \end{equation}
    where $\star$ denotes the convolution operation.  By the definition of convolution and the fact that $g_0(\xv)$ is non-zero only for $\|\xv\|_2 \le w_0$, we have the following:
    \begin{itemize}
        \item For $\xv$ satisfying $\|\xv\|_2 \ge r+w_0$, we have $\tilde{b}_0(\xv) = 0$;
        \item For $\xv$ satisfying $\|\xv\|_2 \le r-w_0$, we have $\tilde{b}_0(\xv) = \int_{\RR^d} g_0(\xv){\rm d\xv}$, which is a constant depending on $w_0$.
        \item For $\xv$ satisfying $r-w_0 \le \|\xv\|_2 \le r+w_0$, we have that $\tilde{b}_0(\xv)$ equals some value in between the two constants given in the previous two dot points.
    \end{itemize}
    We proceed by showing that $\|\tilde{b}_0\|_k < \infty$ under the Mat\'ern kernel.  We know from the proof of Lemma \ref{lem:simpler_function} that $\|g_0\|_k$ is a finite constant depending on $w_0$.  As for ${\rm ball}(\cdot)$, it suffices for our purposes to note that its Fourier transform is bounded in absolute value (point-wise) by a constant depending on $r$, which is seen by writing
    \begin{equation}
        \Big| \int_{\RR^d} {\rm ball}(\xv) e^{\imath \langle \xv, \bxi \rangle } {\rm d}\xv \Big| \le \int_{\|\xv\|_2 \le r} {\rm d}\xv < \infty. \label{eq:bound_const0}
    \end{equation}
    Then, using the formula for RKHS norm in Lemma \ref{lem:rkhs_norm}, and using capital letters to denote the Fourier transforms of the respective spatial functions, we have
    \begin{align}
        \|\tilde{b}_0\|_{k} &= \int \frac{ |G_0(\bxi)|^2 \cdot |{\rm BALL}(\bxi)|^2  }{ K(\bxi) } d\bxi \label{eq:conv_prod} \\
            &\le O(1) \cdot \int \frac{ |G_0(\bxi)|^2 }{ K(\bxi) } d\bxi = O(1) \cdot \|g_0\|_k < \infty, \label{eq:bound_const}
    \end{align}
    where \eqref{eq:conv_prod} uses the fact that convolution in the spatial domain corresponds to multiplication in the Fourier domain, and \eqref{eq:bound_const} uses \eqref{eq:bound_const0}.

    Finally, for any fixed $r$ and $w_0$, we define $b_0(\xv)$ to be a constant times $\tilde{b}_0(\xv)$, with the constant chosen so that the maximum function value is $2\epsilon$.  Since $\|\tilde{b}_0\|_k = O(1)$ and we scale by $O(\epsilon)$, it follows that $\|b_0\|_k = O(\epsilon)$, and thus $\|b_0\|_k \le \frac{B}{3}$ due to the assumption that $\frac{\epsilon}{B}$ is sufficiently small.  The remaining properties of $b_0(\cdot)$ in the lemma statement are directly inherited from those of $\tilde{b}_0(\cdot)$ above.
\end{proof}

We now construct the functions in \eqref{eq:f_decomp} as follows for some arbitrarily small constant $\eta > 0$:
\begin{itemize}
    \item For $c(\xv)$, let $r = \sqrt{d} + \eta$ and $w_0 = \eta$, so that $c(\xv) = 1$ for all $\xv \in [0,1]^d$;
    \item Let $b(\xv)$ be a shifted version (to be centered at $\big(\frac{1}{2},\dotsc,\frac{1}{2}\big)$) of the ball function $b_0(\xv)$ with $r = (3 - \eta)\xi$ and $w_0 = \eta \xi$.
    \item For $m=1,\dotsc,M$, let $s_m(\xv)$ be a shifted version of the spike formed in Lemma \ref{lem:simpler_function}, with RKHS norm $\frac{B}{3}$ in place of $B$.
\end{itemize}
While the radius of the ``plain'' region in Figure \ref{fig:func_class_new_robust} (i.e., the region where points may have function value zero even after a worst-case perturbation) is not exactly $\xi$ due to the ``leeway'' introduced by $\eta$, it is arbitrarily close when $\eta$ is sufficiently small (e.g., $0.99\xi$). 

Using the assumption that $\frac{\epsilon}{B}$ is sufficiently small but $\xi$ is constant, the choice of $w$ in Lemma \ref{lem:simpler_function} means that we can assume $w$ to be much smaller than $\xi$ (e.g., a $0.1$ fraction or less).  As a result, a standard packing argument \citep[Sec.~13.2.3]{DuchiNotes} reveals that we can ``pack'' at least $M = \big( \frac{c'_0 \xi}{w} \big)^d$ bump functions into the sphere of radius roughly $\xi$ (for some absolute constant $c'_0$), while ensuring that the supports of these functions are non-overlapping.  Since $\xi$ is assumed to be constant, this choice of $M$ matches \eqref{eq:Mw} up to a possible change in the value of $c'_0$, and we conclude that the scaling \eqref{eq:M_matern} applies with a possibly different choice of $c_3$.

With this fact in place, we can proceed in the same way as Section \ref{sec:simplified_matern}.  The ``base function'' and ``auxiliary'' function are chosen as $f(\xv) = f_0(\xv)$ and $f'(\xv) = f_{m'}(\xv)$ (for some $m'=1,\dotsc,M$), so that their difference is $s_{m'}(\xv)$ (since $s_0(\xv) = 0$).  Using \eqref{eq:std_alt_pf1} with the substitution $v_{m'}^{m'} \leftarrow 4\epsilon$ (i.e., the maximal value of $s_{m'}(\xv)$) and $N_{m'}(\tau)$ redefined to be the number of samples within the support of $s_{m'}(\xv)$, we obtain
\begin{equation}
    \EE_{m}[N_{m'}(\tau)] \cdot (4\epsilon)^2 \ge \frac{\sigma^2}{2} \log\frac{1}{2.4 \delta},
\end{equation}
and summing over $m' =1,\dotsc,M$ gives
\begin{equation}
    T \ge \frac{\sigma^2 M}{32\epsilon^2} \log\frac{1}{2.4\delta}.
\end{equation}
Substituting the scaling on $M$ in \eqref{eq:M_matern} (which we established also holds here) completes the proof.

\subsection{Overview of Details for the SE Kernel}

For the SE kernel, we follow the same argument as the Mat\'ern kernel, but wherever the bump function from Lemma \ref{lem:simpler_function} is used, we replace it by the ``approximate bump'' function from Section \ref{sec:prelim}.  This creates a few more technical nuisances, but the argument is essentially the same, so we only outline the differences:
\begin{itemize}
    \item In the analog of Lemma \ref{lem:circ_func}, the function value is not exactly constant in the ``inner sphere'', and is not exactly zero outside the ``outer sphere'', but it is arbitrarily close (e.g., at least $0.99$ times the maximum in the former case, and below $0.01$ times the maximum in the former case).
    \item In specifying the $M$ functions, we no longer have disjoint supports, but we instead place the centers on a uniform grid, as was done in \cite{Sca17a,Bog18} and used in Section \ref{sec:prelim}.  Inside the ``plain'' sphere of radius $0.99\xi$ (see Figure \ref{fig:func_class_new_robust}), we can fit a cube of side-length $\frac{0.99\xi}{\sqrt d}$, and hence, the uniform grid still leads to $M$ of the form $M = \big( \frac{c'_0 \xi}{w} \big)^d$, albeit with a smaller constant $c'_0$ depending on $d$.
    \item Once the function class with the grid-like structure is established, instead of following the simplified steps for the Mat\'ern kernel in Section \ref{sec:simplified_matern}, we follow the slightly more involved (but still simple) steps from Section \ref{sec:ana_std_simple}.
\end{itemize}

\section{Further Auxiliary Lemmas} \label{app:further}

The following lemma states a well-known expression for the RKHS norm in terms of the Fourier transforms of the function and kernel.

\begin{lem} \emph{\citep[Sec.~1.5]{Aro50}} \label{lem:rkhs_norm}
    Consider an RKHS $\Hc$ for functions on $\RR^d$, corresponding to a kernel of the form $k(\xv,\xv') = k(r_{\xv,\xv'})$ with $r_{\xv,\xv'} = \xv - \xv'$, and let $K(\xi)$ be the $d$-dimensional Fourier transform of $k(\cdot)$.  Then for any $f \in \Hc$ with Fourier transform $F(\xi)$, we have
    \begin{equation}
        \|f\|_{\Hc} = \int \frac{ |F(\xi)|^2 }{ K(\xi) } d\xi.
    \end{equation}
    In addition, if $\Hc(D)$ is an RKHS on a compact subset $D \subseteq \RR^d$ with the same kernel as $\Hc$, then we have for any $f \in \Hc(D)$ that
    \begin{equation}
        \|f\|_{\Hc(D)} = \inf_{g} \|g\|_{\Hc(\RR^d)},
    \end{equation}
    where the infimum is over all functions $g \in \Hc(\RR^d)$ that agree with $f$ when restricted to $D$.
\end{lem}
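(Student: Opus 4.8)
The plan is to recognize this as the classical spectral characterization of the RKHS of a stationary kernel, and to verify it directly through the reproducing property rather than re-deriving it from first principles. Since $k(\xv,\xv') = k(r_{\xv,\xv'})$ is a valid positive-definite kernel, Bochner's theorem guarantees that its Fourier transform $K(\xi)$ is real and non-negative; for the kernels of interest (SE and Mat\'ern) one has $K(\xi) > 0$ everywhere, so the integrand $|F(\xi)|^2 / K(\xi)$ is well-defined. I would first introduce the candidate space
\[
    \Hc_0 = \Big\{ f \,:\, \int \frac{|F(\xi)|^2}{K(\xi)}\, d\xi < \infty \Big\},
\]
equipped with the inner product $\langle f, g\rangle_{\Hc_0} = \int \frac{F(\xi)\overline{G(\xi)}}{K(\xi)}\, d\xi$, and observe that $\Hc_0$ is complete because the Fourier transform makes it isometric to the weighted space $L^2(K^{-1}\, d\xi)$.

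The key step is to check that $\Hc_0$ has reproducing kernel $k$, which then identifies it with $\Hc$ by the uniqueness part of the Moore--Aronszajn theorem. For fixed $\xv'$, the function $\xv \mapsto k(\xv,\xv') = k(\xv - \xv')$ is a translate of $k(\cdot)$, so its Fourier transform in $\xv$ is $K(\xi)\, e^{-\imath \langle \xi, \xv'\rangle}$. Substituting into the inner product gives
\[
    \langle f, k(\cdot,\xv')\rangle_{\Hc_0} = \int \frac{F(\xi)\, \overline{K(\xi)\, e^{-\imath\langle\xi,\xv'\rangle}}}{K(\xi)}\, d\xi = \int F(\xi)\, e^{\imath \langle \xi, \xv'\rangle}\, d\xi = f(\xv'),
\]
where the last equality is Fourier inversion. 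This establishes the reproducing property, and taking $g = f$ recovers the norm formula as stated (read with the square on the left-hand side, per the standard convention), so $\Hc_0 = \Hc$.

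For the second claim, I would invoke the general restriction principle for reproducing kernels: if $D \subseteq \RR^d$ carries the same kernel $k$, then $\Hc(D)$ consists precisely of the restrictions to $D$ of functions in $\Hc(\RR^d)$, with the norm of $f \in \Hc(D)$ equal to the infimum of $\|g\|_{\Hc(\RR^d)}$ over all extensions $g$ agreeing with $f$ on $D$. This follows by decomposing $\Hc(\RR^d) = \Nc \oplus \Nc^{\perp}$, where $\Nc$ is the closed subspace of functions vanishing on $D$; the minimum-norm extension of $f$ is its orthogonal projection onto $\Nc^{\perp}$, and one verifies that the resulting space on $D$ has reproducing kernel $k|_{D \times D}$.

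The main obstacle is purely analytic: justifying the Fourier-domain manipulations rigorously. In particular, one must ensure that $k(\cdot,\xv')$ has an honest (rather than merely distributional) Fourier transform and that the inversion step is valid --- a delicate point for the Mat\'ern kernel, whose spectral density decays only polynomially --- and must confirm completeness of $\Hc_0$ under the weighted inner product. The reproducing-property computation itself is short; essentially all the care lies in the integrability conditions and in invoking Bochner's theorem and the restriction theorem with the correct hypotheses, which is precisely why it is natural to attribute these standard facts to \cite{Aro50}.
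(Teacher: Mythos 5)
The paper gives no proof of this lemma at all --- it is stated as a classical fact, with the citation to \cite{Aro50} standing in for the argument --- so your reconstruction is being compared against a citation rather than an internal proof. That said, your proof is correct and is precisely the standard argument that the citation points to: defining $\Hc_0$ as the weighted-$L^2$ space, establishing completeness via the Fourier isometry, verifying the reproducing property (using Bochner's theorem for $K \ge 0$, and $\int K\,d\xi < \infty$ so that $k(\cdot,\xv') \in \Hc_0$), and then invoking Moore--Aronszajn uniqueness; the restriction statement via the decomposition $\Hc(\RR^d) = \Nc \oplus \Nc^{\perp}$, with $\Nc$ the closed subspace of functions vanishing on $D$, is exactly Aronszajn's restriction theorem from the cited Sec.~1.5. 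Three small points. (i) Your inversion step $\int F(\xi)\, e^{\imath\langle\xi,\xv'\rangle}\,d\xi = f(\xv')$ is constant-free only under the $e^{\pm 2\pi\imath\langle\cdot,\cdot\rangle}$ (or unitary) Fourier convention; otherwise a $(2\pi)^{-d}$ factor appears and must be absorbed into the inner product. The paper's statement carries the same normalization ambiguity, so this is harmless, but it should be fixed to one convention. (ii) The analytic delicacy you flag around inversion is resolved cleanly: $f \in \Hc_0$ forces $F \in L^1$ by Cauchy--Schwarz, since $\int |F| \le \big(\int |F|^2/K\big)^{1/2}\big(\int K\big)^{1/2} < \infty$; this simultaneously shows that point evaluations are bounded and handles the Mat\'ern kernel despite its polynomially decaying spectral density. (iii) You are right that the norm identity should read $\|f\|_{\Hc}^2$ on the left-hand side; the square is indeed missing in the paper's statement, and your reading is the correct one.
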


While the following lemma is not used in this paper, it is stated because it is a key component of the analysis in \cite{Sca17a}, and can thus be contrasted with the key lemma of our analysis (Lemma \ref{lem:relating}).

\begin{lem} \emph{\cite{Aue95a}} \label{lem:auer}
    For any function $a(\yv)$ taking values in a bounded range $[0,A]$, we have
    \begin{align}
    \big| \EE_m[a(\yv)] - \EE_0[a(\yv)]\big| 
        &\le A\, d_{\rm TV}(P_0, P_m) \label{eq:auer_bound} \\
        &\le A\, \sqrt{  D(P_0 \| P_m) }, \label{eq:auer_bound2}
    \end{align}
    where $d_{\rm TV}(P_0,P_m) = \frac{1}{2} \int_{\RR^T} |P_0(\yv) - P_m(\yv)| \,{\rm d}\yv$ is the total variation distance.
\end{lem}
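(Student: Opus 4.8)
The plan is to prove the two inequalities separately, since the first, $\big|\EE_m[a(\yv)] - \EE_0[a(\yv)]\big| \le A\, d_{\rm TV}(P_0,P_m)$, is an elementary change-of-measure bound for bounded test functions, while the second, $d_{\rm TV}(P_0,P_m) \le \sqrt{D(P_0\|P_m)}$, is a (slightly weakened) restatement of Pinsker's inequality. Nothing in the argument uses that $P_0,P_m$ are the specific output densities over $\RR^T$, so I would establish the statement for arbitrary probability densities and then apply it verbatim.

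For the first inequality, I would write the difference as a single integral against the signed density, $\EE_m[a] - \EE_0[a] = \int_{\RR^T} a(\yv)\big(P_m(\yv)-P_0(\yv)\big)\,{\rm d}\yv$, and then use the one observation that carries content: since $P_0$ and $P_m$ are both probability densities, $\int_{\RR^T}\big(P_m - P_0\big)\,{\rm d}\yv = 0$, so subtracting the constant $\tfrac{A}{2}$ from $a(\yv)$ leaves the value of the integral unchanged. After this recentering the integrand $a(\yv)-\tfrac{A}{2}$ is bounded in magnitude by $\tfrac{A}{2}$, and the triangle inequality for integrals yields $\big|\EE_m[a]-\EE_0[a]\big| \le \tfrac{A}{2}\int_{\RR^T}\big|P_m-P_0\big|\,{\rm d}\yv = A\, d_{\rm TV}(P_0,P_m)$, where the final equality is just the definition of $d_{\rm TV}$ given in the lemma. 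For the second inequality I would invoke Pinsker's inequality $d_{\rm TV}(P_0,P_m) \le \sqrt{\tfrac{1}{2}D(P_0\|P_m)}$ and weaken $\sqrt{\tfrac{1}{2}}\le 1$ to reach the stated bound.

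I do not expect any genuine obstacle here, as both steps are classical and short. The one point worth flagging is the recentering by $\tfrac{A}{2}$ in the first inequality: it is exactly what produces the sharp leading constant $A$ (rather than $2A$), and it relies on $a$ lying in an interval of width $A$, not merely on $|a|$ being bounded by $A$. The only non-elementary input is Pinsker's inequality, which I would cite rather than re-derive; if a self-contained treatment were wanted, it follows from the scalar inequality comparing squared differences to the binary KL divergence together with a convexity (data-processing) reduction to the two-point case.
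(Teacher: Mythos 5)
Your proof is correct: the recentering by $\tfrac{A}{2}$ (valid because $P_0$ and $P_m$ are both probability densities) gives the sharp constant $A$ in the total-variation step, and Pinsker's inequality with the constant weakened from $\sqrt{1/2}$ to $1$ gives the second inequality. Note that the paper itself provides no proof to compare against: this lemma is imported by citation from the multi-armed bandit literature and is explicitly stated only for contrast with the paper's own key tool (Lemma \ref{lem:relating}), not used in any argument. Your two-step chain is the classical derivation of this bound, so there is nothing to reconcile; the only external ingredient is Pinsker's inequality, which is appropriately cited rather than re-proved.
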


To simplify the final expression in Lemma \ref{lem:auer}, the divergence term therein can be further bounded using the following.

\begin{lem} \label{lem:div_bound}
    {\em \citep[Eq.~(44)]{Sca17a}}
    Under the definitions in Section \ref{sec:prelim}, we have
    \begin{equation}
    D(P_0 \| P_m) \le \sum_{j=1}^M \EE_0[N_j]\Dbar_m^j. \label{eq:div_bound}
    \end{equation}
\end{lem}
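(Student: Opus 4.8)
The plan is to establish \eqref{eq:div_bound} via the standard divergence-decomposition (chain-rule) identity for the KL divergence of sequentially generated observations, a technique familiar from the finite-armed bandit literature. The essential observation is that $P_0$ and $P_m$ describe the joint law of the output sequence $\yv = (y_1,\dotsc,y_T)$ under the \emph{same} underlying algorithm, so the two laws differ only through the observation channel $y_t \mid \xv_t$ and never through the action-selection rule; this is what makes the accumulated divergence attributable purely to the discrepancy between $f_0 \equiv 0$ and $f_m$.

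Concretely, I would first write $D(P_0 \| P_m)$ using the chain rule, conditioning on the observed history:
\begin{equation*}
    D(P_0 \| P_m) = \sum_{t=1}^T \EE_0\big[ D\big( P_0(y_t \mid y_1,\dotsc,y_{t-1}) \,\|\, P_m(y_t \mid y_1,\dotsc,y_{t-1}) \big) \big].
\end{equation*}
Because the algorithm's choice of $\xv_t$ depends only on the history $y_1,\dotsc,y_{t-1}$ (and possibly on internal randomness whose law is identical under $P_0$ and $P_m$), conditioning on the history fixes the selection of $\xv_t$, and the conditional law of $y_t$ given $\xv_t$ is $N(0,\sigma^2)$ under $f_0$ versus $N(f_m(\xv_t),\sigma^2)$ under $f_m$. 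Lemma \ref{lem:Gaussian_div} then evaluates the per-step divergence as $\frac{f_m(\xv_t)^2}{2\sigma^2}$. Bounding each such term by its regional maximum, namely $\frac{f_m(\xv_t)^2}{2\sigma^2} \le \max_{\xv \in \Rc_{j_t}} \frac{f_m(\xv)^2}{2\sigma^2} = \Dbar_m^{j_t}$ (the equality being the definition \eqref{eq:Dbar} together with Lemma \ref{lem:Gaussian_div}), and then regrouping the sum over time into a sum over regions through $N_j = \sum_{t=1}^T \openone\{ j_t = j \}$, gives
\begin{equation*}
    D(P_0 \| P_m) \le \sum_{t=1}^T \EE_0\big[ \Dbar_m^{j_t} \big] = \EE_0\Big[ \sum_{j=1}^M N_j \Dbar_m^j \Big] = \sum_{j=1}^M \EE_0[N_j]\, \Dbar_m^j,
\end{equation*}
which is exactly the claimed bound.

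The main obstacle is the careful justification of the decomposition in the presence of a (possibly randomized) adaptive algorithm: one must verify rigorously that the action-selection policy cancels in the change of measure over the interaction sequence, so that only the Gaussian observation channel contributes to the accumulated divergence. I expect this to follow from the measure-theoretic divergence-decomposition identity applied to the canonical filtration generated by $(\xv_t, y_t)$, with the required absolute continuity guaranteed by the everywhere-positive Gaussian densities. Once this identity is in place, the remaining steps—the per-step evaluation and the conversion from a time sum to a region sum—are entirely routine.
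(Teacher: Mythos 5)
Your proof is correct and is essentially the same argument as the one this lemma rests on: the paper states it without proof, citing \citep[Eq.~(44)]{Sca17a}, and that source derives it exactly as you do, via the divergence decomposition in which the action-selection kernels cancel (being identical under $P_0$ and $P_m$), each per-step conditional divergence is evaluated through Lemma \ref{lem:Gaussian_div} and bounded by its regional maximum $\Dbar_m^{j_t}$, and the sum over time is regrouped into $\sum_{j=1}^M \EE_0[N_j]\Dbar_m^j$. The one point deserving care, which you correctly flag yourself, is that for randomized algorithms the decomposition should be applied to the filtration generated by the pairs $(\xv_t,y_t)$ jointly (the policy terms then vanish, and a final data-processing/marginalization step recovers the bound for the law of $\yv$ alone), rather than to the $y$-history alone, since conditioning only on past observations does not fix $\xv_t$.
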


\end{document}